\newcommand{\simiid}[1]{\mathbin{\overset{#1}{{\tiny \textrm{\kern\z}} \sim}}}%
\algnewcommand\algorithmicforeach{\textbf{for each}}
\newtheorem{theorem}{Theorem}
\newtheorem{lemma}{Lemma}
\newtheorem{remark}{Remark}
\newtheorem{definition}{Definition}
\newtheorem{assumption}{Assumption}
\newtheorem{proposition}{Proposition}
\newtheorem{problem}{Problem}
\newenvironment{proof}{\paragraph{\emph{Proof:}}}{
}
\DeclareSymbolFont{symbolsC}{U}{txsyc}{m}{n}
\DeclareMathSymbol{\notniFromTxfonts}{\mathrel}{symbolsC}{61}
\def\BibTeX{{\rm B\kern-.05em{\sc i\kern-.025em b}\kern-.08em
    T\kern-.1667em\lower.7ex\hbox{E}\kern-.125emX}}
\newcolumntype{C}[1]{>{\Centering}m{#1}}
\newcommand{\indep}{\perp \!\!\! \perp}
\title{Fully Decentralized, Scalable Gaussian Processes \\for Multi-Agent Federated Learning}
\author{ 
George P.~Kontoudis
\\
	Bradley Dept. of Electrical and Computer Eng.\\
	Virginia Tech\\
	\texttt{gpkont@vt.edu} \\
	\And
	Daniel J.~Stilwell
	\\
	Bradley Dept. of Electrical and Computer Eng.\\
	Virginia Tech\\
	\texttt{stilwell@vt.edu} \\
}
\begin{document}
\normalem
\maketitle

\begin{abstract}
	In this paper, we propose decentralized and scalable algorithms for Gaussian process (GP) training and prediction in multi-agent systems. To decentralize the implementation of GP training optimization algorithms, we employ the alternating direction method of multipliers (ADMM). A closed-form solution of the decentralized proximal ADMM is provided for the case of GP hyper-parameter training with maximum likelihood estimation. Multiple aggregation techniques for GP prediction are decentralized with the use of iterative and consensus methods. In addition, we propose a covariance-based nearest neighbor selection strategy that enables a subset of agents to perform predictions. The efficacy of the proposed methods is illustrated with numerical experiments on synthetic and real data.
\end{abstract}

\keywords{Gaussian Processes \and Decentralized Sensor Networks \and Multi-Agent Systems \and Distributed Optimization \and Aggregation Methods \and Federated Learning}

\section{Introduction}
Teams of agents have received considerable attention in recent years, as they can address tasks that cannot be efficiently accomplished by a single entity. Multi-agent systems are attractive for their inherent property of collecting simultaneously data from multiple locations---a group of agents can collect more data than a single agent during the same time period. Central to machine learning (ML) methodologies is the collection of large datasets in order to ensure reliable training. To this end, networks of agents favor learning techniques, due to their data collection capabilities. However, they face major challenges including limited computational resources and communication restrictions. A typical approach to address these challenges relies on centralizing the collected data in a single node (e.g., cloud or data center), which requires high computational and storage resources. Yet, gathering data to a central server may lead to network traffic congestion and security or privacy issues. To ensure data privacy, a promising solution is federated learning (FL) \citep{konevcny2016federated}. FL aims to implement ML techniques in centralized or decentralized networks, but with no communication of real data in order to comply to the EU/UK general data protection regulation (GDPR) \citep{horvitz2015data}. For certain applications, such as in GPS-denied environments, it is unfeasible to implement ML algorithms in a centralized network, as distant nodes may not be able to establish communication directly with the central node due to communication range limitations or bandwidth. Such cases include autonomous vehicles and multi-robot systems. Finally, even if we manage to collect all the data in a central node, the time and space computational complexity for rapid updates of the ML models require resources that are not available to agents operating in the field. In this work, we propose methodologies for fully decentralizing Gaussian processes (GPs) \citep{rasmussen2006gaussian,gramacy2020surrogates,cressie1992statistics} from training to prediction, so that they can be implemented efficiently on teams of agents. GPs are used in various multi-agent applications \citep{singh2009efficient,xu2011mobile,gu2012spatial,chen2015gaussian,choi2015distributed,allamraju2017communication,pillonetto2018distributed,zhi2019continuous,hoang2019collective,tavassolipour2019learning,yuan2020communication,jang2020multi,lin2020distributed,kepler2020approach,kontoudis2021prediction,Kontoudis2021RAS,suryan2020learning,kontoudis2021communication}. The major disadvantage of GPs is the poor scalability with the number of observations. Moreover, GPs are not easily decentralized for implementation across multiple agents due to high communication requirements.  
Our aim in this work is to develop fully decentralized approximate methodologies that relax the communication and computation requirements of GPs, exchanging as little information as possible and by performing only local computations.  We propose three distributed optimization techniques to implement GP hyperparameter training with maximum likelihood estimation (MLE), based on the alternating direction method of multipliers (ADMM) \citep{boyd2011admm}. Next, we synthesize 13 decentralized approximate methods to perform GP prediction with aggregation of GP experts \citep{liu2020gaussian}, using iterative and consensus protocols \citep{bertsekas2003parallel,olfati2007consensus,wang2016improvement}. Two of the latter approximate methods for GP prediction (DEC-NPAE and DEC-NN-NPAE) have been discussed in our previous work~\citep{kontoudis2021decentralized}.

\textbf{\textit{Related work}}: Despite their effectiveness in function approximation and uncertainty quantification, GPs scale poorly with the number of observations. Particularly, provided $N$ observations, the training entails $\mathcal{O}(N^3)$ computations and the prediction requires $\mathcal{O}(N^2)$ computations. Another limitation for the implementation of GPs in multi-agent systems is the communication. For centralized GPs, every agent has to communicate all observations to a central node. However, excessive communication is challenging in decentralized networks. Moreover, agents in networks can pass messages only within a communication~range~\citep{bullo2009distributed} which may vary in space and time \citep{kontoudis2019comparison}. 

To overcome the computational burden of hyper-parameter GP training with maximum likelihood estimation (MLE), a factorized GP training  method is discussed in \citep{deisenroth2015distributed}. That is a centralized method which is based on a server-client structure and distributes the computations to multiple entities. The main idea is to assume independence between sub-models, which results in the approximation of the inverse covariance matrix by the inverse of a block diagonal matrix. To this end, a significant reduction in computation of the inverse of multiple covariance matrices is achieved at the cost of excessive communication overhead. More specifically, every local entity transmits multiple inverted blocks of the covariance matrix per MLE iteration. Recently, Xu \textit{et al.} \citep{xu2019wireless} reformulated the factorized GP training method using the exact consensus alternating direction method of multipliers (ADMM) \citep{boyd2011admm}, which is appealing in centralized multi-agent settings \citep{halsted2021survey}. Consensus ADMM reduces the communication overhead of GP training, but requires high computational resources to solve a nested optimization problem at every ADMM-iteration. Subsequently, the authors in \citep{xie2019distributed} employed the inexact proximal ADMM \citep{hong2016convergence} to alleviate the computation demand. However, both ADMM-based factorized GP training methods require a centralized network topology. 

Two major research directions for GP prediction are based on global and local approximations \citep{liu2020gaussian}. Global approximation methods promote sparsity by using either a subset of $N_{\textrm{sub}}$ observations or by introducing a set of $N_{\textrm{sub}}$ pseudo-inputs, where $N_{\textrm{sub}} \ll N $ \citep{quinonero2005unifying,snelson2006sparse,hensman2013gaussian}. Sparse GPs have been used in mobile sensor networks to model spatial fields \citep{gu2012spatial}. In \citep{xu2011mobile}, a GP with truncated observations in a mobile sensor network is proposed, and in \citep{chen2015gaussian} a subset of observations is used for traffic modeling and prediction. These methods require global knowledge of the observations, which increases inter-agent communications. Additionally, the methods that use pseudo-inputs do not retain the interpolation property.  

Alternatively, the second research direction uses local approximation methods to reduce the computational burden of GP prediction. These are centralized algorithms with a server-client structure. The main idea is to aggregate local sub-models produced by local subsets of the observations \citep{tresp2000bayesian,hinton2002training,deisenroth2015distributed,cao2014generalized}. In other words, every sub-model makes a local prediction, and then the central node aggregates to a single prediction. In comparison to global approximations, local methods do not require inducing inputs, they distribute the computational load to multiple agents, and they work with all observations. However, it is proved in \citep[Proposition~1]{bachoc2017some} that the local methods  \citep{tresp2000bayesian,hinton2002training,deisenroth2015distributed} are \textit{inconsistent}, i.e. as the observation size grows to infinity, the aggregated predictions do not converge to the true values. Subsequently, the authors in \citep{rulliere2018nested} proposed the nested point-wise aggregation of experts (NPAE) that takes into account the covariance between sub-models and produces consistent predictions. The price to achieve consistency in NPAE comes with much higher computational complexity in the central node. Liu \textit{et al.} \citep{liu2018generalized} introduced a computationally efficient and consistent methodology, termed as generalized robust Bayesian committee machine (grBCM). The latter entails additional communication between agents to enrich local datasets with a global random dataset. In addition, both \textsc{NPAE} and grBCM are centralized techniques---not well-suited for multi-agent systems \citep{bullo2009distributed}.

A decentralized method for the computation of spatio-temporal GPs is proposed in \citep{cortes2009distributed}. In \citep{choi2015distributed}, a decentralized technique for spatial GPs with localization uncertainty is presented. Both \citep{cortes2009distributed} and \citep{choi2015distributed} employ the Jacobi over-relaxation (JOR), which requires a strongly complete graph topology, i.e. every node must communicate to every other node. That is a conservative topology and is not common in mobile sensor networks \citep{bullo2009distributed}. Essentially, for not strongly complete topologies, JOR entails flooding before every iteration. In flooding each agent broadcasts all input packets to its neighbors \citep{topkis1985concurrent}. Thus, the communication requirements of JOR are high. Yuan and Zhu \citep{yuan2020communication,yuan2021resource}, proposed a methodology that combines nearest neighbor GPs \citep{datta2016hierarchical} and local approximation \citep{cao2014generalized}. Although \citep{cao2014generalized} is consistent in terms of prediction mean, it produces overconfident prediction variances \citep[Proposition~1]{liu2018generalized}. In addition, arbitrary selection of nearest neighbor sets may lead to poor approximations \citep{datta2016hierarchical} and suffers from prediction discontinuities \citep{rulliere2018nested}. Pillonetto \textit{et al.} \citep{pillonetto2018distributed} proposed sub-optimal methods to distributively estimate a latent function with a GP by employing orthonormal eigenfunctions, computed by the Karhunen-Lo\`eve expansion of a kernel. An extension of this work to multi-robot systems with online information gathering is discussed in \citep{jang2020multi}. This is a promising line of research for GPs in decentralized networks, but our focus is on decentralized and scalable GP training with MLE and GP prediction with aggregation methods. Nevertheless, computing orthonormal eigenfunctions in closed-form is not feasible for all kernels and may yield significant storage requirements.

\textbf{\textit{Contributions}}: 
The contributions are as follows: 
\begin{enumerate}
    \item We extend a centralized GP training methodology \citep{xie2019distributed} by devising augmented local datatsets to equip local entities, so that the hyper-parameter estimation accuracy of large-scale multi-agent systems is improved. 
    \item We introduce three decentralized GP training methods for strongly connected graph topologies and we derive a closed-form solution on the decentralized inexact ADMM \citep{chang2014multi} that reduces the computational requirements of local agents. 
    \item We decentralize the implementation of multiple aggregation of GP experts methods (PoE~\citep{hinton2002training}, gPoE~\citep{cao2014generalized}, BCM \citep{tresp2000bayesian}, rBCM~\citep{deisenroth2015distributed}, and grBCM~\citep{liu2018generalized}) for strongly connected graph topologies, by using the discrete-time average consensus (DAC) \citep{olfati2007consensus}.
    \item We decentralize the implementation of NPAE \citep{rulliere2018nested} for strongly complete graph topologies, by combining Jacobi over-relaxation (JOR) \citep[Chapter 2.4]{bertsekas2003parallel} and DAC. Moreover, we introduce a technique to recover the optimal relaxation factor of JOR \citep{udwadia1992some} for strongly complete graph topologies by using the power method (PM) \citep[Chapter 8]{golub2013matrix}. The later ensures faster convergence. 
    \item We introduce a covariance-based nearest neighbor (CBNN) technique that selects statistically correlated agents for GP prediction on locations of interest, and provide a consistency proof. The CBNN is applicable to the decentralized versions of PoE, gPoE, BCM, rBCM, and grBCM introduced in 3). In addition, CBNN allows the use of a distributed algorithm to solve systems of linear equations (DALE) \citep{wang2016improvement,liu2017asynchronous} which replaces JOR in the decentralized NPAE of 4) and relaxes the graph topology from strongly complete to strongly connected. 
\end{enumerate}
\textbf{\textit{Structure}}: 
In \Cref{sec:probForm} we formulate the decentralized GP training and prediction problem, \Cref{sec:centTrain} discusses existing centralized GP training methods and extends one of them. In \Cref{sec:decentTrain}, we propose methods for decentralized GP training, \Cref{sec:decentPred} introduces multiple techniques for decentralized GP prediction, \Cref{sec:numericalExperiments} provides numerical examples for both decentralized GP training and prediction, and \Cref{sec:conclusion} concludes the paper. 

\section{Preliminaries and Problem Statement}\label{sec:probForm}
In this section, we discuss the foundations of algebraic graph theory, overview GPs, describe existing distributed approximate methods for scalable GPs, and define the problem of decentralized, scalable GP training and prediction.
\subsection{Foundations}\label{ssec:found}
The notation here is standard. The set of all positive real numbers $\mathbb{R}_{>0}$ and the set of all non-negative real numbers $\mathbb{R}_{\geq 0}$. We denote by $I_n$ the identity matrix of $n\times n$ dimension. The vector of $n$ zeros is represented as $\mathbf{0}_n$ and the matrix of $n\times m$ zeros as $\mathbf{0}_{n\times m}$. The superscript in parenthesis ${y}^{(s)}$ denotes the $s$-th iteration of an estimation process. The cardinality of the set $K$ is denoted $\mathrm{card}(K)$, the absolute values is denoted $\vert \cdot \vert$, the $L_2$ norm is denoted $\| \cdot \|_2$, and $\| \cdot \|_{\infty}$ denotes the infinity norm. The notation $\overline{\lambda}(\boldsymbol{F})$ and $\underline{\lambda}(\boldsymbol{F})$ denote the maximum and minimum eigenvalue of matrix $\boldsymbol{F}$ respectively. The $i$-th row of matrix $\boldsymbol{F}$ is denoted $\textrm{row}_i\{\boldsymbol{F}\}$, the $j$-th entry of the $i$-th row is denoted $[\textrm{row}_i\{\boldsymbol{F}\}]_j$, and the $i$-th element of a vector $\boldsymbol{x}$ is denoted $[\boldsymbol{x}]_i$ or $x_i$. A collection of elements that comprise a vector $\boldsymbol{x}\in \mathbb{R}^N$ is denoted $\{ x_i \}_{i=1}^N$. 

The communication complexity is denoted $\mathcal{O}(\cdot)$ and describes the total number of bits required to be transmitted over the course of the algorithm up to convergence
\citep[Chapter~3]{bullo2009distributed}. Time and space complexity are both denoted $\mathcal{O}(\cdot)$ and provide the maximum computations to be performed and space to be occupied at any instant of an algorithm respectively. All complexities are calculated with respect to the the total number of observations $N$, the input dimension $D$, and the size of the fleet $M$. In addition, the communication complexity employs the iterations required for an algorithm to convergence $s^{\textrm{end}}$.

Suppose a network consists of $M$ agents that can perform local computations. The network is described by an undirected time-varying graph $\mathcal{G}(t) = (\mathcal{V}, \mathcal{E}(t))$, where $\mathcal{V} = 1, \ldots, M$ is the set of nodes and $\mathcal{E}(t) \subseteq \mathcal{V} \times \mathcal{V} $ the set of edges at time $t$. An undirected graph implies that for all $t$ the communication is bidirectional. Nodes represent agents and edges their communication. The neighbors of the $i$-th node are denoted $\mathcal{N}_i(t) = \{j\in \mathcal{V} : (i,j) \in \mathcal{E}(t) \}$. The adjacency matrix of $\mathcal{G}(t)$ is denoted $\boldsymbol{A}(t) = [a_{ij}] \in \mathbb{R}^{M \times M}$, where $a_{ij} =1$ if $(i,j) \in \mathcal{E}(t)$ and $a_{ij} =0$ otherwise. Similarly, the degree matrix of $\mathcal{G}(t)$ is denoted $\boldsymbol{D}(t) = [d_{ij}] \in \mathbb{R}^{M \times M}$ and is diagonal with $d_i = \sum_{j=1}^M a_{ij}$. The graph Laplacian is defined as $\mathcal{L}(t) \coloneqq \boldsymbol{D}(t) - \boldsymbol{A}(t)$. The maximum degree is denoted $\Delta = \max_i\{\sum_{j\neq i } a_{ij}\} $ and represents the maximum number of neighbors in the graph. The Perron matrix is defined as $\mathcal{P}(t) \coloneqq I_M - \epsilon \mathcal{L}(t)$, where $\epsilon$ is a parameter with range $\epsilon \in (0,1]$. The maximum shortest distance between any pair of nodes in $\mathcal{G}$ is denoted $\mathrm{diam}(\mathcal{G})$. If the adjacency matrix $A$ is irreducible, then the graph $\mathcal{G}$ is strongly connected \citep{olfati2007consensus}. In addition, a graph $\mathcal{G}$ is strongly complete if every agent can communicate to every other agent in the graph. We consider three decentralized network topologies as presented in Figure~\ref{fig:graph_mixed}.

\begin{assumption}\label{ass:repeat_connectivity}
\citep{liu2017asynchronous} There exists a positive integer $\gamma\in \mathbb{Z}_{\geq 0}$ such that for all time $t$ the graph $\mathcal{H} = (\mathcal{V},\mathcal{E}(t)\cup \mathcal{E}(t\gamma+1)\cup \ldots \cup \mathcal{E}((t+1)\gamma-1 )$ is strongly connected.
\end{assumption}

\begin{figure}[!t]
	\includegraphics[width=.45\columnwidth]{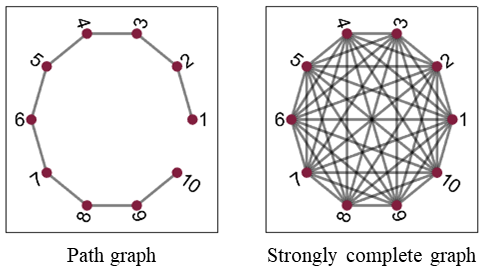}
	\centering
	\caption{Graph topologies of multi-agent systems. }
	\label{fig:graph_mixed}
\end{figure}
\subsection{Gaussian Processes}\label{ssec:GP}
Let the observations be modeled by,
\begin{equation}\label{eq:model}
    y(\boldsymbol{x} ) = f(\boldsymbol{x}) + \epsilon,
\end{equation}
where $\boldsymbol{x}\in \mathbb{R}^D$ is the input location with $D$ the input space dimension, $f(\boldsymbol{x}) \sim \mathcal{GP}(0,k(\boldsymbol{x},\boldsymbol{x}^{\prime}))$ is a zero-mean GP with covariance function $k: \mathbb{R}^D\times  \mathbb{R}^D \rightarrow  \mathbb{R}$, and $\epsilon \sim \mathcal{N}(0, \sigma_{\epsilon}^2)$ is the i.i.d. measurement noise with variance $\sigma_{\epsilon}^2>0$. We employ the 
{separable} squared exponential covariance function, 
\begin{equation}\label{eq:seKernel}
    k(\boldsymbol{x} ,\boldsymbol{x}^{\prime}) = \sigma_f^2 \exp \left\{ -    \sum_{d=1}^D\frac{\left({x}_d - {x}_d^{\prime}\right)^2 }{l_d^2}\right\},
\end{equation}
where $\sigma_f^2>0$ is the signal variance {and $l_d>0$ the length-scale hyperparameter at the $d$-th direction of the input space.} The goal of GPs is to infer the underlying latent function $f$ given the data $\mathcal{D} = \{\boldsymbol{X},\boldsymbol{y} \}$, where $\boldsymbol{X} = \{\boldsymbol{x}_n\}_{n=1}^N$ the inputs, $\boldsymbol{y} = \{y_n\}_{n=1}^N$ the outputs, and $N$ the number of observations.
\subsubsection{Training} 
A GP is trained to find the hyperparameters $\boldsymbol{\theta} = ( l_1,\hdots,l_D,\sigma_f,\sigma_{\epsilon} )^{\intercal} \in \Theta \subset \mathbb{R}^{D+2}$ that maximize the marginal log-likelihood,
\begin{align*} 
    \mathcal{L} = \log p(\boldsymbol{y}\mid  \boldsymbol{X})  =   -\frac{1}{2} \left( \boldsymbol{y}^{\intercal}\boldsymbol{C}_{\theta}^{-1}\boldsymbol{y} 
        + \log |\boldsymbol{C}_{\theta}| +N\log 2\pi \right) ,
\end{align*}
where $\boldsymbol{C}_{\theta} = \boldsymbol{K}+\sigma_{\epsilon}^2I_N$ is the positive definite (PD) covariance matrix with $\boldsymbol{K} = k(\boldsymbol{X},\boldsymbol{X}) \succeq 0 \in  \mathbb{R}^{N\times N}$ the positive semi-definite (PSD) correlation matrix. The minimization problem employs the negative marginal log-likelihood (NLL) function,
\begin{align}\nonumber
\textrm{(P1)} \quad \hat{\boldsymbol{\theta}} = \arg\min_{\boldsymbol{\theta}} \quad & 
\boldsymbol{y}^{\intercal}\boldsymbol{C}_{\theta}^{-1}\boldsymbol{y} 
        +  \log |\boldsymbol{C}_{\theta}| 
        \\ \label{eq:P1_constr}
\textrm{s.to} \quad & \theta_j > 0, \quad \forall j =1,\hdots, D+2 .   
\end{align}
The bound constraints \eqref{eq:P1_constr} on the length-scales $l_d$ ensure that the correlation matrix is PSD. Additionally, in practice even small noise variance $\sigma_{\epsilon}^2$ is useful to avoid an ill-conditioned covariance matrix. First-order iterative methods (e.g., conjugate gradient descent) or second-order iterative methods with approximated Hessian (e.g., L-BFGS-B \citep{byrd1995limited}) are widely used to tackle (P1). Note that the NLL in (P1) is non-convex with respect to the hyperparameters $\boldsymbol{\theta}$ and usually multiple starting locations are randomly selected to ensure global optimality \citep{chen2018priors}. Both optimization approaches require the computation of the gradient of (P1) which is given by,
\begin{align}\label{eq:grad_mle}
\frac{\partial \mathcal{L}(\boldsymbol{\theta})}{\partial \theta_j} = \frac{1}{2}\mathrm{tr} \left\{ \left( \boldsymbol{C}_{\theta}^{-1} - \boldsymbol{C}_{\theta}^{-1} \boldsymbol{y}\boldsymbol{y}^{\intercal} \boldsymbol{C}_{\theta}^{-1} \right) \frac{\partial \boldsymbol{C}_{\theta}} {\partial \theta_j} \right\},
\end{align}
The partial derivative of the covariance matrix ${\partial \boldsymbol{C}_{\theta}}/ {\partial \theta_j}$ depends on the selected covariance function $k(\cdot, \cdot)$. For our selection \eqref{eq:seKernel}, the partial derivative is provided in Appendix~\ref{app:derivative_SE_covariance}.
\subsubsection{Prediction} 
After obtaining the hyperparameters $\hat{\boldsymbol{\theta}}$, the predictive distribution of the location of interest $\boldsymbol{x}_*\in \mathbb{R}^D$ conditioned on the data $\mathcal{D}$ yields $p({y}_* \mid \mathcal{D}, \boldsymbol{x}_* )\sim \mathcal{N}(\mu(\boldsymbol{x}_*), \sigma^2(\boldsymbol{x}_*))$ with prediction mean and variance,
\begin{align}\label{eq:mean_gp}
    \mu_{\textrm{full}}(\boldsymbol{x}_*) &= \boldsymbol{k}_*^{\intercal}\boldsymbol{C}_{\theta}^{-1}\boldsymbol{y},\\ \label{eq:variance_gp}
    \sigma_{\textrm{full}}^2(\boldsymbol{x}_*) &= 
    k_{**} -  \boldsymbol{k}_*^{\intercal}\boldsymbol{C}_{\theta}^{-1}\boldsymbol{k}_*
    ,
\end{align}
where $\boldsymbol{k}_*=k(\boldsymbol{X},\boldsymbol{x}_*) \in \mathbb{R}^N$ and $k_{**}=k(\boldsymbol{x}_*,\boldsymbol{x}_*) \in \mathbb{R}$. 
\subsubsection{Complexity}\label{sssec:GPcomplexity}
The time complexity of the training is $\mathcal{O}(N^3)$ for computing the inverse of the covariance matrix in (P1) and \eqref{eq:grad_mle}. Note that only the inverse of the covariance matrix $\boldsymbol{C}_{\theta}^{-1}$ is required to be computed for the training (P1) and not the logarithm of its determinant $\log |\boldsymbol{C}_{\theta}|$. That is because the covariance matrix is symmetric and PD, and thus the Cholesky decomposition can be employed to compute the logarithm of the covariance matrix determinant \citep[Appendix A.4]{rasmussen2006gaussian}. The inverse computation of the covariance matrix is performed repeatedly in the optimization (P1) to find the hyperparameters $\hat{\boldsymbol{\theta}}$. After solving (P1) and obtaining the hyperparameter vector $\hat{\boldsymbol{\theta}}$, we store the inverse of the covariance matrix $\boldsymbol{C}^{-1}$ and $N$ observations, which results in $\mathcal{O}(N^2+DN)$ space complexity. For agents with limited RAM memory capacity, the space complexity
may be more restrictive than the time complexity. The prediction mean \eqref{eq:mean_gp} and variance \eqref{eq:variance_gp} yield $\mathcal{O}(N)$ and $\mathcal{O}(N^2)$ computations respectively for matrix multiplications. 
\subsection{Centralized Scalable Gaussian Processes}\label{ssec:centrGP}
Let us consider a network of $M$ agents that can perform local computations. Each agent $i$ collects local observations to form the local dataset $\{\mathcal{D}_i = \{\boldsymbol{X}_i,\boldsymbol{y}_i \}\}_{i=1}^M$ corresponding to $N_i$ observations for $M$ agents with $\sum_{i=1}^{M}N_i=N$. Thus, the global dataset is composed as $\mathcal{D} = \cup_{i=1}^{M}\mathcal{D}_{i}$. All local datasets have equal number of observations, i.e. $N_i = N_j = N/M$ for all $i,j \in \mathcal{V}$ with $i\neq j$. For privacy reasons, for example, we presume that the local datasets $\mathcal{D}_i$ should not be communicated to other agents. In practice, even if all agents have access to the global dataset $\mathcal{D}$, the GP computational complexity (\Cref{sssec:GPcomplexity}) is prohibitive if $\mathcal{D}$ is large. Furthermore, in case we assume a centralized topology, where every entity $i$ communicates its dataset $\mathcal{D}_i$ to a central node with significant computational and storage resources, then we face several problems. These problems comprise of: i) \textit{security} and \textit{robustness}, as the central node is vulnerable to malicious attacks or even failure; ii) \textit{traffic network congestion}, when all agents communicate their local datasets with the central entity; and iii) \textit{privacy}, because a single entity has access to the global dataset. In addition, for certain cases (e.g., autonomous vehicles and multi-robot systems), distant agents may be subject to communication range limitations. To this end, we make the following assumptions.

\begin{table*}[!t]
\caption{Time, Space, and Communication Complexity of GP Training}
\centering
\begin{threeparttable}
{\begin{tabular}{ c c c c c c }
\toprule

 & & & FULL-GP  & FACT-GP &  g-FACT-GP \\
 
 \midrule 
  
 \multirow{2}{*}{Local}  & &  Time & - & $\mathcal{O}(N^3/M^3)$ & $\mathcal{O}(8(N^3/M^3))$ \\
 
    & &   Space & - & $\mathcal{O}(\xi)$ & $\mathcal{O}(2\xi + 2(N^2/M^2))$ \\
    
    \cmidrule(lr){1-6}
 
 \multirow{5}{*}{Global} & \multirow{2}{*}{GD} &  Space & - & $\mathcal{O}(DM+2M)$ & $\mathcal{O}(DM+2M)$ \\

  & &  Comm & - & $\mathcal{O}(s^{\textrm{end}}(DM+2M))$ & $\mathcal{O}(s^{\textrm{end}}(DM+2M))$ \\
    
    \cmidrule(lr){2-6}
       & \multirow{3}{*}{Final} &  Time & $\mathcal{O}(N^3)$ & - & - \\
     
   & &  Space & $\mathcal{O}(N^2+DN)$ & $\mathcal{O}(N^2/M)$ & $  \mathcal{O}(4(N^2/M))$\\

   & &  Comm & - & $  \mathcal{O}(N^2/M)$ & $  \mathcal{O}(4(N^2/M))$ \\

\bottomrule
\end{tabular}}
    \begin{tablenotes}[para,flushleft]
      \item {$\xi = N^2/M^2+D(N/M)$}. 
    \end{tablenotes}\label{tab:complexityTraining}
    \end{threeparttable}
\end{table*}

\begin{assumption}\label{ass:noData}
Every agent $i$ can communicate only with agents in its neighborhood $\mathcal{N}_i$ and the communication shall not include any data exchange.  
\end{assumption}

\begin{assumption}\label{ass:parData}
Every agent $i$ can communicate only with agents in its neighborhood $\mathcal{N}_i$ and the communication shall include partial exchange of the local dataset $\mathcal{D}_i$.
\end{assumption}

Assumption~\ref{ass:noData} prohibits the communication of any observation, while Assumption~\ref{ass:parData} allows the communication of a subset of the local dataset $\mathcal{D}_i$. This distinction has been made to propose different methodologies in case that partial communication of the local dataset is permitted.

\subsubsection{Factorized Training}\label{sssec:factor_training}
The factorized GP training \citep{ng2014hierarchical,deisenroth2015distributed}, termed as FACT-GP, relies on the following assumption.
\begin{assumption}\label{ass:independence}
All local sub-models $\mathcal{M}_i$ are independent.
\end{assumption}

The independence in Assumption~\ref{ass:independence} is invoked to result in the approximation of the global marginal likelihood as, 
\begin{equation}\label{eq:training_approx}
    p(\boldsymbol{y}\mid  \boldsymbol{X}) \approx \prod_{i=1}^M p_i (\boldsymbol{y}_i\mid  \boldsymbol{X}_i),
\end{equation}
where $p_i (\boldsymbol{y}_i\mid  \boldsymbol{X}_i)\sim \mathcal{N}(0, \boldsymbol{C}_{\theta,i})$ is the local marginal likelihood of the $i$-th node with local covariance matrix $\boldsymbol{C}_{\theta,i} = \boldsymbol{K}_i+\sigma_{\epsilon}^2I_{N_i}$ and $\boldsymbol{K}_i = k(\boldsymbol{X}_i,\boldsymbol{X}_i)\in \mathbb{R}^{N_i \times N_i}$. The factorized approximation \eqref{eq:training_approx} implies that the covariance matrix is approximated by a block diagonal matrix that results in 
$\boldsymbol{K}^{-1}\approx \mathrm{diag}(\boldsymbol{K}_1^{-1},\boldsymbol{K}_2^{-1}, \ldots,\boldsymbol{K}_M^{-1}  )$. Subsequently, {the global marginal log-likelihood is approximated by $\mathcal{L}\approx \sum_{i=1}^M\mathcal{L}_i$} which yields,
\begin{equation*}
  \log  p(\boldsymbol{y}\mid  \boldsymbol{X}) \approx \sum_{i=1}^M \log p_i (\boldsymbol{y}_i\mid  \boldsymbol{X}_i),
\end{equation*}
with local marginal log-likelihood,
\begin{align} \label{eq:local_ll} 
    \mathcal{L}_i &= \log p_i (\boldsymbol{y}_i\mid  \boldsymbol{X}_i)
    =  -\frac{1}{2}\bigg(  \boldsymbol{y}_i^{\intercal}\boldsymbol{C}_{\theta,i}^{-1}\boldsymbol{y}_i
    + \log |\boldsymbol{C}_{\theta,i}| +N_i \log 2\pi \bigg).
\end{align}
The gradient of the global marginal log-likelihood in factorized training is computed by $\nabla_{\boldsymbol{\theta}} \mathcal{L} = \sum_{i=1}^M\nabla_{\boldsymbol{\theta}} \mathcal{L}_i$ \citep{xu2019wireless,xie2019distributed}. The minimization problem uses the local negative marginal log-likelihood (LNLL) function and takes the form of,
\begin{align}\nonumber
\textrm{(P2)} \quad \hat{\boldsymbol{\theta}} = \arg\min_{\boldsymbol{\theta}} \quad & 
\sum_{i=1}^M      \boldsymbol{y}_i^{\intercal}\boldsymbol{C}_{\theta,i}^{-1}\boldsymbol{y}_i + \log |\boldsymbol{C}_{\theta,i}| 
\\ \label{eq:P2_constrPos}
\textrm{s.to}\quad & \boldsymbol{\theta_i} > \boldsymbol{0}_{D+2}, \quad \forall i \in \mathcal{V},  
\end{align}
where $\boldsymbol{\theta}_i = \{\theta_{1,i}, \hdots, \theta_{D+2,i}    \}$. Similarly to (P1), constraint \eqref{eq:P2_constrPos} imposes positivity on the agreed hyperparameters. 

\begin{remark}\label{rem:positivity_relax}
 A common approach to relax the positivity constraint \eqref{eq:P1_constr} in (P1) and \eqref{eq:P2_constrPos} in (P2) is to employ the logarithmic transformation on the hyperparameter vector that has strictly positive domain, i.e. $\log(\boldsymbol{\theta}): \mathbb{R}_{>0} \rightarrow \mathbb{R}$. After convergence the inverse transformation, by using the exponential, yields the hyperparameter vector. 
\end{remark}

The computation of \eqref{eq:local_ll} for the FACT-GP training (P2) yields $\mathcal{O}(N_i^3)= \mathcal{O}(N^3/M^3)$ time complexity for each local entity to invert the local covariance matrix $\boldsymbol{C}_{\theta,i}^{-1}$. Additionally, for the storage of the local inverted covariance matrix and the local observations $\mathcal{O}(N_i^2+DN_i) = \mathcal{O}(N^2/M^2+D(N/M))$ space is needed. The factorized training requires also communication from every node $i$ to the central node. The communication complexity depends on the selection of the optimization algorithm. Provided that the central node implements gradient descent \citep{xie2019distributed}, every node communicates the local gradient of LNLL $\nabla_{\boldsymbol{\theta}} \mathcal{L}_i$ at every iteration $s$. That is $\mathcal{O}(s^{\textrm{end}}(D+2)M) = \mathcal{O}(s^{\textrm{end}}(DM+2M))$ total communications from all agents to the central node, where $s^{\textrm{end}}$ is the total number of iterations to reach convergence. Additionally, the central node needs to store at each iteration: i) the hyperparameter vector on the previous iteration $\{\boldsymbol{\theta}^{(s)}_i\}_{i=1}^{M}$ from all $M$ nodes; and ii) their gradient of LNLL $\{\nabla_{\boldsymbol{\theta}} \mathcal{L}_i\}_{i=1}^{M} $, which results in $\mathcal{O}((D+2)M + (D+2))  =\mathcal{O}(DM+2M) $ space complexity. Finally, after the optimization algorithm converges, each node communicates the local inverted covariance matrix $\boldsymbol{C}_{\theta,i}^{-1}$ that yields $\mathcal{O}(MN_i^2) = \mathcal{O}(N^2/M)$ communications to the central node. All local inverted covariance matrices need to be stored in the central node to form the block diagonal approximation for GP prediction, i.e. $\mathcal{O}(N^2/M)$ space complexity. A computational complexity comparison between FULL-GP and FACT-GP is provided in Table~\ref{tab:complexityTraining}. Since $N_i = N/M < N$, the time and space complexity of FACT-GP (P2) is significantly less than the time and space complexity of FULL-GP (P1). 
\subsubsection{Aggregated Prediction}\label{sssec:aggrPred}
Provided the hyperparameter vector $\hat{\boldsymbol{\theta}}$, we shall employ multiple aggregation of GP experts methods to perform joint prediction with local data. These are the PoE~\citep{hinton2002training}, gPoE~\citep{cao2014generalized}, BCM \citep{tresp2000bayesian}, rBCM~\citep{deisenroth2015distributed}, grBCM~\citep{liu2018generalized},  and NPAE~\citep{rulliere2018nested,bachoc2017some}. The main idea is that each local entity $i$ develops a local GP sub-model $\mathcal{M}_i$ using its local dataset $\mathcal{D}_i$. Then, the agents communicate local models to make joint predictions. The local GP sub-model $\mathcal{M}_i$ conditioned on the local dataset is $p_i ({y}_*\mid  \mathcal{D}_i, \boldsymbol{x}_i)\sim \mathcal{GP}(\mu_i(\boldsymbol{x}_i), \sigma^2_i(\boldsymbol{x}_i))$ with local mean and local variance,
\begin{align}\label{eq:local_mean}
    \mu_i(\boldsymbol{x}_*) &= \boldsymbol{k}_{*,i}^{\intercal}\boldsymbol{C}_{\theta,i}^{-1}\boldsymbol{y}_i,\\ \label{eq:local_variance}
    \sigma_i^2(\boldsymbol{x}_*) &= 
    k_{**} -  \boldsymbol{k}_{*,i}^{\intercal}\boldsymbol{C}_{\theta,i}^{-1}\boldsymbol{k}_{*,i}
    ,
\end{align}
where $\boldsymbol{k}_{*,i} = k(\boldsymbol{x}_{*},\boldsymbol{X}_i) \in \mathbb{R}^{N_i}$.

A useful definition to study the properties of joint mean predictions for local aggregation methods is provided below.
\begin{definition}\label{def:consistency}
\citep{rulliere2018nested} Provided $N$ observations, an aggregate GP method with joint prediction mean $\mu_{\mathcal{A}}$, variance $\sigma^2_{\mathcal{A}}$, full GP prediction mean $\mu_{\mathrm{full}}$, and variance $\sigma^2_{\mathrm{full}}$ is consistent if,
\begin{align*} 
    \lim_{N\rightarrow \infty} \mu_{\mathrm{full}}( \boldsymbol{x}_*) - \mu_{\mathcal{A}}( \boldsymbol{x}_*) \rightarrow 0, \quad \forall \boldsymbol{x}_*,\\
    \lim_{N\rightarrow \infty} \sigma^2_{\mathrm{full}}( \boldsymbol{x}_*) - \sigma^2_{\mathcal{A}}( \boldsymbol{x}_*) \rightarrow 0, \quad \forall \boldsymbol{x}_*,
\end{align*} 
where subscript $\mathcal{A}$ denotes any aggregation method for GP prediction.
\end{definition}

\Cref{def:consistency} implies that as the number of observations tends to infinity, the prediction mean and variance of full GP \eqref{eq:mean_gp} and the aggregated prediction mean and variance are identical.

\textbf{\textit{PoE Family}}: After computing the local mean \eqref{eq:local_mean} and the local variance \eqref{eq:local_variance}, the joint mean and precision of both PoE and gPoE $\mathcal{M}_{\mathcal{A}} = \mathcal{M}_{\textrm{(g)PoE}}$ is provided by,
\begin{align}\label{eq:mean_poe}
    \mu_{\textrm{(g)PoE}}( \boldsymbol{x}_*) &= \sigma_{\textrm{(g)PoE}}^2(\boldsymbol{x}_*)\sum_{i=1}^M \beta_i \sigma_i^{-2}(\boldsymbol{x}_*)\mu_i(\boldsymbol{x}_*),\\\label{eq:variance_poe}
    \sigma_{\textrm{(g)PoE}}^{-2}( \boldsymbol{x}_*) &= \sum_{i=1}^M \beta_i \sigma_i^{-2}(\boldsymbol{x}_*),
\end{align}
where $\beta_i=1$ for PoE, and $\beta_i = 1/M$ for gPoE. The original gPoE \citep{cao2014generalized} considers weight $\beta_i$ to be the difference in differential entropy, but this approach limits the computational graph to have a single layer. To allow multiple layer GP aggregation, an average weight $\beta_i$ is proposed in \citep{deisenroth2015distributed}. Since for certain decentralized networks multiple layer GP aggregation is required, we find the average weight more suitable.

\begin{proposition}
\citep[Proposition 1]{liu2018generalized} For a disjoint partition of local datasets $\mathcal{D}_i$, the constant weight of PoE results in overconfident joint variance as the number of observations $N$ tends to infinity, i.e. $\lim_{N \rightarrow \infty}\sigma^2_{\textrm{PoE}}(\boldsymbol{x}_*) \rightarrow 0$. The average weight of gPoE produces a conservative, yet finite joint variance as the number of observations $N$ tends to infinity, i.e. $\sigma_{\textrm{full}}^2 < \lim_{N \rightarrow \infty}\sigma^2_{\textrm{gPoE}}(\boldsymbol{x}_*) < \sigma^2_{**}$, where  $\sigma^2_{\textrm{full}}$ is the target variance of a full GP and $\sigma^2_{**}$ the prior variance.
\end{proposition}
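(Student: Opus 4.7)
The plan is to prove the PoE overconfidence and the gPoE two-sided bound separately, relying on two elementary monotonicity facts for Gaussian process predictive variances: (i) for any input $\boldsymbol{x}_*$, the local posterior variance satisfies $\sigma_i^2(\boldsymbol{x}_*) \leq k_{**} = \sigma_{**}^2$, since conditioning on data can only reduce the prior variance; and (ii) enlarging the conditioning set monotonically decreases the posterior variance, so $\sigma_i^2(\boldsymbol{x}_*) \geq \sigma_{\mathrm{full}}^2(\boldsymbol{x}_*)$ for every expert $i$, because $\mathcal{D}_i \subset \mathcal{D}$. Both facts follow directly from the standard GP conditioning formula \eqref{eq:local_variance} together with the positive semidefiniteness of the Schur complement $\boldsymbol{C}_{\theta}-\boldsymbol{C}_{\theta,i}$, and I would invoke them as lemmas without reproving them.

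For the PoE part, I would start from the definition $\sigma_{\mathrm{PoE}}^{-2}(\boldsymbol{x}_*)=\sum_{i=1}^{M}\sigma_i^{-2}(\boldsymbol{x}_*)$ obtained from \eqref{eq:variance_poe} with $\beta_i=1$, and apply fact (i) termwise to get
\begin{equation*}
    \sigma_{\mathrm{PoE}}^{-2}(\boldsymbol{x}_*) \;\geq\; \frac{M}{\sigma_{**}^2}.
\end{equation*}
Since the disjoint-partition assumption keeps the local sizes $N_i=N/M$ bounded while letting $M\to\infty$ as $N\to\infty$, the right-hand side diverges, so $\sigma_{\mathrm{PoE}}^2(\boldsymbol{x}_*)\to 0$. (If instead $M$ were fixed and each $N_i\to\infty$, the same conclusion follows more easily from fact (ii) applied to each expert, because each $\sigma_i^{-2}$ itself blows up; I would mention this case as a remark so that the statement is robust to how the joint limit $N\to\infty$ is taken.)

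For gPoE, substituting $\beta_i=1/M$ into \eqref{eq:variance_poe} gives
\begin{equation*}
    \sigma_{\mathrm{gPoE}}^{-2}(\boldsymbol{x}_*) \;=\; \frac{1}{M}\sum_{i=1}^{M}\sigma_i^{-2}(\boldsymbol{x}_*),
\end{equation*}
which is a plain average. Fact (i) yields the lower bound $\sigma_{\mathrm{gPoE}}^{-2}\geq 1/\sigma_{**}^2$, hence $\sigma_{\mathrm{gPoE}}^2\leq \sigma_{**}^2$; fact (ii) yields the upper bound $\sigma_{\mathrm{gPoE}}^{-2}\leq 1/\sigma_{\mathrm{full}}^2$, hence $\sigma_{\mathrm{gPoE}}^2\geq \sigma_{\mathrm{full}}^2$. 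Both bounds are preserved under $N\to\infty$, and I would argue that under the disjoint-partition assumption the inequalities are in fact strict for any non-degenerate informative kernel: $\sigma_i^2<\sigma_{**}^2$ for at least one $i$ whose local data actually correlates with $\boldsymbol{x}_*$ (giving the upper strict inequality), and $\sigma_i^2>\sigma_{\mathrm{full}}^2$ for at least one $i$ because $\mathcal{D}_i\subsetneq\mathcal{D}$ (giving the lower strict inequality).

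The main obstacle I anticipate is justifying strictness rather than mere weak inequality; one has to rule out the degenerate situation in which every local predictive variance already coincides with either the prior variance or the full GP variance. I would handle this by noting that under the separable squared exponential kernel \eqref{eq:seKernel} the cross-covariance $\boldsymbol{k}_{*,i}$ is nonzero for any finite sample, and the block-diagonal approximation of $\boldsymbol{C}_{\theta}^{-1}$ implicit in the disjoint partition is never exact when the off-diagonal kernel blocks are nonzero, which separates $\sigma_i^2$ from both endpoints. The rest of the argument is algebraic manipulation of the convex combination and does not require any additional machinery beyond monotonicity of GP variance under conditioning.
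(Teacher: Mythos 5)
The paper does not actually prove this proposition; it imports it verbatim by citation from \citep[Proposition~1]{liu2018generalized}, so there is no in-paper argument to compare against. Your reconstruction is essentially the standard argument from that source: sandwich each local precision between the prior precision and the full-GP precision, then note that PoE \emph{sums} $M$ such terms (so the aggregated precision diverges once the number of experts grows with $N$, or once the expert containing $\boldsymbol{x}_*$ sees dense data), while gPoE \emph{averages} them, so the aggregated precision stays trapped between $\sigma_{**}^{-2}$ and $\sigma_{\textrm{full}}^{-2}$. The overall structure is sound and your handling of the two readings of the joint limit ($M\to\infty$ versus $M$ fixed with $N_i\to\infty$) is a sensible addition.

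Two points need tightening. First, your justification of fact (ii) via ``positive semidefiniteness of the Schur complement $\boldsymbol{C}_{\theta}-\boldsymbol{C}_{\theta,i}$'' is not dimensionally meaningful ($\boldsymbol{C}_{\theta}$ is $N\times N$ while $\boldsymbol{C}_{\theta,i}$ is $N_i\times N_i$); the correct statement is that for a PD matrix the quadratic form $\boldsymbol{k}_*^{\intercal}\boldsymbol{C}_{\theta}^{-1}\boldsymbol{k}_*$ dominates the corresponding quadratic form built from any principal submatrix, equivalently that GP posterior variance is non-increasing when the conditioning set is enlarged. Second, and more substantively, strict inequalities that hold at every finite $N$ do not automatically survive the limit $N\to\infty$. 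To conclude $\lim_{N\to\infty}\sigma^2_{\textrm{gPoE}}(\boldsymbol{x}_*)<\sigma^2_{**}$ you need the average $\frac{1}{M}\sum_{i=1}^{M}\bigl(\sigma_i^{-2}(\boldsymbol{x}_*)-\sigma_{**}^{-2}\bigr)$ to be bounded away from zero \emph{uniformly in} $N$; on the bounded domain used here this does follow, because the squared exponential kernel is bounded below on the domain, so every expert's precision gain admits a positive $N$-independent lower bound, and an analogous uniform-gap argument is needed on the $\sigma^2_{\textrm{full}}$ side. Your ``main obstacle'' paragraph identifies the right issue but resolves only pointwise strictness, not strictness of the limit.
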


\begin{remark}
It is empirically observed that for a disjoint partition of local datasets $\mathcal{D}_i$, both PoE and gPoE produce inconsistent mean predictions \citep{liu2018generalized}. Specifically, as the number of observations tends to infinity both methods recover the prior mean $\mu_{**}$, i.e. $\lim_{N\rightarrow \infty} \mu_{\textrm{(g)PoE}}(\boldsymbol{x}_{*}) \rightarrow \mu_{**}$ for all $\boldsymbol{x}_{*}$. 
\end{remark}

\begin{proposition}\label{prop:poe_gpoe_mean}
The PoE and gPoE make identical mean predictions \eqref{eq:mean_poe}.
\end{proposition}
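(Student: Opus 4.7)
The plan is to show that the common constant weight $\beta_i$ in the gPoE formula cancels between numerator and denominator, reducing gPoE's mean expression to that of PoE. This is a purely algebraic verification based on the aggregation formulas \eqref{eq:mean_poe}--\eqref{eq:variance_poe}.

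First I would substitute $\beta_i = 1$ into \eqref{eq:variance_poe} and \eqref{eq:mean_poe} to obtain
\[
\sigma_{\textrm{PoE}}^{-2}(\boldsymbol{x}_*) = \sum_{i=1}^M \sigma_i^{-2}(\boldsymbol{x}_*), \qquad
\mu_{\textrm{PoE}}(\boldsymbol{x}_*) = \frac{\sum_{i=1}^M \sigma_i^{-2}(\boldsymbol{x}_*)\mu_i(\boldsymbol{x}_*)}{\sum_{i=1}^M \sigma_i^{-2}(\boldsymbol{x}_*)}.
\]
Next, I would substitute $\beta_i = 1/M$ to obtain
\[
\sigma_{\textrm{gPoE}}^{-2}(\boldsymbol{x}_*) = \frac{1}{M}\sum_{i=1}^M \sigma_i^{-2}(\boldsymbol{x}_*), \qquad
\mu_{\textrm{gPoE}}(\boldsymbol{x}_*) = \frac{\frac{1}{M}\sum_{i=1}^M \sigma_i^{-2}(\boldsymbol{x}_*)\mu_i(\boldsymbol{x}_*)}{\frac{1}{M}\sum_{i=1}^M \sigma_i^{-2}(\boldsymbol{x}_*)}.
\]

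The final step is to factor the common $1/M$ out of both numerator and denominator in the gPoE mean, which cancels and yields exactly $\mu_{\textrm{PoE}}(\boldsymbol{x}_*)$. The same cancellation argument applies for any constant weight shared across all experts, so the crucial property is the uniformity of $\beta_i$ across $i$, not its specific value. Since the step is a one-line cancellation, there is essentially no hard part; the only subtlety worth highlighting is that although the predictive means agree, the predictive variances differ by a factor of $M$, i.e.\ $\sigma_{\textrm{gPoE}}^2(\boldsymbol{x}_*) = M\sigma_{\textrm{PoE}}^2(\boldsymbol{x}_*)$, so the equality is specific to the mean prediction as claimed. I would end the proof with $\QEDA$.
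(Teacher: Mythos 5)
Your proof is correct and follows essentially the same route as the paper's: both substitute the respective weights $\beta_i=1$ and $\beta_i=1/M$ into the joint mean formula, factor the common $1/M$ out of the numerator and the precision sum, and cancel. Your added observation that the argument works for any constant weight and that the variances nonetheless differ by a factor of $M$ is accurate and a worthwhile clarification beyond what the paper states.
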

\begin{proof}
The proof is provided in Appendix \ref{app:proof:poe_gpoe_mean}.
\end{proof}

The local time computational complexity of both PoE and gPoE is governed by the local variance \eqref{eq:local_variance}, that is $\mathcal{O}(N_i^2) = \mathcal{O}(N^2/M^2)$ for the multiplication of the quadratic term. Provided that the number of local observations is less than the observations from all local entities $N_i < N$, the PoE and gPoE alleviates the computations compared to the full GP $\mathcal{O}(N^2)$. The space complexity requires $\mathcal{O}(N_i^2+DN_i) = \mathcal{O}(N^2/M^2+D(N/M))$ capacity to store the inverse of the local inverted covariance matrix $\boldsymbol{C}_{\theta,i}^{-1}$ and the vector of local observations $\boldsymbol{y}_{i}$. Thus, the space requirement is relaxed compared to full GP that occupies $\mathcal{O}(N^2+DN)$ memory. 
The total communication complexity from all entities to the central node is $\mathcal{O}(2M)$ to transmit all local mean $\mu_i$ and local variance $\sigma^2_i$ values. A comparison of PoE and gPoE with other aggregation methods is presented in Table~\ref{tab:complexityPrediction}.

\begin{table*}[!t]
\caption{Time, Space, and Communication Complexity for Centralized GP Aggregated Prediction}
\centering
\begin{threeparttable}
{\begin{tabular}{ c c c c c c c }
\toprule

 & & FULL-GP & {(g)PoE} \& {BCM} &  rBCM 
 &  grBCM 
 &  \textsc{NPAE} 
 \\
 
 \midrule 
  
 \multirow{2}{*}{Local}   &  Time & - & $\mathcal{O}(\zeta)$ & $\mathcal{O}(\zeta)$ & $\mathcal{O}((5+N/M)\zeta)$ &  $\mathcal{O}(N\zeta)$ \\
 
 &  Space & - & $\mathcal{O}(\xi)$ & $  \mathcal{O}(\xi)$  & $\mathcal{O}(2\xi+2(N^2/M^2))$ & $\mathcal{O}(\xi+DN)$ \\
    
    \cmidrule(lr){1-7}
 
 \multirow{3}{*}{Global}  &   Time & $\mathcal{O}(N^2)$ & $\mathcal{O}(M)$ & $\mathcal{O}(M)$  & $\mathcal{O}(M)$ & $\mathcal{O}(M^3)$ \\
 
 &  Space & $\mathcal{O}(N^2+DN)$ & $\mathcal{O}(2M)$ & $  \mathcal{O}(3M)$  & $\mathcal{O}(5M)$ & $\mathcal{O}(M^2)$ \\
  
   &   Comm & - & $  \mathcal{O}(2M)$ & $  \mathcal{O}(3M)$  & $\mathcal{O}(5M)$ & $\mathcal{O}(M^2)$ \\

\bottomrule
\end{tabular}}
\begin{tablenotes}[para,flushleft]
      \item {$\zeta = N^2/M^2$, $\xi = N^2/M^2+D(N/M)$}. 
    \end{tablenotes}
    \end{threeparttable}
    \label{tab:complexityPrediction}
\end{table*}

\textbf{\textit{BCM Family}}: The BCM, rBCM, and grBCM make an additional assumption other than Assumption~\ref{ass:independence}.

\begin{assumption}\label{ass:cond_independence}
The dataset of every agent $i$ is conditionally independent from any other dataset of agent $j\neq i$ given the posterior distribution $f_*$, i.e. $\mathcal{D}_i \indep \mathcal{D}_j \mid f_*$. 
\end{assumption}

After computing the local mean \eqref{eq:local_mean} and the local variance \eqref{eq:local_variance}, the joint mean and precision of the BCM and rBCM $\mathcal{M}_{\mathcal{A}} = \mathcal{M}_{\textrm{(r)BCM}}$ is provided by,
\begin{align}\label{eq:mean_bcm}
    \mu_{\textrm{(r)BCM}}( \boldsymbol{x}_*) &= \sigma_{\textrm{(r)BCM}}^2(\boldsymbol{x}_*)\sum_{i=1}^M \beta_i \sigma_i^{-2}(\boldsymbol{x}_*)\mu_i(\boldsymbol{x}_*),\\\label{eq:variance_bcm}
    \sigma_{\textrm{(r)BCM}}^2( \boldsymbol{x}_*) &= \sum_{i=1}^M \beta_i \sigma_i^{-2}(\boldsymbol{x}_*) +\left(1-\sum_{i=1}^M\beta_i\right)\sigma_{**}^{-2},
\end{align}
where $\beta_i=1$ for BCM, and $\beta_i = 0.5[\log \sigma_{**}^2 - \log \sigma_i^2(\boldsymbol{x}_*)]$ for rBCM. For rBCM $\beta_i$ describes the difference in the differential entropy between the prior and the posterior distribution. 

\begin{proposition}
\citep[Proposition 1]{liu2018generalized} For a disjoint partition of local datasets $\mathcal{D}_i$, the BCM and rBCM result in overconfident joint variance as the number of observations $N$ tends to infinity, i.e. $\lim_{N \rightarrow \infty}\sigma^2_{\textrm{(r)BCM}}(\boldsymbol{x}_*) \rightarrow 0$.
\end{proposition}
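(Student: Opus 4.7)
The plan is to exploit the expression \eqref{eq:variance_bcm} for the aggregated precision (noting that the left-hand side of \eqref{eq:variance_bcm} should read $\sigma_{\textrm{(r)BCM}}^{-2}$ for the formula to be dimensionally consistent with \eqref{eq:variance_poe}) and to show that the sum on the right-hand side diverges while the correction term stays subdominant. Showing $\sigma_{\textrm{(r)BCM}}^2(\boldsymbol{x}_*)\to 0$ is equivalent to showing $\sigma_{\textrm{(r)BCM}}^{-2}(\boldsymbol{x}_*)\to\infty$, so I will work at the level of precisions throughout.

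First I would establish the key ingredient: under the disjoint partition with $M$ fixed, each local sub-model $\mathcal{M}_i$ trains on $N_i = N/M$ points, so $N_i\to\infty$ as $N\to\infty$. Assuming the partition fills the input domain densely in the neighborhood of $\boldsymbol{x}_*$ (a mild requirement on the sampling design), a standard consistency argument for a single GP predictor with bounded covariance gives $\sigma_i^2(\boldsymbol{x}_*)\to 0$, and in particular $\sigma_i^{-2}(\boldsymbol{x}_*)\to\infty$, for every $i=1,\dots,M$. For \textbf{BCM} with $\beta_i=1$, substitution yields
\begin{equation*}
\sigma^{-2}_{\textrm{BCM}}(\boldsymbol{x}_*) \;=\; \sum_{i=1}^M \sigma_i^{-2}(\boldsymbol{x}_*) \;+\; (1-M)\,\sigma_{**}^{-2},
\end{equation*}
where the correction $(1-M)\sigma_{**}^{-2}$ is a fixed finite constant (since $M$ and the prior variance $\sigma_{**}^2$ do not depend on $N$) while the sum diverges, so $\sigma^{-2}_{\textrm{BCM}}(\boldsymbol{x}_*)\to\infty$, which is the claim.

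For \textbf{rBCM} with $\beta_i = \tfrac{1}{2}[\log\sigma_{**}^2 - \log\sigma_i^2(\boldsymbol{x}_*)]$, I would observe that since $\sigma_i^2(\boldsymbol{x}_*)\to 0$, the weights $\beta_i\to +\infty$. Thus for all $N$ large enough $\beta_i\ge 1$ for every $i$, and consequently
\begin{equation*}
\sum_{i=1}^M \beta_i \sigma_i^{-2}(\boldsymbol{x}_*) \;\ge\; \sum_{i=1}^M \sigma_i^{-2}(\boldsymbol{x}_*) \;\to\; \infty.
\end{equation*}
The remaining term $\bigl(1-\sum_i\beta_i\bigr)\sigma_{**}^{-2}$ is negative in the limit, but it grows only logarithmically in the local variances, whereas $\sum_i \beta_i\sigma_i^{-2}(\boldsymbol{x}_*)$ grows at least like $\sigma_i^{-2}(\boldsymbol{x}_*)\log(1/\sigma_i^2(\boldsymbol{x}_*))$. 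Comparing the two orders of growth gives $\sigma^{-2}_{\textrm{rBCM}}(\boldsymbol{x}_*)\to\infty$.

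The main obstacle is the first step, namely justifying that each local predictive variance $\sigma_i^2(\boldsymbol{x}_*)$ actually vanishes in the limit. This is not automatic: it requires the local designs $\boldsymbol{X}_i$ to become dense near $\boldsymbol{x}_*$, which can fail pathologically if the disjoint partition isolates $\boldsymbol{x}_*$ from some experts. I would make this an explicit mild assumption (e.g., that at least one $\boldsymbol{X}_i$ becomes dense around $\boldsymbol{x}_*$), after which even a single diverging term $\sigma_i^{-2}(\boldsymbol{x}_*)\to\infty$ suffices to drive the aggregated precision to infinity in both formulas, since every other term is either non-negative in the limit (BCM) or the remaining correction is of strictly smaller order (rBCM).
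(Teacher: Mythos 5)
The paper does not prove this proposition; it is imported verbatim from \citep[Proposition~1]{liu2018generalized}, so there is no in-paper argument to compare yours against. Judged on its own, your proof is sound for the regime relevant here (fixed number of agents $M$, $N_i=N/M\to\infty$), and you are right that \eqref{eq:variance_bcm} must be read as a precision, i.e.\ with $\sigma_{\textrm{(r)BCM}}^{-2}$ on the left, for the algebra to be consistent. Reducing the claim to divergence of the aggregated precision, noting that for BCM the correction $(1-M)\sigma_{**}^{-2}$ is a fixed constant, and the order-of-growth comparison for rBCM (the divergent summand $\beta_i\sigma_i^{-2}\sim\tfrac12\sigma_i^{-2}\log(1/\sigma_i^2)$ dominates the only logarithmically divergent correction $(1-\sum_i\beta_i)\sigma_{**}^{-2}$) are all correct; since $\sigma_i^2\le\sigma_{**}^2$ forces $\beta_i\ge 0$, every term of the main rBCM sum is non-negative and a single divergent summand indeed suffices.

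Two caveats, neither fatal. First, your opening claim that $\sigma_i^2(\boldsymbol{x}_*)\to 0$ for \emph{every} $i$ is generally false for a disjoint partition (experts whose subregions are far from $\boldsymbol{x}_*$ retain non-vanishing variance), and the intermediate bound $\sum_i\beta_i\sigma_i^{-2}\ge\sum_i\sigma_i^{-2}$ for rBCM fails without it; you repair this yourself in the closing paragraph, and the weaker hypothesis (at least one expert's design becomes dense near $\boldsymbol{x}_*$) is the one that should be stated from the outset. Second, the cited source works in a slightly different asymptotic regime, with the number of experts growing with $N$ and each holding a bounded number of points, so that divergence of $\sum_i\left(\sigma_i^{-2}-\sigma_{**}^{-2}\right)$ comes from infinitely many experts each contributing a bounded amount of information near $\boldsymbol{x}_*$; your fixed-$M$ version, driven by a single expert whose local precision diverges, is the natural adaptation to the multi-agent setting of this paper and is the argument one would actually want here.
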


\begin{remark}
It is empirically observed that for a disjoint partition of local datasets $\mathcal{D}_i$, both BCM and rBCM produce inconsistent mean predictions \citep{liu2018generalized}. However, the joint prediction mean of BCM and rBCM converges to the prior mean slower than the PoE and gPoE.
\end{remark}

The time and space complexity of BCM is identical to PoE and gPoE. In addition, BCM requires similar communications with the PoE family. However, the rBCM entails $\mathcal{O}(3M)$ communication complexity to exchange the local mean $\mu_i$, the local variance $\sigma^2_i$, and the difference in the differential entropy between the prior and posterior distribution $\beta_i$. Note that $\beta_i$ in rBCM can be computed by the central node and recovers the communication complexity of PoE and gPoE. Yet, we prefer to express $\beta_i$ as part of the communication exchange, because in the ensuing discussion the central node is removed. A comparison of BCM and rBCM with other aggregation methods is illustrated in Table~\ref{tab:complexityPrediction}.

\textbf{\textit{grBCM}}: The main idea of grBCM is to equip every agent with a new dataset that has global information on the underlying latent function to ensure consistency (Definition~\ref{def:consistency}). Every agent $i$ selects randomly without replacement ${N}_i/M$ data from its local dataset $\mathcal{D}_{i}$ to form the \textit{local sample dataset} $\mathcal{D}_{-i}\in \mathbb{R}^{N_i/M}\subset \mathcal{D}_{i} $. Then, the local sample datasets are communicated to every other agent (Assumption~\ref{ass:parData}) to compose the \textit{communication dataset} $\mathcal{D}_{\textrm{c}} = \{\mathcal{D}_{\textrm{-i}}\}_{i=1}^M = \{ \boldsymbol{X}_{\textrm{c}}, \boldsymbol{y}_{\textrm{c}} \}$. Next, every agent $i$ fuses the communication dataset $\mathcal{D}_{\textrm{c}} \in \mathbb{R}^{N_i}$ with its local dataset $\mathcal{D}_{i}$ to form the \textit{local augmented dataset} $\mathcal{D}_{+i} = \mathcal{D}_{i} \cup \mathcal{D}_{\textrm{c}} \in \mathbb{R}^{2N_i}$. The local augmented dataset $\mathcal{D}_{+i}$ is a new dataset for every agent $i$ that includes the local dataset $\mathcal{D}_{i}$ and the communication dataset $\mathcal{D}_{\textrm{c}}$, providing a global perspective. Note that in \citep{liu2018generalized}, the authors of grBCM suggest to select randomly the communication dataset $\mathcal{D}_{\textrm{c}}$ from the full dataset $\mathcal{D}$. In this paper, we consider a slight variation that does not violate any result towards a network implementation. Thus, the communication dataset $\mathcal{D}_{\textrm{c}}$ is selected by the local datasets $\mathcal{D}_i$ and then fused through information exchange. 

The agents shall use the local augmented dataset $\mathcal{D}_{+i}$ to compute the augmented local mean $\mu_{+i} $ \eqref{eq:local_mean} and the augmented local variance $\sigma^2_{+i}$ \eqref{eq:local_variance}. In addition, grBCM requires the computation of the communication local mean $\mu_{\textrm{c}} $ \eqref{eq:local_mean} and the communication local variance $\sigma^2_{\textrm{c}}$ \eqref{eq:local_variance} using exclusively the communication dataset $\mathcal{D}_{\textrm{c}}$. The joint mean and precision of grBCM $\mathcal{M}_{\mathcal{A}} = \mathcal{M}_{\textrm{grBCM}}$ yield,
\begin{align}\label{eq:mean_grbcm}
\mu_{\textrm{grBCM}}(\boldsymbol{x}_*) &= \sigma_{\textrm{grBCM}}^2(\boldsymbol{x}_*) \left( \sum_{i=1}^M \beta_i\sigma_{+i}^{-2}(\boldsymbol{x}_*) \mu_{+i}(\boldsymbol{x}_*) - \left(\sum_{i=1}^M\beta_i-1\right)\sigma_{\textrm{c}}^{-2}(\boldsymbol{x}_*) \mu_{\textrm{c}}(\boldsymbol{x}_*) \right),\\ \label{eq:variance_grbcm}
\sigma_{\textrm{grBCM}}^{-2}(\boldsymbol{x}_*) &=\sum_{i=1}^M \beta_i\sigma_{+i}^{-2}(\boldsymbol{x}_*) + \left(1-\sum_{i=1}^M\beta_i\right)\sigma_{\textrm{c}}^{-2}(\boldsymbol{x}_*),
\end{align}
where $\beta_1 = 1$ and $\beta_i = 1/2[\log\sigma_{\textrm{c}}^2(\boldsymbol{x}_*)-\log\sigma_{+i}^2(\boldsymbol{x}_*)]$ for $i>2$.

\begin{proposition}\label{prop:grbcmCons}
\citep[Proposition 3]{liu2018generalized} 
For any collection of aggregated prediction mean values $\mu_1(\boldsymbol{x}_*), \ldots, \mu_M(\boldsymbol{x}_*)$ the grBCM is consistent.
\end{proposition}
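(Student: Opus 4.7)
The plan is to verify both limits of Definition~\ref{def:consistency} for the grBCM formulas \eqref{eq:mean_grbcm}--\eqref{eq:variance_grbcm} via a two-step strategy. Step (i): expose an algebraic identity showing that the combining rule reproduces the full-GP posterior exactly under a conditional-independence factorization around the shared context $\mathcal{D}_c$. Step (ii): propagate GP posterior concentration through the data-dependent weights $\beta_i$ to conclude consistency in the large-$N$ limit.

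For step (i), I would apply Assumption~\ref{ass:cond_independence} to the augmented datasets $\{\mathcal{D}_{+i}\}$ conditioned on both $f_*$ and the shared context $\mathcal{D}_c$, so that $\mathcal{D}_{+i}\setminus\mathcal{D}_c$ and $\mathcal{D}_{+j}\setminus\mathcal{D}_c$ become independent for $i\neq j$. Bayes' rule then yields
\[
p(f_*\mid \mathcal{D}) \;\propto\; \frac{\prod_{i=1}^{M} p(f_*\mid \mathcal{D}_{+i})}{p(f_*\mid \mathcal{D}_c)^{M-1}},
\]
since the context contribution is represented $M$ times in the numerator and must therefore be cancelled $M-1$ times. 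Matching the quadratic and linear terms of the log-Gaussian densities produces the precision identity $\sigma_{\mathrm{full}}^{-2}(\boldsymbol{x}_*) = \sum_{i=1}^{M}\sigma_{+i}^{-2}(\boldsymbol{x}_*) - (M-1)\sigma_c^{-2}(\boldsymbol{x}_*)$ and its analogue for the mean, which are exactly \eqref{eq:variance_grbcm}--\eqref{eq:mean_grbcm} specialised to $\beta_i\equiv 1$. This shows the combining rule is expressive enough to reproduce the full GP exactly.

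For step (ii), I would analyse the heuristic weights $\beta_1=1$ and $\beta_i=\tfrac{1}{2}(\log\sigma_c^2(\boldsymbol{x}_*)-\log\sigma_{+i}^2(\boldsymbol{x}_*))$, $i\geq 2$, in the large-$N$ regime. As $N\to\infty$, the sizes $|\mathcal{D}_c|=N/M$ and $|\mathcal{D}_{+i}|=2N/M$ both diverge, so under mild design-density and kernel continuity assumptions standard GP posterior consistency gives $\sigma_c^2(\boldsymbol{x}_*),\sigma_{+i}^2(\boldsymbol{x}_*)\to\sigma_{\mathrm{full}}^2(\boldsymbol{x}_*)$ and $\mu_c(\boldsymbol{x}_*),\mu_{+i}(\boldsymbol{x}_*)\to\mu_{\mathrm{full}}(\boldsymbol{x}_*)$. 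Substituting into \eqref{eq:variance_grbcm}--\eqref{eq:mean_grbcm} and rewriting the difference $\sigma_{\mathrm{grBCM}}^{-2}-\sigma_{\mathrm{full}}^{-2}$ as a telescoping sum around the context closes the variance limit, and the mean limit follows from the linearity of \eqref{eq:mean_grbcm} in $\mu_{+i},\mu_c$ once the context coefficients are collected.

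The principal obstacle will be controlling the weights $\beta_i$ and the precision gaps $\sigma_{+i}^{-2}-\sigma_c^{-2}$ jointly: both remain bounded under posterior concentration, but their product need not vanish without an explicit rate. This forces the proof either to invoke infill asymptotics for the squared-exponential kernel to obtain sharp decay rates on $\sigma_{+i}^2$ and $\sigma_c^2$, or to impose a design-density condition strong enough to make $\beta_i(\sigma_{+i}^{-2}-\sigma_c^{-2})\to 0$ uniformly in $i$. A secondary subtlety is the proposition's quantifier ``for any collection of aggregated prediction mean values $\mu_1,\ldots,\mu_M$'': these should be read as the means feeding \eqref{eq:mean_grbcm}, so that consistency is a structural property of the combining rule together with the enrichment of every local dataset by the communication dataset $\mathcal{D}_c$, and not a statement about the unaugmented local experts in isolation.
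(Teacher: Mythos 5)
The paper does not prove this proposition: it is imported verbatim with the citation \citep[Proposition~3]{liu2018generalized}, so there is no in-paper argument to compare against. Judged on its own terms, your sketch has a sensible architecture and broadly mirrors how grBCM is derived and analysed in the cited source, but it stops short of a proof in exactly the place you flag. Step (i) establishes the Bayes-rule identity only for the unweighted rule $\beta_i\equiv 1$, whereas the estimator defined in \eqref{eq:mean_grbcm}--\eqref{eq:variance_grbcm} uses $\beta_1=1$ and data-dependent $\beta_i=\tfrac12\log\bigl(\sigma_{\textrm{c}}^2/\sigma_{+i}^2\bigr)$ for the remaining experts, so the exactness of the factorisation does not transfer to the object whose consistency is actually claimed. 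Step (ii) then requires joint control of the $\beta_i$ and the diverging precisions $\sigma_{+i}^{-2}$, $\sigma_{\textrm{c}}^{-2}$, and you correctly observe that posterior concentration alone does not deliver this; as written, the proposal is a plan rather than a proof.

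The argument in \citep{liu2018generalized} closes this gap differently and more cheaply: because $\mathcal{D}_{\textrm{c}}\subset\mathcal{D}_{+i}$ is sampled globally and $\mathrm{card}(\mathcal{D}_{\textrm{c}})\to\infty$ with $N$, one shows $\sigma_{+i}^2/\sigma_{\textrm{c}}^2\to 1$, hence $\beta_i\to 0$ for the non-leading experts and $\sum_i\beta_i\to 1$; the aggregate \eqref{eq:mean_grbcm}--\eqref{eq:variance_grbcm} then collapses onto the single augmented expert $\mathcal{M}_{+1}$, which is itself a consistent GP predictor since its dataset contains the asymptotically dense $\mathcal{D}_{\textrm{c}}$. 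This bypasses the telescoping-around-the-context estimate entirely and avoids bounding products of log-ratios against unbounded precisions. To complete your version you would need either this $\beta_i\to 0$ argument or the explicit infill decay rates you mention. Your closing reading of the quantifier ``for any collection of aggregated prediction mean values'' is the right one.
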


The local time complexity for grBCM includes the computation of the augmented local variance $\sigma_{+i}^2$, the local communication variance $\sigma^2_{\textrm{c}}$, and the inversion of the communication dataset covariance matrix $\boldsymbol{K}_{\textrm{c}} = k(\boldsymbol{X}_{\textrm{c}},\boldsymbol{X}_{\textrm{c}})$. That is $\mathcal{O}((2N_i)^2+N_i^2+N_i^3) = \mathcal{O}(N^2/M^2(5 +N/M))$. Then, the inverse of the local augmented covariance matrix $\boldsymbol{C}_{\theta,+i}^{-1}$ and the local augmented dataset $\mathcal{D}_{+i}$ occupy $\mathcal{O}((2N_i)^2+D(2N_i)) = \mathcal{O}(2(N^2/M^2+DN/M)+2N^2/M^2)$ space. The total communications from all entities to the central node is $\mathcal{O}(5M)$ to transmit all local augmented means $\mu_{+i}$, local augmented variances $\sigma^2_{+i}$, local communication means $\mu_{\textrm{c}}$, local communication variances $\sigma^2_{\textrm{c}}$, and all differences in the differential entropy $\beta_i$ to the central node. A comparison of grBCM with other aggregation methods is shown in Table~\ref{tab:complexityPrediction}.

The local augmented dataset $\mathcal{D}_{+i}$ is used in factorized GP training (Section~\ref{sssec:factor_training}) to relax the block diagonal matrix approximation induced by Assumption~\ref{ass:independence}, and produce better estimates of the hyperparameter vector \citep{liu2018generalized}. The time, space, and communication complexity of the generalized factorized GP (g-FACT-GP) training is more demanding than the FACT-GP training, yet remains more affordable than the FULL-GP training. A comparison of all centralized GP training methods is presented in Table~\ref{tab:complexityTraining}.

\textbf{\textit{NPAE}}: The main idea of NPAE is to use covariance between sub-models $\mathcal{M}_i$ to ensure consistency. The local computations of \textsc{NPAE} for agent $i$ 
include: i) local prediction mean $\mu_i(\boldsymbol{x}_*) \in \mathbb{R}$ \eqref{eq:local_mean}; ii) $i$-th entry of the cross-covariance vector $[\boldsymbol{k}_A(\boldsymbol{x}_*)]_i \in \mathbb{R}$; and iii) $i$-th row of the covariance $\mathrm{row}_i\{\boldsymbol{C}_{\theta,A}(\boldsymbol{x}_*)\} \in \mathbb{R}^M$. Thus, NPAE requires the local computation of two additional quantities other than \eqref{eq:local_mean}. These are the cross-covariance and the covariance for each agent $i$,
\begin{align}\label{eq:local_covariance}
    [\boldsymbol{k}_A(\boldsymbol{X}_{i},\boldsymbol{x}_*)]_i &=    \boldsymbol{k}_{i,*}^{\intercal}\boldsymbol{C}_{\theta,i}^{-1}\boldsymbol{k}_{i,*},\\
\label{eq:cross_covariance}
    [\mathrm{row}_i\{\boldsymbol{C}_{\theta,A}(\boldsymbol{X}_{i},\boldsymbol{X}_{j},\boldsymbol{x}_*)\}]_j & = \boldsymbol{k}_{i,*}^{\intercal}\boldsymbol{C}_{\theta,i}^{-1}\boldsymbol{C}_{\theta,ij}\boldsymbol{C}_{\theta,j}^{-1}\boldsymbol{k}_{j,*},
\end{align}
where $\boldsymbol{C}_{\theta,ij} = k(\boldsymbol{X}_{i},\boldsymbol{X}_{j})+ \sigma_{\epsilon}^2I_{N_i} \in \mathbb{R}^{N_i \times N_i}$,  $\boldsymbol{k}_{i,*} = (\boldsymbol{X}_i,\boldsymbol{x}_{*})\in \mathbb{R}^{N_i}$, and $\boldsymbol{k}_{j,*} = (\boldsymbol{X}_j,\boldsymbol{x}_{*})\in \mathbb{R}^{N_i}$ for all $j \neq i$.
The next step is to aggregate the local sub-models and obtain the joint prediction mean and variance, 
\begin{align}\label{eq:mean_npae}
    \mu_{\textrm{NPAE}}( \boldsymbol{x}_*) &= \boldsymbol{k}_{A}^{\intercal}\boldsymbol{C}_{\theta,A}^{-1}\boldsymbol{\mu},\\\label{eq:variance_npae}
    \sigma_{\textrm{NPAE}}^2( \boldsymbol{x}_*) &= 
    k_{**} -  \boldsymbol{k}_{A}^{\intercal}\boldsymbol{C}_{\theta,A}^{-1}\boldsymbol{k}_{A}
    ,
\end{align}
where $\boldsymbol{C}_{\theta,A} = \{\mathrm{row}_i\{\boldsymbol{C}_{\theta,A}\}\}_{i=1}^M \in \mathbb{R}^{M \times M}$, $\boldsymbol{k}_{A} = \{\boldsymbol{k}_A(\boldsymbol{X}_{i},\boldsymbol{x}_*)\}_{i=1}^M \in \mathbb{R}^M$, and $\boldsymbol{\mu} = \{\mu_i \}_{i=1}^M\in \mathbb{R}^M$. 

\begin{proposition}\label{prop:consistency}
\citep[Proposition 2]{bachoc2017some} 
For any collection of aggregated prediction mean values $\mu_1(\boldsymbol{x}_*), \ldots, \mu_M(\boldsymbol{x}_*)$ the NPAE is consistent.
\end{proposition}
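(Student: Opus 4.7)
}
My plan is to recognize NPAE as the Gaussian conditional distribution of $y_*$ given the vector of local predictions, and then to argue that this vector becomes asymptotically as informative as the full data vector. First I would observe that each local sub-model mean $\mu_i(\boldsymbol{x}_*) = \boldsymbol{k}_{*,i}^{\intercal}\boldsymbol{C}_{\theta,i}^{-1}\boldsymbol{y}_i$ is a deterministic linear functional of $\boldsymbol{y}_i$, and since $\boldsymbol{y}$ and $y_*$ are jointly Gaussian under the GP model \eqref{eq:model}, so is the augmented vector $(y_*,\mu_1(\boldsymbol{x}_*),\ldots,\mu_M(\boldsymbol{x}_*))^{\intercal}$. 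A direct computation then shows that the $i$-th entry of $\boldsymbol{k}_A$ in \eqref{eq:local_covariance} is exactly $\mathrm{Cov}(y_*,\mu_i(\boldsymbol{x}_*))$ and the $(i,j)$-entry of $\boldsymbol{C}_{\theta,A}$ in \eqref{eq:cross_covariance} is $\mathrm{Cov}(\mu_i(\boldsymbol{x}_*),\mu_j(\boldsymbol{x}_*))$. Hence \eqref{eq:mean_npae}--\eqref{eq:variance_npae} coincide with the Gaussian conditional mean and variance of $y_*$ given $\boldsymbol{\mu}$, i.e.
\[
\mu_{\textrm{NPAE}}(\boldsymbol{x}_*)=\mathbb{E}[y_*\mid \boldsymbol{\mu}],\qquad \sigma_{\textrm{NPAE}}^2(\boldsymbol{x}_*)=\mathrm{Var}(y_*\mid \boldsymbol{\mu}).
\]

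Next I would exploit the fact that $\boldsymbol{\mu}$ is $\sigma(\boldsymbol{y})$-measurable to nest the two conditional expectations. By the tower property,
\[
\mathbb{E}[y_*\mid \boldsymbol{\mu}] \;=\; \mathbb{E}\bigl[\mathbb{E}[y_*\mid \boldsymbol{y}]\,\bigm|\,\boldsymbol{\mu}\bigr]\;=\;\mathbb{E}[\mu_{\textrm{full}}(\boldsymbol{x}_*)\mid \boldsymbol{\mu}],
\]
and by the law of total variance,
\[
\sigma_{\textrm{NPAE}}^2(\boldsymbol{x}_*)-\sigma_{\textrm{full}}^2(\boldsymbol{x}_*)\;=\;\mathrm{Var}\bigl(\mu_{\textrm{full}}(\boldsymbol{x}_*)\bigm|\boldsymbol{\mu}\bigr)\;\geq\;0.
\]
Thus it suffices to show that this residual variance vanishes as $N\rightarrow \infty$. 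The strategy is to exhibit an element of the linear span of $\{\mu_i(\boldsymbol{x}_*)\}_{i=1}^M$ that approximates $\mu_{\textrm{full}}(\boldsymbol{x}_*)$ arbitrarily well: for instance, note that each $\mu_i(\boldsymbol{x}_*)$ is itself the posterior mean using only $\mathcal{D}_i$, and as the number of local observations $N_i$ grows with a design that is dense in a neighborhood of $\boldsymbol{x}_*$, $\mu_i(\boldsymbol{x}_*)\to f(\boldsymbol{x}_*)$ in $L^2$ by standard GP posterior contraction. The same conclusion holds for $\mu_{\textrm{full}}(\boldsymbol{x}_*)$, and therefore $\|\mu_{\textrm{full}}(\boldsymbol{x}_*)-\mu_i(\boldsymbol{x}_*)\|_{L^2}\to 0$, which forces $\mathrm{Var}(\mu_{\textrm{full}}(\boldsymbol{x}_*)\mid\boldsymbol{\mu})\to 0$. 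Combined with the tower identity, this gives $\mu_{\textrm{NPAE}}(\boldsymbol{x}_*)\to\mu_{\textrm{full}}(\boldsymbol{x}_*)$ in $L^2$ and $\sigma_{\textrm{NPAE}}^2(\boldsymbol{x}_*)\to\sigma_{\textrm{full}}^2(\boldsymbol{x}_*)$, fulfilling \Cref{def:consistency}.

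The bookkeeping for Step~1 is routine linear-Gaussian algebra; the subtlety lies in the asymptotic step. The main obstacle is stating the precise regularity condition under which each local predictor recovers the latent function, since Definition~\ref{def:consistency} is expressed only as $N\to\infty$ without constraining the design. Following \citep{bachoc2017some}, I would invoke that the local designs $\boldsymbol{X}_i$ become space-filling (or at least accumulate at $\boldsymbol{x}_*$) so that the local posterior variance $\sigma_i^2(\boldsymbol{x}_*)\to\sigma_\epsilon^2$; under the reproducing-kernel-Hilbert-space framework this is a standard no-empty-ball condition. With that in hand, the squeezing argument above closes the proof.
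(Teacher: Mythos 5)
The paper does not prove this proposition at all: it is imported verbatim as \citep[Proposition 2]{bachoc2017some}, and no argument appears in the appendix. So there is no in-paper proof to compare against; the relevant comparison is with the cited reference, and your reconstruction is essentially the argument given there. The two pillars --- (i) identifying $\mu_{\textrm{NPAE}}$ and $\sigma^2_{\textrm{NPAE}}$ with the Gaussian conditional mean and variance of the target given $\boldsymbol{\mu}=(\mu_1,\ldots,\mu_M)^{\intercal}$, after checking that $\boldsymbol{k}_A$ and $\boldsymbol{C}_{\theta,A}$ are exactly the needed cross-covariances, and (ii) sandwiching the aggregated mean-squared error between the full-model error and that of any single sub-model via the tower property / law of total variance --- are precisely the mechanism behind the cited result. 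One small justification you leave implicit is worth writing out: $\mathrm{Var}(\mu_{\mathrm{full}}(\boldsymbol{x}_*)\mid\boldsymbol{\mu})\le \mathbb{E}\bigl[(\mu_{\mathrm{full}}(\boldsymbol{x}_*)-\mu_i(\boldsymbol{x}_*))^2\bigr]$ holds because each $\mu_i$ is $\sigma(\boldsymbol{\mu})$-measurable and conditional expectation is the $L^2$-optimal such approximation; that is the inequality that converts your $L^2$ convergence into vanishing residual variance.

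Two caveats. First, the paper's variance formulas use $k_{**}=k(\boldsymbol{x}_*,\boldsymbol{x}_*)$ with no noise term, so the prediction target is the latent $f(\boldsymbol{x}_*)$ rather than a noisy $y_*$; accordingly the local posterior variances contract to $0$, not to $\sigma_\epsilon^2$ as you state near the end. This does not change the structure of the argument but the limits should be stated consistently. Second, you are right that the asymptotic step cannot go through without a design condition (at least one local design accumulating at $\boldsymbol{x}_*$, or equivalently one consistent sub-model); the proposition as restated in this paper suppresses that hypothesis, which is present in the source. Flagging it, as you do, is the correct resolution --- the gap is in the statement's paraphrase, not in your proof.
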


The local time complexity for NPAE is governed by the computation of all local inverted covariance matrices for every other agent $\boldsymbol{C}_{\theta,j}^{-1}$, $j\neq i$  \eqref{eq:local_covariance} which yields $\mathcal{O}((M-1)N_i^3)=\mathcal{O}(N^3/M^2)$ computations. Additionally, the aggregated covariance $\boldsymbol{C}_{\theta, A}$ needs to be inverted on the central node that entails $\mathcal{O}(M^3)$ computations. The local memory footprint is $\mathcal{O}(N_i^2+DN_i + MDN_i) = \mathcal{O}( N^2/M^2+D(N/M)+DN)$, including the local inverted covariance matrix $\boldsymbol{C}_{\theta,i}^{-1}$, the local dataset $\mathcal{D}_i$, and the inputs of all other datasets $\boldsymbol{X}_j$ for all $j\neq i$. The total communication complexity yields $\mathcal{O}((M+1)M)=\mathcal{O}(M^2)$ governed by the transmission of the row aggregated covariance $\mathrm{row}_i\{\boldsymbol{C}_{\theta,A}\}$, for all $i\in\mathcal{V}$. A major disadvantage of NPAE is the high global time complexity, yet in the ensuing discussion we remove this expensive computation by using decentralized iterative techniques. A comparison of NPAE with other aggregation methods is provided in Table~\ref{tab:complexityPrediction}.

\subsection{Problem Definition}\label{ssec:probDef}
In this section we define the problems we seek to address.

\begin{problem}\label{pr:training1}
Under Assumption~\ref{ass:repeat_connectivity},~\ref{ass:noData}, and~\ref{ass:independence}, solve the optimization problem (P2) to estimate the hyperparmeters $\hat{\boldsymbol{\theta}}$ of the GP for a decentralized network topology.
\end{problem}

\begin{problem}\label{pr:training2}
Under Assumption~\ref{ass:repeat_connectivity},~\ref{ass:parData}, and~\ref{ass:independence}, solve the optimization problem (P2) to estimate the hyperparmeters $\hat{\boldsymbol{\theta}}$ of the GP for a decentralized network topology.
\end{problem}

The difference between Problem~\ref{pr:training1} and \ref{pr:training2} is that the latter allows partial data exchange while the former prohibits any data exchange between agents. Both problems assume a strongly connected network of agents (Assumption~\ref{ass:repeat_connectivity}) and make the independence approximation assumption between local datasets (Assumption~\ref{ass:independence}). 
Recent advancements in distributed GP hyperparameter training \citep{xu2019wireless,xie2019distributed} have addressed the centralized version of Problem~\ref{pr:training1}. The focus of this paper is on decentralized networks without requiring a central coordinator with massive computational and storage capabilities. The decentralized setup  is imposed by Assumption~\ref{ass:noData} and~\ref{ass:parData}.

\begin{problem}\label{pr:prediction1}
Let Assumption~\ref{ass:repeat_connectivity},~\ref{ass:noData},~\ref{ass:independence}, and~\ref{ass:cond_independence} hold, decentralize the computation of the PoE, gPoE, BCM, rBCM, and NPAE using expertise from all agents. In addition, replace Assumption~\ref{ass:noData} with~\ref{ass:parData} and decentralize the computation of grBCM. 
\end{problem}

\begin{problem}\label{pr:prediction2}
Let Assumption~\ref{ass:repeat_connectivity},~\ref{ass:noData},~\ref{ass:independence}, and~\ref{ass:cond_independence} hold, decentralize the computation of the PoE, gPoE, BCM, rBCM, and NPAE using expertise from statistically correlated agents. In addition, replace Assumption~\ref{ass:noData} with \ref{ass:parData} and decentralize the 
grBCM.
\end{problem}

The difference between Problem~\ref{pr:prediction1} and \ref{pr:prediction2} is that the latter involves the joint prediction of only statistically correlated agents. That is because we seek to reduce the communication from distant entities with insignificant statistic correlation to the aggregation.
\section{Centralized GP Training}\label{sec:centTrain}
In this section, we discuss existing centralized methods and propose a centralized technique to address the factorized GP training problem (P2) based on the alternating direction method of multipliers (ADMM) \citep{boyd2011admm}. 

The following Assumption is required for first-order approximation methods.
\begin{assumption}\label{ass:lipschiz}
A function $f:\mathbb{R^{N}}\rightarrow \mathbb{R}$ is Lipschitz continuous with positive parameter $L>0$ if it satisfies,
\begin{equation}
\|\nabla f(\boldsymbol{x}) - \nabla f(\boldsymbol{y}) \|_2 \leq L \| \boldsymbol{x} - \boldsymbol{y}\|_2, \quad \forall \boldsymbol{x}, \boldsymbol{y}.
\end{equation} 
\end{assumption}

\begin{table*}[!t]
\caption{Time, Space, and Communication Complexity of Centralized Factorized GP Training with ADMM-based Optimization Methods}
\centering
\begin{threeparttable}
{\begin{tabular}{ c c c c c c }
\toprule

 & & & c-GP 
 & apx-GP 
 &  gapx-GP (proposed) \\
 
 \midrule 
  
 \multirow{2}{*}{Local}  & &  Time & $\mathcal{O}(s_{\textrm{nest}}^{\textrm{end}}(N^3/M^3))$ & $\mathcal{O}(N^3/M^3)$ & $\mathcal{O}(8(N^3/M^3))$ \\
 
    & &   Space & $\mathcal{O}(\xi)$ & $\mathcal{O}(\xi)$ & $\mathcal{O}(2\xi+2(N^2/M^2))$ \\
    
    \cmidrule(lr){1-6}
 
 \multirow{2}{*}{Global} & ADMM &  Comm  & $\mathcal{O}(s^{\textrm{end}}_{\textrm{c-GP}}M(D+2))$ & $\mathcal{O}(s^{\textrm{end}}_{\textrm{apx-GP}}M(D+2))$ & $\mathcal{O}(s^{\textrm{end}}_{\textrm{gapx-GP}}M(D+2))$ \\
    
    \cmidrule(lr){2-6}
       & Final &  Comm & $\mathcal{O}(N^2/M)$ & $\mathcal{O}(N^2/M)$ & $\mathcal{O}(4(N^2/M))$ \\

\bottomrule
\end{tabular}
\begin{tablenotes}[para,flushleft]
      \item $\xi = N^2/M^2+D(N/M)$. 
    \end{tablenotes}
    }\label{tab:complexityTrainingADMM}
    \end{threeparttable}
\end{table*}
\subsection{Existing Centralized GP Training Methods}\label{ssec:centTrain}
To address the centralized factorized GP training problem (P2) an exact consensus ADMM (c-ADMM) and an inexact proximal consensus ADMM (px-ADMM) have been used in \citep{xu2019wireless,xie2019distributed}. Using the relaxation of the positivity constraint in Remark~\ref{rem:positivity_relax}, (P2) can be expressed as,
\begin{align}\nonumber
\textrm{(P3)} \quad \hat{\boldsymbol{\theta}} = \arg \min_{\boldsymbol{\theta}} \quad & 
\sum_{i=1}^M      \boldsymbol{y}_i^{\intercal}\boldsymbol{C}_{\theta,i}^{-1}\boldsymbol{y}_i + \log |\boldsymbol{C}_{\theta,i}| 
\\ \label{eq:P3_constrCons}
\textrm{s.to} \quad & \boldsymbol{\theta}_{i} = \boldsymbol{z}, \quad \forall i \in \mathcal{V},
\end{align}
where $\boldsymbol{\theta}_i = \{\theta_{1,i}, \hdots, \theta_{D+2,i} \}$ is the local vector of hyperparameters and $\boldsymbol{z} \in \mathbb{R}^{D+2}$ is an auxiliary variable. In other words, constraint \eqref{eq:P3_constrCons} implies that every agent $i$ is allowed to have its own opinion for the hyperparameters $\boldsymbol{\theta}_i$, yet at the end of the optimization all agents must agree on the global vector value $\boldsymbol{z}$. Recognize that (P3) has the same problem formulation with the c-ADMM problem. After formulating the augmented Lagrangian, the c-GP iterative scheme \citep{boyd2011admm} takes the form,
\begin{subequations}
\begin{alignat}{3}\label{eq:cadmm_z}
\boldsymbol{z}^{(s+1)} & = \frac{1}{M} \sum_{i=1}^M \left(\boldsymbol{\theta}_i^{(s)} + \frac{1}{\rho} \boldsymbol{\psi}_i^{(s)} \right),\\ \label{eq:cadmm_theta}
\boldsymbol{\theta}_i^{(s+1)} & = \arg \min_{\boldsymbol{\theta}_i} \left\{ \mathcal{L}_i(\boldsymbol{\theta}_i) +\left(\boldsymbol{\psi}_i^{(s)}\right)^{\intercal} \left(\boldsymbol{\theta}_i - \boldsymbol{z}^{(s+1)}\right)  +\frac{\rho}{2}\left\| \boldsymbol{\theta}_i - \boldsymbol{z}^{(s+1)} \right\|_2^2 \right\},\\ \label{eq:cadmm_psi}
\boldsymbol{\psi}_i^{(s+1)} & = \boldsymbol{\psi}_i^{(s)} + \rho \left(\boldsymbol{\theta}_i^{(s+1)}-\boldsymbol{z}^{(s+1)}\right),
\end{alignat}
\label{eq:cadmm}
\end{subequations}
where $\boldsymbol{\psi}_i \in \mathbb{R}^{D+2}$ is the vector of dual variables of the $i$-th node, $s\in \mathbb{Z}_{\geq 0}$ is the iteration number, and $\rho>0$ is the penalty constant term of the augmented Lagrangian. The steps of c-GP are the following: i) all agents transmit their $\boldsymbol{\theta}_i^{(s)}$ to the central node; ii) the central node updates the global hyperparameter vector $\boldsymbol{z}^{(s+1)}$ \eqref{eq:cadmm_z}; iii) the central node scatters the updated $\boldsymbol{z}^{(s+1)}$ vector; iv) every agent $i$ solves locally the nested optimization problem \eqref{eq:cadmm_theta} to find the local hyperparameter vector $\boldsymbol{\theta}_i^{(s+1)}$; and v) every agent $i$ updates the local dual vector $\boldsymbol{\psi}_i^{(s+1)}$ \eqref{eq:cadmm_psi}.

Let $s_{\textrm{nest}}^{\textrm{end}}$ be the number of iterations required from the nested optimization problem \eqref{eq:cadmm_theta} to converge. The computational complexity of c-GP is cubic in the number of local observations $\mathcal{O}(s_{\textrm{nest}}^{\textrm{end}}N_i^3) = \mathcal{O}(s_{\textrm{nest}}^{\textrm{end}}(N^3/M^3))$. More specifically, the nested optimization problem \eqref{eq:cadmm_theta} requires the evaluation of the local log-likelihood $\mathcal{L}_i(\boldsymbol{\theta}_i)$ at every internal iteration $s_{\textrm{nest}}$ which entails cubic computations to invert the local covariance matrix $\boldsymbol{C}_{\theta , i}^{-1}$ \eqref{eq:local_ll}. The communication complexity to transmit all local hyperparameter vectors yields $\mathcal{O}(s^{\textrm{end}}M(D+2))$. After convergence, every agent $i$ transmits the local inverted covariance matrix $\boldsymbol{C}_{\theta , i}^{-1}$ which yields $\mathcal{O}(MN_i^2)= \mathcal{O}(N^2/M)$ communications. Every agent $i$ occupies $\mathcal{O}(N_i^2+3(D+2)+D(N/M))) = \mathcal{O}(N^2/M^2+DN/M)$ memory to store the local inverted covariance matrix $\boldsymbol{C}_{\theta , i}^{-1}$, the three quantities of c-GP at the previous iteration $\boldsymbol{\theta}_i^{(s)}$, $\boldsymbol{z}^{(s)}$, $\boldsymbol{\psi}_i^{(s)}$, and the local dataset $\mathcal{D}_i$. 

The major disadvantage of c-GP is the time complexity of the nested optimization problem \eqref{eq:cadmm_theta}. To address this issue, the authors in \citep{xie2019distributed} employed the inexact px-ADMM \citep{hong2016convergence} and derived an analytical solution for the case of centralized factorized GP training to form the analytical px-ADMM-GP (apx-GP). Note that apx-GP employs a first-order approximation (linearization) on the local log-likelihood $\mathcal{L}_i$ around $\boldsymbol{z}^{(s+1)}$ which yields, 
\begin{equation}\label{eq:pxADMM_approx}
    \mathcal{L}_i(\boldsymbol{\theta}_i) \approx \nabla_{\boldsymbol{\theta}}^{\intercal}\mathcal{L}_i\left(\boldsymbol{z}^{(s+1)}\right)\left(\boldsymbol{\theta}_i - \boldsymbol{z}^{(s+1)}\right) + \frac{L_i}{2}\left\| \boldsymbol{\theta}_i - \boldsymbol{z}^{(s+1)} \right\|_2^2,
\end{equation}
where $L_i>0$ is a positive Lipschitz constant that satisfies Assumption~\ref{ass:lipschiz} of the local log-likelihood function $\mathcal{L}_i$ for all $i \in \mathcal{V}$. The apx-GP iteration steps are given by,
\begin{subequations}
\begin{alignat}{3}\label{eq:apxadmm_z}
\boldsymbol{z}^{(s+1)} & = \frac{1}{M} \sum_{i=1}^M \bigg(\boldsymbol{\theta}_i^{(s)} + \frac{1}{\rho} \boldsymbol{\psi}_i^{(s)} \bigg),\\\label{eq:apxadmm_theta}
\boldsymbol{\theta}_i^{(s+1)} & =  \boldsymbol{z}^{(s+1)} - \frac{1}{\rho+L_i} \bigg( \nabla_{\boldsymbol{\theta}}\mathcal{L}_i\left(\boldsymbol{z}^{(s+1)}\right) +\boldsymbol{\psi}_i^{(s)} \bigg) \\  \label{eq:apxadmm_psi}
\boldsymbol{\psi}_i^{(s+1)} & = \boldsymbol{\psi}_i^{(s)} + \rho \left(\boldsymbol{\theta}_i^{(s+1)}-\boldsymbol{z}^{(s+1)}\right),
\end{alignat}
\label{eq:apxadmm}
\end{subequations}
where the gradient of the local log-likelihood $\nabla_{\boldsymbol{\theta}}\mathcal{L}_i$ has similar structure to the the gradient of the log-likelihood \eqref{eq:grad_mle}. The only difference on the workflow of apx-GP and c-GP is that step iv) is computed analytically \eqref{eq:apxadmm_theta}, while the former incorporates a nested optimization problem \eqref{eq:cadmm_theta} at every ADMM-iteration. 

The space and communication complexity of apx-GP is identical to c-GP. The time complexity of apx-GP entails $\mathcal{O}(N_i^3) = \mathcal{O}(N^3/M^3)$ computations, significantly reduced from $\mathcal{O}(s_{\textrm{nest}}^{\textrm{end}}N^3/M^3)$ of c-GP. In other words, there is no nested optimization problem in apx-GP \eqref{eq:apxadmm} and thus requires just one inversion of the local covariance matrix $\boldsymbol{C}_{\theta,i}^{-1}$ per ADMM-iteration instead of $s_{\textrm{nest}}^{\textrm{end}}$ inversions per ADMM-iteration in c-GP. Both c-GP and apx-GP inherit the convergence properties of c-ADMM \citep{boyd2011admm} and px-ADMM \citep{hong2016convergence} which result in much faster convergence than gradient descent. 

A disadvantage of both centralized methods \eqref{eq:cadmm} and \eqref{eq:apxadmm} is that they are based on factorized GP training and thus they inherit poor approximation capabilities when the number of nodes increases. More specifically, for a bounded space of interest, Assumption~\ref{ass:independence} is violated as we increase the number of sub-models $\mathcal{M}_i$.
\subsection{Proposed Centralized GP Training}\label{ssec:centTrainNew}
The first method we propose is a centralized factorized GP training technique that extends apx-GP with a local augmented datatset $\mathcal{D}_{+i}$ for all $i\in \mathcal{V}$. The goal is to limit the approximation error of factorized GP training inherited by Assumption~\ref{ass:independence} at the cost of allowing partial local data exchange (Assumption~\ref{ass:parData}). A larger dataset entails more computations, thus we build on the computationally affordable apx-GP method. We term our methodology as generalized apx-GP (gapx-GP).

\newfloat{algorithm}{!t}{lop}
\begin{algorithm}
\caption{gapx-GP}\label{alg:gapx_ADMM_GP} 
{
\textbf{Input:} $\mathcal{D}_i (\boldsymbol{X}_i, \boldsymbol{y}_i)
$, $k(\cdot,\cdot)$, $\rho$, $L_i$, $\mathcal{N}_i$, $\mathcal{V}$, $\textrm{TOL}_{\textrm{ADMM}}$

\textbf{Output:} $\hat{\boldsymbol{\theta}}$, $\boldsymbol{C}_{\theta}^{-1}$, $\mathcal{D}_{+i}$ 
}
\begin{algorithmic}[1]

\ForEach {$i \in \mathcal V $}\Comment{Local Sample Dataset}
\State $\mathcal{D}_{\textrm{c},i} \gets \texttt{Sample}(\mathcal{D}_i)$
\State communicate $\mathcal{D}_{\textrm{c},i}$ to central node
\EndFor
\State scatter $\mathcal{D}_{\textrm{c}} = \{\mathcal{D}_{\textrm{c},i}\}_{i=1}^M$ from central node to every agent

\ForEach {$i \in \mathcal V $}\Comment{Local Augmented Dataset}
\State $\mathcal{D}_{+i} \gets \mathcal{D}_{i} \cup  \mathcal{D}_{\textrm{c}}$ 
\EndFor

\Repeat \Comment{ADMM Optimization}
\State communicate $\boldsymbol{\theta}_i^{(s)}$ to central node 
\State $\boldsymbol{z}^{(s+1)} \gets \texttt{primal-2}(\boldsymbol{\theta}_i^{(s)}, \boldsymbol{\psi}_i^{(s)},\mathrm{card}(\mathcal{V}))$ \eqref{eq:apxadmm_z}
\State scatter $\boldsymbol{z}^{(s+1)}$ from central node to every agent
\ForEach {$i \in \mathcal V $}
\State $\boldsymbol{\theta}_i^{(s+1)} \gets \texttt{primal-1}(\boldsymbol{\theta}_i^{(s)}, \boldsymbol{z}^{(s+1)}, \boldsymbol{\psi}_i^{(s)}, \rho, L_i, \mathcal{D}_{+i})$ \eqref{eq:apxadmm_theta}
\State $\boldsymbol{\psi}_i^{(s+1)} \gets \texttt{dual}(\boldsymbol{\theta}_i^{(s+1)}, \boldsymbol{z}^{(s+1)}, \boldsymbol{\psi}_i^{(s)}, \rho)$ \eqref{eq:apxadmm_psi}
\EndFor
\Until {$\|\boldsymbol{\theta}_i^{(s+1)} - \boldsymbol{z}^{(s+1)}  \|_2< \textrm{TOL}_{\textrm{ADMM}}$, for all $i\in \mathcal{V}$}

\ForEach {$i \in \mathcal V $}\Comment{Local Augmented Covariance Inversion}
\State $\hat{\boldsymbol{\theta}} \gets \boldsymbol{\theta}_i^{\textrm{end}}$
\State $\boldsymbol{C}_{\theta,+i}^{-1} \gets \texttt{invert}(k,\boldsymbol{X}_{+i}, \hat{\boldsymbol{\theta}})$
\State communicate $\boldsymbol{C}_{\theta,+i}^{-1}$ to central node 
\EndFor

\State $\boldsymbol{C}_{\theta}^{-1} \gets \texttt{diag}(\boldsymbol{C}_{\theta,+1}^{-1},\boldsymbol{C}_{\theta,+2}^{-1}, \ldots,\boldsymbol{C}_{\theta,+M}^{-1}  )$ \Comment{Block Diagonal}
\State Return $\hat{\boldsymbol{\theta}}$, $\boldsymbol{C}_{\theta}^{-1}$, $\mathcal{D}_{+i}$
\end{algorithmic}
\end{algorithm}

Let the communication dataset to be formed as discussed in Section~\ref{sssec:aggrPred}-grBCM. Then, every agent $i$ has access to a local augmented dataset which is the union of the corresponding local dataset and the communication dataset $\mathcal{D}_{+i} = \mathcal{D}_{i} \cup \mathcal{D}_{\textrm{c}} \in \mathbb{R}^{2N_i}$. Next, we implement the apx-GP \eqref{eq:apxadmm}, but now every agent is equipped with the local augmented dataset $\mathcal{D}_{+i}$. The implementation details are provided in Algorithm~\ref{alg:gapx_ADMM_GP}.

The local time complexity of gapx-GP yields $\mathcal{O}((2N_i)^3) = \mathcal{O}(8(N^3/M^3))$ computations to invert the local augmented covariance matrix $\boldsymbol{C}_{\theta,+i} = \boldsymbol{K}_{+i} + \sigma_{\epsilon}^2I_{2N_i}\in \mathbb{R}^{2N_i \times 2N_i}$. The total communication overhead is the same with c-GP and apx-GP. After convergence, each agent $i$ communicates the local augmented covariance matrix $\boldsymbol{C}_{\theta,+i}^{-1}$ that entails $\mathcal{O}(M(2N_i)^2) = \mathcal{O}(4(N^2/M))$ communications. The space complexity of every agent $i$ yields $\mathcal{O}((2N_i)^2+3(D+2)+D(2N_i)) = \mathcal{O}(4(N^2/M^2) + 2D(N/M))$ to store the local augmented covariance matrix, the optimization variables at the previous iteration, and the local augmented dataset. 

In Table~\ref{tab:complexityTrainingADMM}, we list the time, space, and communication complexity for all centralized factorized GP training methods based on ADMM. The proposed method is more demanding in space than c-GP. In terms of time complexity, gapx-GP is more affordable than c-GP, because the nested optimization of the latter \eqref{eq:cadmm_theta} takes on average more than eight iterations to converge, i.e., $s_{\textrm{nest}}^{\textrm{end}}>8$, but more demanding than apx-GP. The proposed method supports Assumption~\ref{ass:independence}, and thus we expect to produce more accurate hyperparameters. 

\begin{proposition}
Let  Assumption~\ref{ass:independence} and~\ref{ass:strict_convexity} hold for the local sub-model $\mathcal{M}_i$, then the gapx-GP converges, i.e., $\lim_{s\rightarrow \infty}\|\boldsymbol{\theta}_i^{(s)} - \boldsymbol{z}^{(s)} \|_2 = 0$ for all $i\in \mathcal{V}$, to a stationary solution $(\boldsymbol{\theta}_i^{\star},\boldsymbol{z}^{\star},\boldsymbol{\psi}_i^{\star})$ of (P3).

\proof
The proof is direct consequence of \citep[Theorem 2.10]{hong2016convergence}.

\end{proposition}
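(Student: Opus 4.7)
The plan is to recognize gapx-GP as a particular instance of the inexact linearized proximal consensus ADMM analyzed in \citep{hong2016convergence}, and then invoke their Theorem 2.10 directly. Structurally, gapx-GP differs from apx-GP only in the choice of the local dataset (the augmented $\mathcal{D}_{+i}$ in place of $\mathcal{D}_i$); the update rules \eqref{eq:apxadmm_z}--\eqref{eq:apxadmm_psi} are otherwise identical, and the constraint in (P3) is exactly the consensus constraint $\boldsymbol{\theta}_i = \boldsymbol{z}$ for which \citep{hong2016convergence} was designed. Since the proof in \citep{xie2019distributed} for apx-GP already proceeds by this reduction, the task collapses to checking that the same hypotheses continue to hold when each sub-model $\mathcal{M}_i$ is trained on $\mathcal{D}_{+i}$ rather than $\mathcal{D}_{i}$.

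The key steps I would carry out are: (i) cast (P3) in the canonical form required by \citep{hong2016convergence}, with smooth but possibly nonconvex local losses $\mathcal{L}_i(\boldsymbol{\theta}_i;\mathcal{D}_{+i})$ coupled by a linear consensus constraint; (ii) identify the linearization in \eqref{eq:pxADMM_approx} with the proximal upper bound required by the theorem, where $L_i$ is a Lipschitz constant of $\nabla_{\boldsymbol{\theta}}\mathcal{L}_i$ on the augmented dataset in the sense of Assumption~\ref{ass:lipschiz}; (iii) observe that the $\boldsymbol{z}$-subproblem \eqref{eq:apxadmm_z} is a strictly convex quadratic average, so Assumption~\ref{ass:strict_convexity} on $\mathcal{M}_i$ combined with the consensus averaging ensures a unique $\boldsymbol{z}^{(s+1)}$ at every iteration; and (iv) verify the penalty condition on $\rho$ (and the effective step size $\rho+L_i$) required by \citep[Assumption A]{hong2016convergence} so that the augmented Lagrangian is monotonically decreasing and bounded below. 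Once these conditions are in place, \citep[Theorem 2.10]{hong2016convergence} yields both the primal feasibility limit $\|\boldsymbol{\theta}_i^{(s)}-\boldsymbol{z}^{(s)}\|_2 \to 0$ and the convergence of $(\boldsymbol{\theta}_i^{(s)},\boldsymbol{z}^{(s)},\boldsymbol{\psi}_i^{(s)})$ to a stationary point $(\boldsymbol{\theta}_i^{\star},\boldsymbol{z}^{\star},\boldsymbol{\psi}_i^{\star})$ of (P3).

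The main obstacle is step (ii): verifying that $\nabla_{\boldsymbol{\theta}}\mathcal{L}_i$ remains Lipschitz continuous when the kernel matrix is built on $\mathcal{D}_{+i}$ rather than $\mathcal{D}_i$, so that a finite $L_i$ exists for use in \eqref{eq:apxadmm_theta}. The gradient \eqref{eq:grad_mle} depends on $\boldsymbol{C}_{\theta,+i}^{-1}$, whose spectral norm can blow up if the smallest eigenvalue collapses; this is controlled by the strictly positive noise term $\sigma_\epsilon^2 I_{2N_i}$ and, for the length-scale parameters, by working in the logarithmic domain per Remark~\ref{rem:positivity_relax} to avoid the positivity boundary. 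Doubling the local dataset only increases $L_i$ by a controlled factor that scales with $N_i$ and the kernel's derivatives, so $\rho$ can be chosen accordingly to satisfy the Hong--Luo penalty condition. With this bookkeeping handled, no new analytical machinery is needed and the claim follows as a direct corollary of \citep[Theorem 2.10]{hong2016convergence}.
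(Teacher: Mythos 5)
Your proposal is correct and follows essentially the same route as the paper: the paper's entire proof is the one-line reduction to \citep[Theorem 2.10]{hong2016convergence}, which is exactly the reduction you carry out. The additional bookkeeping you supply (verifying that a finite Lipschitz constant $L_i$ survives the switch from $\mathcal{D}_i$ to $\mathcal{D}_{+i}$ via the $\sigma_\epsilon^2 I_{2N_i}$ regularization and the log-domain reparameterization) is a useful elaboration of hypotheses the paper leaves implicit, not a different argument.
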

\section{Proposed Decentralized GP Training}\label{sec:decentTrain}
In this section, we propose solutions for Problem~\ref{pr:training1} and~\ref{pr:training2} based on the \textit{edge formulation} of ADMM \citep{shi2014linear} that yields parallel updates and decentralizes the factorized GP training. Let Assumption~\ref{ass:repeat_connectivity} hold, then (P3) can be  expressed as,
\begin{align}\nonumber
\textrm{(P4)} \quad \hat{\boldsymbol{\theta}} = \arg \min_{\boldsymbol{\theta}} \quad & 
\sum_{i=1}^M \boldsymbol{y}_i^{\intercal}\boldsymbol{C}_{\theta,i}^{-1}\boldsymbol{y}_i + \log |\boldsymbol{C}_{\theta,i}| 
\\ \label{eq:P4_constrCons_1}
\textrm{s.to} \quad & \boldsymbol{\theta}_{i} = \boldsymbol{\tau}_{ij}, \quad \forall i \in \mathcal{V}, \, j\in \mathcal{N}_i,\\ \label{eq:P4_constrCons_2}
\quad & \boldsymbol{\theta}_{j} = \boldsymbol{\tau}_{ij}, \quad \forall i \in \mathcal{V}, \, j\in \mathcal{N}_i,
\end{align}
where $\boldsymbol{\tau}_{ij}$ are auxiliary variables. Constraints~\eqref{eq:P4_constrCons_1} and \eqref{eq:P4_constrCons_2} imply that every agent $i$ is allowed to have its own opinion for the hyperparameters $\boldsymbol{\theta}_i$, yet at the end of the optimization all agents in the neighborhood $\mathcal{N}_i$ must agree on the neighborhood values $\boldsymbol{\tau}_{ij}$. The edge formulation requires each node $i$ to store and update variables for all of its neighbors $\mathcal{N}_i$. Conversely, one can employ the \textit{node formulation} that relaxes the storage capacity, as each agent $i$ is required to store and update variables of itself \citep{makhdoumi2017convergence}. In addition, the group ADMM \citep{elgabli2020gadmm} offers a decentralized optimization method, yet for a specific graph topology. Thus, we find the edge formulation more suitable for decentralized GP training.

Let us introduce an additional Assumption to study the convergence properties of the proposed methods.
\begin{assumption}\label{ass:strict_convexity}
A function $f:\mathbb{R^{N}}\rightarrow \mathbb{R}$ is strongly convex with positive parameter $m>0$ if it satisfies,
\begin{equation}
\left(\nabla f(\boldsymbol{x}) - \nabla f(\boldsymbol{y}) \right)^{\intercal}(\boldsymbol{x} - \boldsymbol{y} )\geq m \left\| \boldsymbol{x} - \boldsymbol{y} \right\|_2^2, \quad \forall \boldsymbol{x}, \boldsymbol{y}.
\end{equation}
\end{assumption}
\begin{table*}[!t]
\centering
\begin{threeparttable}
\caption{Time, Space, and Communication Complexity of Decentralized Factorized GP Training with ADMM-based Methods}\label{tab:complexityDecTrainingADMM}
{
\begin{tabular}{  c c c c c }
\toprule

 & &  {DEC-c-GP}   & {DEC-apx-GP}  &  {DEC-gapx-GP} \\
 
 \midrule 
  
 \multirow{3}{*}{Local}  &   Time & $\mathcal{O}(s_{\textrm{nest}}^{\textrm{end}}(N^3/M^3))$ & $\mathcal{O}(N^3/M^3)$ & $\mathcal{O}(8(N^3/M^3))$ \\
 
    &    Space & $\mathcal{O}(\xi)$ & $\mathcal{O}(\xi)$ & $\mathcal{O}(2\xi+2(N^2/M^2))$ \\
    
        &    Comm & $\mathcal{O}(s_{\textrm{DEC-c-GP}}^{\textrm{end}}(D+2))$ & $\mathcal{O}(s_{\textrm{DEC-apx-GP}}^{\textrm{end}}(D+2))$ & $\mathcal{O}(s_{\textrm{DEC-gapx-GP}}^{\textrm{end}}(D+2))$ \\

\bottomrule
\end{tabular}
\begin{tablenotes}[para,flushleft]
      \item $\xi = N^2/M^2+D(N/M)$. 
    \end{tablenotes}
    }
    \end{threeparttable}
\end{table*}

\subsection{DEC-c-GP}
This method is based on the decentralized consensus ADMM \citep{mateos2010distributed}. After rendering the augmented Lagrangian for (P4) we obtain the decentralized consensus ADMM iterative scheme,
\begin{subequations}
\begin{alignat}{3}\label{eq:dec_c_admm_p}
\boldsymbol{p}_i^{(s+1)} & = \boldsymbol{p}_i^{(s)} + \rho  \sum_{j\in \mathcal{N}_i} \bigg(\boldsymbol{\theta}_i^{(s)} - \boldsymbol{\theta}_j^{(s)} \bigg),\\  \label{eq:dec_c_admm_theta}
\boldsymbol{\theta}_i^{(s+1)} & = \arg \min_{\boldsymbol{\theta}_i} \left\{ \mathcal{L}_i(\boldsymbol{\theta}_i) + \boldsymbol{\theta}_i^{\intercal}\boldsymbol{p}_i^{(s+1)} + \rho \sum_{j\in \mathcal{N}_i} \left\|\boldsymbol{\theta}_i - \frac{\boldsymbol{\theta}_i^{(s)}+\boldsymbol{\theta}_j^{(s)}}{2} \right\|_2^2 \right\},
\end{alignat}
\label{eq:dec_c_admm_GP}
\end{subequations}
where $\rho>0$ is the penalty term of the augmented Lagrangian and $\boldsymbol{p}_i^{(s)} = \sum_{j\in \mathcal{N}_i}(\boldsymbol{u}_{ij}^{(s)} +\boldsymbol{v}_{ij}^{(s)})$ is the sum of the dual variables $\boldsymbol{u}_{ij}^{(s)}$ and $\boldsymbol{v}_{ij}^{(s)}$ corresponding to constraints \eqref{eq:P4_constrCons_1} and \eqref{eq:P4_constrCons_2}. Note that \eqref{eq:dec_c_admm_p} imposes initial values $\boldsymbol{p}_i^{(0)} = \boldsymbol{0}$.

\newfloat{algorithm}{!t}{lop}
\begin{algorithm}
\caption{{DEC-c-GP}}\label{alg:dec_c_ADMM_GP} 
{
\textbf{Input:} $\mathcal{D}_i (\boldsymbol{X}_i, \boldsymbol{y}_i)
$, $k(\cdot,\cdot)$, $\rho$, $\mathcal{N}_i$, $\alpha$, $s_{\textrm{DEC-c-GP}}^{\textrm{end}}$

\textbf{Output:} $\hat{\boldsymbol{\theta}}$, $\boldsymbol{C}_{\theta,i}^{-1}$
}
\begin{algorithmic}[1]

\State initialize $\boldsymbol{p}_{i}^{(0)} = \boldsymbol{0}$ 

\For  {$s =1$ to $s_{\textrm{DEC-c-GP}}^{\textrm{end}}$}  \Comment{ADMM Optimization}
\ForEach {$i \in \mathcal V $}
\State communicate $\boldsymbol{\theta}_i^{(s)}$ to neighbors $\mathcal{N}_i$ 
\State $\boldsymbol{p}_i^{(s+1)} \gets \texttt{duals}(\boldsymbol{p}_i^{(s)}, \boldsymbol{\theta}_i^{(s)}, \{\boldsymbol{\theta}_j^{(s)}\}_{j\in \mathcal{N}_i}, \rho)$ \eqref{eq:dec_c_admm_p}
\State $\boldsymbol{\theta}_i^{(s+1)} \gets \texttt{primal}(\boldsymbol{p}_i^{(s+1)}, \boldsymbol{\theta}_i^{(s)}, \{\boldsymbol{\theta}_j^{(s)}\}_{j\in \mathcal{N}_i}, \rho, \alpha,\mathcal{D}_i)$ \eqref{eq:dec_c_admm_theta}
\EndFor
\EndFor

\ForEach {$i \in \mathcal V $}\Comment{Local Covariance Inversion}
\State $\hat{\boldsymbol{\theta}} \gets \boldsymbol{\theta}_i^{\textrm{end}}$
\State $\boldsymbol{C}_{\theta,i}^{-1} \gets \texttt{invert}(k,\boldsymbol{X}_{i}, \hat{\boldsymbol{\theta}})$
\EndFor

\State Return $\hat{\boldsymbol{\theta}}$, $\boldsymbol{C}_{\theta,i}^{-1}$
\end{algorithmic}
\end{algorithm}

The workflow is as follows. Every agent $i$ communicates to its neighbors $j\in \mathcal{N}_i$ the current estimate of the hyperparameters $\boldsymbol{\theta}_i^{(s)}$. After each agent gathers all $\boldsymbol{\theta}_j^{(s)}$ vectors from its neighborhood, then the sum of the dual variables vector is updated \eqref{eq:dec_c_admm_p} to obtain $\boldsymbol{p}_i^{(s+1)}$. Next, every agent $i$ solves a nested optimization problem \eqref{eq:dec_c_admm_theta} to compute $\boldsymbol{\theta}_i^{(s+1)}$. The method iterates until it reaches a predefined maximum iteration number $s_{\textrm{DEC-c-GP}}^{\textrm{end}}$. The main routine of DEC-c-GP is provided in Algorithm~\ref{alg:dec_c_ADMM_GP}. The proposed method is decentralized (executed in parallel), requiring exclusively neighbor-wise communication as shown in Fig.~\ref{fig:dec_train_gp_concept}-(a). Note that the inter-agent communications do not involve any data exchange which satisfies Assumption \ref{ass:noData}. Provided that the graph topology is connected (Assumption \ref{ass:repeat_connectivity}), then DEC-c-GP \eqref{eq:dec_c_admm_GP} addresses Problem~\ref{pr:training1}. 

Let the total number of iterations for the nested optimization problem \eqref{eq:dec_c_admm_theta} be  $s_{\textrm{nest}}^{\textrm{end}}$. The time complexity of every agent $i$ is dominated by the inverse of the local covariance matrix $\boldsymbol{C}_{\theta,i}^{-1}$ for every iteration of the nested optimization problem \eqref{eq:dec_c_admm_theta}, which results in $\mathcal{O}(s_{\textrm{nest}}^{\textrm{end}} N_i^3) = \mathcal{O}(s_{\textrm{nest}}^{\textrm{end}} (N^3/M^3))$ computations. The gradient for the nested optimization is provided in Appendix~\ref{app:grad_dec_c_admm}. Moreover, every agent $i$ occupies $\mathcal{O}(N_i^2+DN_i+(D+2)+(\mathrm{card}(\mathcal{N}_i)+1)(D+2)) = \mathcal{O}(N^2/M^2+D(N/M)+(\mathrm{card}(\mathcal{N}_i)+2)(D+2))$ memory to store the local inverted covariance matrix $\boldsymbol{C}_{\theta,i}^{-1}$, the local dataset $\mathcal{D}_i$, the sum of dual variables vector at the previous iteration $\boldsymbol{p}_{i}^{(s)}$, the hyperparameter vector at the previous iteration $\boldsymbol{\theta}_{i}^{(s)}$, and the hyperparameter vectors of all neighbors at the previous iteration $\{\boldsymbol{\theta}_{j}^{(s)}\}_{j \in \mathcal{N}_i}$.  The total number of communications for each agent is $\mathcal{O}(s_{\textrm{DEC-c-GP}}^{\textrm{end}}(D+2))$ to transmit the hyperparameters to its neighbors. 

\begin{figure}[!t]
	\includegraphics[width=.65\columnwidth]{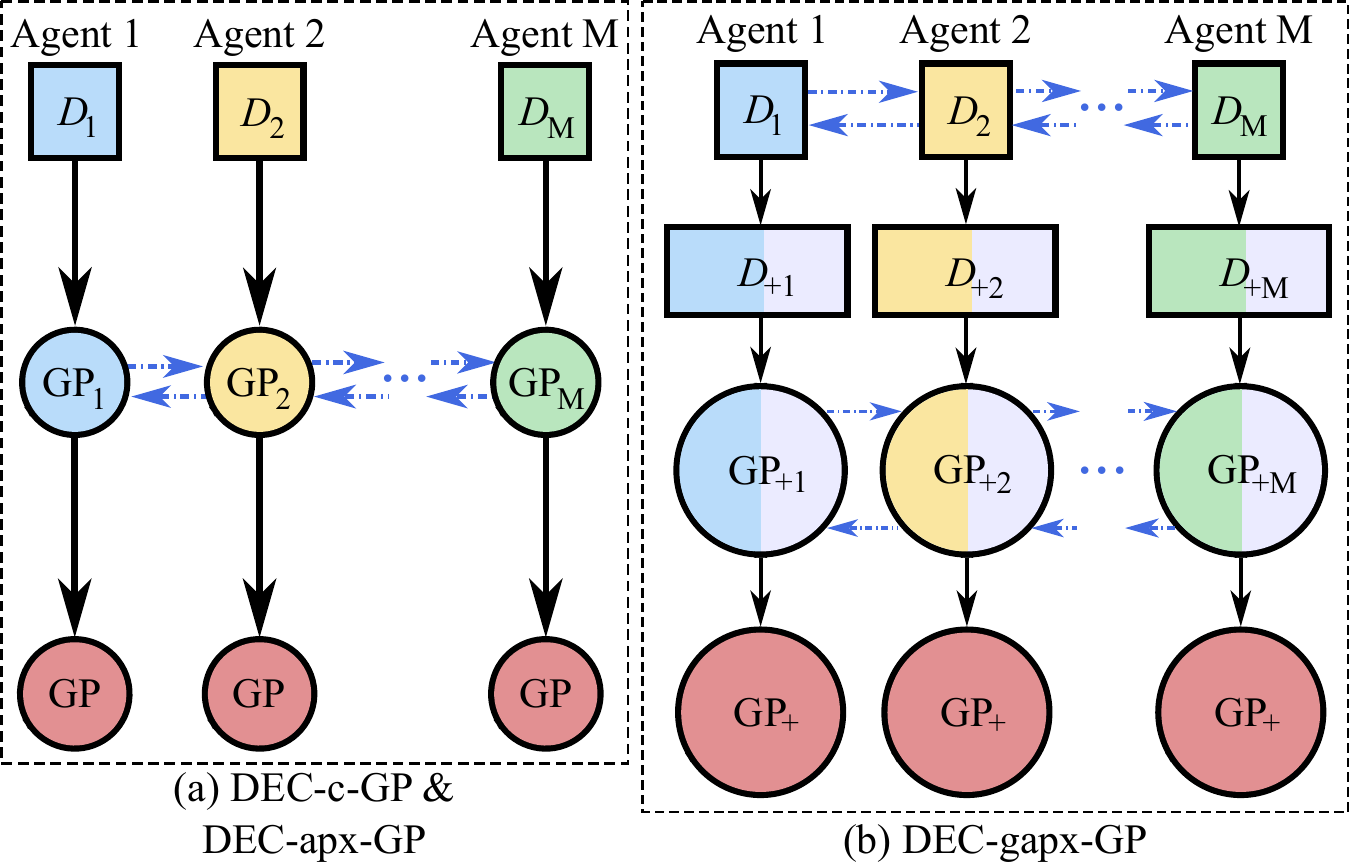}
	\centering
	\caption{The structure of the proposed decentralized factorized GP training methods. Blue dotted lines correspond to communication (strongly connected). a) Every agent $i$ has access to the local dataset $\mathcal{D}_i$. The agents are allowed to have their own opinion on the hyperparameter $\boldsymbol{\theta}_i$ using exclusively $\mathcal{D}_i$, but after communicating they all agree on the same hyperparameters $\boldsymbol{\theta}$. b) Every agent $i$ has access to $\mathcal{D}_i$. Next, they communicate to form the local augmented dataset $\mathcal{D}_{+i}$ which comprises of $\mathcal{D}_i$ (local color) and the global communication dataset $\mathcal{D}_{\textrm{c}}$ (gray color). The agents are allowed to have their own opinion on the hyperparameter $\boldsymbol{\theta}_i$ using exclusively $\mathcal{D}_{+i}$, but after communicating they all agree on the same hyperparameters $\boldsymbol{\theta}$.
	}
	\label{fig:dec_train_gp_concept}
\end{figure}

\begin{proposition}
Let Assumption~\ref{ass:repeat_connectivity},~\ref{ass:noData},~\ref{ass:independence}, and \ref{ass:strict_convexity} hold for the local sub-model $\mathcal{M}_i$, then the DEC-c-GP \eqref{eq:dec_c_admm_GP} converges to a stationary solution $\lim_{s\rightarrow \infty} \boldsymbol{\theta}_i^{(s)} = \boldsymbol{\theta}^{\star}$ of (P4) for all $i\in \mathcal{V}$.

\proof
The proof is direct application of \citep[Proposition 2]{mateos2010distributed}.

\end{proposition}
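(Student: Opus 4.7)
The plan is to cast the DEC-c-GP iteration \eqref{eq:dec_c_admm_GP} as an instance of the decentralized consensus ADMM studied in Mateos \textit{et al.}~\citep{mateos2010distributed} and then directly invoke their convergence theorem. First I would rewrite (P4) in the canonical edge-based consensus form: the global objective decomposes as $\sum_{i=1}^M \mathcal{L}_i(\boldsymbol{\theta}_i)$ (via \eqref{eq:local_ll} and (P2)), while constraints~\eqref{eq:P4_constrCons_1}–\eqref{eq:P4_constrCons_2} enforce $\boldsymbol{\theta}_i=\boldsymbol{\tau}_{ij}$ and $\boldsymbol{\theta}_j=\boldsymbol{\tau}_{ij}$ for every edge $(i,j)\in\mathcal{E}$, which is precisely the constraint set in \citep{mateos2010distributed}. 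Forming the augmented Lagrangian with dual variables $\boldsymbol{u}_{ij}$ and $\boldsymbol{v}_{ij}$ paired to the two constraints, eliminating the auxiliary $\boldsymbol{\tau}_{ij}$ in closed form (each $\boldsymbol{\tau}_{ij}$ is quadratic and separable across edges), and collapsing the dual updates into the aggregated multiplier $\boldsymbol{p}_i^{(s)}=\sum_{j\in\mathcal{N}_i}(\boldsymbol{u}_{ij}^{(s)}+\boldsymbol{v}_{ij}^{(s)})$ with $\boldsymbol{p}_i^{(0)}=\boldsymbol{0}$ recovers exactly \eqref{eq:dec_c_admm_p}–\eqref{eq:dec_c_admm_theta}.

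Next I would verify that the hypotheses of \citep[Proposition 2]{mateos2010distributed} are satisfied. Assumption~\ref{ass:repeat_connectivity} guarantees the underlying graph is (jointly) strongly connected, which is the topological requirement for information diffusion across the network. Assumption~\ref{ass:independence} underlies the factorized decomposition $\mathcal{L}\approx\sum_i\mathcal{L}_i$ so that the global problem truly splits over nodes. Assumption~\ref{ass:strict_convexity} supplies the strong convexity of every local objective $\mathcal{L}_i$, which in \citep{mateos2010distributed} is the key ingredient yielding both existence and uniqueness of the primal minimizer at each iteration and monotonic descent of the Lyapunov function associated with the primal–dual iterates. Assumption~\ref{ass:noData} is not needed for convergence \emph{per se}; it is relevant only because it ensures that no observations leave any node during the iteration, which is consistent with the use of \eqref{eq:dec_c_admm_theta} where $\mathcal{L}_i$ depends only on $\mathcal{D}_i$.

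With the mapping and assumptions in place, I would invoke \citep[Proposition 2]{mateos2010distributed} to conclude that the sequence $\{\boldsymbol{\theta}_i^{(s)}\}$ converges, that the consensus residual $\|\boldsymbol{\theta}_i^{(s)}-\boldsymbol{\theta}_j^{(s)}\|_2\to 0$ for every edge $(i,j)\in\mathcal{E}$, and that the common limit $\boldsymbol{\theta}^{\star}$ is a KKT (hence stationary) point of (P4). Strong convexity then upgrades stationarity to the unique global minimizer of the factorized objective, giving $\lim_{s\to\infty}\boldsymbol{\theta}_i^{(s)}=\boldsymbol{\theta}^{\star}$ for all $i\in\mathcal{V}$.

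The main obstacle is the translation step: one must carefully check that the edge-formulation augmented Lagrangian of (P4), after eliminating $\boldsymbol{\tau}_{ij}$ and aggregating the dual variables, matches the iteration in \citep{mateos2010distributed} up to the relabeling in \eqref{eq:dec_c_admm_p}–\eqref{eq:dec_c_admm_theta}. A secondary subtlety is Assumption~\ref{ass:strict_convexity} itself: the negative marginal log-likelihood $\mathcal{L}_i$ is generally non-convex in the hyperparameters $\boldsymbol{\theta}_i$, so strong convexity has to be assumed (as we do) or established locally around a stationary point; in the latter case the conclusion weakens to local convergence, which is the standard caveat for ADMM-based hyper-parameter learning and should be acknowledged in the statement.
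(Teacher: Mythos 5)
Your proposal follows essentially the same route as the paper, which simply cites \citep[Proposition 2]{mateos2010distributed} as a direct application; you additionally spell out the translation of (P4) into the edge-based consensus form and the verification of the hypotheses, which is exactly what that citation implicitly relies on. Your closing caveat about the non-convexity of $\mathcal{L}_i$ is also consistent with the paper's own acknowledgment in Remark~\ref{rem:multi_modal}.
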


\begin{remark}\label{rem:con_c-ADMM}
The main disadvantage of the proposed DEC-c-GP method is the cubic computations on the number of local observations for every iteration of the nested optimization \eqref{eq:dec_c_admm_theta}, which results in a computationally demanding process.
\end{remark}
\subsection{DEC-apx-GP}
To address the computational complexity problem of DEC-c-GP (Remark~\ref{rem:con_c-ADMM}) we consider an inexact proximal step based on a first-order approximation on the local log-likelihood $\mathcal{L}_i$ around $\boldsymbol{\theta}^{(s)}$ which yields,
\begin{equation}\label{eq:dec_px_ADMM_approx}
    \mathcal{L}_i(\boldsymbol{\theta}_i) \approx \nabla_{\boldsymbol{\theta}}^{\intercal}\mathcal{L}_i\left(\boldsymbol{\theta}_i^{(s)}\right)\left(\boldsymbol{\theta}_i - \boldsymbol{\theta}_i^{(s)}\right) + \frac{\kappa_i}{2}\left\| \boldsymbol{\theta}_i - \boldsymbol{\theta}_i^{(s)} \right\|_2^2,
\end{equation}
where $\kappa_i>0$ is a positive constant for all $i \in \mathcal{V}$. To this end, we obtain the DEC-px-ADMM \citep{chang2014multi} iterative scheme,
\begin{align}\nonumber
\boldsymbol{p}_i^{(s+1)} & = \boldsymbol{p}_i^{(s)} + \rho  \sum_{j\in \mathcal{N}_i} \bigg(\boldsymbol{\theta}_i^{(s)} - \boldsymbol{\theta}_j^{(s)} \bigg),\\ \nonumber
\boldsymbol{\theta}_i^{(s+1)} & = \arg \min_{\boldsymbol{\theta}_i} \bigg\{ \nabla_{\boldsymbol{\theta}}^{\intercal}\mathcal{L}_i\left(\boldsymbol{\theta}_i^{(s)}\right)\left(\boldsymbol{\theta}_i - \boldsymbol{\theta}_i^{(s)}\right) + \frac{\kappa_i}{2}\left\| \boldsymbol{\theta}_i - \boldsymbol{\theta}_i^{(s)} \right\|_2^2  \\ \label{eq:dec-pxadmm_theta}
& \quad \quad \quad \quad \quad \ \ + \boldsymbol{\theta}_i^{\intercal}\boldsymbol{p}_i^{(s+1)} + \rho \sum_{j\in \mathcal{N}_i} \left\|\boldsymbol{\theta}_i - \frac{\boldsymbol{\theta}_i^{(s)}+\boldsymbol{\theta}_j^{(s)}}{2} \right\|_2^2 \bigg\}.
\end{align}

Essentially, the linearization \eqref{eq:dec_px_ADMM_approx} allows the evaluation of the local log-likelihood function $\mathcal{L}_i$ \eqref{eq:local_ll} at a fixed point $\boldsymbol{\theta}_i^{(s)}$ and not at the optimizing variable $\boldsymbol{\theta}_i$. To this end, the nested optimization of \eqref{eq:dec-pxadmm_theta} entails significantly less computations than \eqref{eq:dec_c_admm_theta}. For the special case of factorized GP training problem (P4), an analytical solution of \eqref{eq:dec-pxadmm_theta} can be derived.

\newfloat{algorithm}{!t}{lop}
\begin{algorithm}
\caption{{DEC-apx-GP}}\label{alg:dec_apx_ADMM_GP} 
{
\textbf{Input:} $\mathcal{D}_i (\boldsymbol{X}_i, \boldsymbol{y}_i)
$, $k(\cdot,\cdot)$, $\rho$, $\mathcal{N}_i$, $\kappa_i$, $s_{\textrm{DEC-apx-GP}}^{\textrm{end}}$

\textbf{Output:} $\hat{\boldsymbol{\theta}}$, $\boldsymbol{C}_{\theta,i}^{-1}$
}
\begin{algorithmic}[1]

\State Identical to Algorithm~\ref{alg:dec_c_ADMM_GP} with \eqref{eq:dec_c_admm_theta} replaced by \eqref{eq:dec_apx_admm_theta}
\end{algorithmic}
\end{algorithm}

\begin{theorem}\label{thrm:dec_apx_admm_gp}
Let Assumption~\ref{ass:repeat_connectivity},~\ref{ass:noData},~\ref{ass:independence},~\ref{ass:lipschiz}, and \ref{ass:strict_convexity} hold for the local sub-model $\mathcal{M}_i$. Suppose that the penalty term of the first-order approximation $\kappa_i$ is sufficiently large,
\begin{equation}\label{eq:thrm_dec_apx_convergence_cond}
\kappa_i>\frac{L_i^2}{m_i^2} -\rho\underline{\lambda}(\boldsymbol{D}+\boldsymbol{A})>0, \quad \forall i \in \mathcal{V}.
\end{equation}
Then, the hyperparameter update \eqref{eq:dec-pxadmm_theta} admits a closed-form solution, resulting in the iterative scheme of~DEC-apx-GP,
\begin{subequations}
\begin{alignat}{3}\label{eq:dec-apxadmm_p}
\boldsymbol{p}_i^{(s+1)} & = \boldsymbol{p}_i^{(s)} + \rho  \sum_{j\in \mathcal{N}_i} \bigg(\boldsymbol{\theta}_i^{(s)} - \boldsymbol{\theta}_j^{(s)} \bigg),\\  \label{eq:dec_apx_admm_theta}
\boldsymbol{\theta}_i^{(s+1)} &=  \frac{1}{\kappa_i+2\mathrm{card}(\mathcal{N}_i)\rho } \left( \rho \sum_{j \in \mathcal{N}_i}\boldsymbol{\theta}_j^{(s)} -\nabla_{\boldsymbol{\theta}}\mathcal{L}_i\left(\boldsymbol{\theta}_i^{(s)}\right)   + (\kappa_i+\mathrm{card}(\mathcal{N}_i)\rho )\boldsymbol{\theta}_i^{(s)} - \boldsymbol{p}_i^{(s+1)}\right),
\end{alignat}
\label{eq:dec_apx_admm_gp}
\end{subequations}
that converges to a stationary solution $(\boldsymbol{\theta}_i^{\star},\boldsymbol{p}^{\star})$ of (P4) for all local entities $i\in \mathcal{V}$.

\proof
The proof is provided in Appendix~\ref{app:proof:thrm:dec_apx_admm_gp}.

\end{theorem}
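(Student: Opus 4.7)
The plan is to prove the two assertions of the theorem in sequence: first, that the $\boldsymbol{\theta}_i$-update \eqref{eq:dec-pxadmm_theta} admits the claimed closed form \eqref{eq:dec_apx_admm_theta}, and second, that the resulting iterative scheme \eqref{eq:dec_apx_admm_gp} converges to a stationary solution of (P4) under the stated assumptions. The first assertion is a purely algebraic calculation; the second is obtained by invoking the existing convergence theory for DEC-px-ADMM.

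For the closed-form solution, I would observe that the objective in \eqref{eq:dec-pxadmm_theta}, viewed as a function of $\boldsymbol{\theta}_i$ with $\boldsymbol{\theta}_i^{(s)}$, $\boldsymbol{p}_i^{(s+1)}$, and $\{\boldsymbol{\theta}_j^{(s)}\}_{j\in\mathcal{N}_i}$ treated as data, is an unconstrained strongly convex quadratic (its Hessian equals $(\kappa_i + 2\rho\,\mathrm{card}(\mathcal{N}_i))I_{D+2}\succ 0$). Thus the unique minimizer is the solution of the first-order stationarity condition. Taking the gradient with respect to $\boldsymbol{\theta}_i$ and setting it to zero gives
\begin{equation*}
\nabla_{\boldsymbol{\theta}}\mathcal{L}_i\bigl(\boldsymbol{\theta}_i^{(s)}\bigr) + \kappa_i\bigl(\boldsymbol{\theta}_i - \boldsymbol{\theta}_i^{(s)}\bigr) + \boldsymbol{p}_i^{(s+1)} + 2\rho\sum_{j\in\mathcal{N}_i}\!\Bigl(\boldsymbol{\theta}_i - \tfrac{\boldsymbol{\theta}_i^{(s)}+\boldsymbol{\theta}_j^{(s)}}{2}\Bigr) = \boldsymbol{0}.
\end{equation*}
Collecting the terms linear in $\boldsymbol{\theta}_i$ on the left and the rest on the right, dividing by $\kappa_i + 2\rho\,\mathrm{card}(\mathcal{N}_i) > 0$, and regrouping the $\boldsymbol{\theta}_i^{(s)}$ contributions (one copy of $\kappa_i\boldsymbol{\theta}_i^{(s)}$ from the proximal term, and $\rho\,\mathrm{card}(\mathcal{N}_i)\boldsymbol{\theta}_i^{(s)}$ from the consensus sum) yields exactly \eqref{eq:dec_apx_admm_theta}. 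No further structure of the GP log-likelihood is required at this step; we only use its gradient evaluated at the previous iterate.

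For convergence, I would frame \eqref{eq:dec_apx_admm_gp} as an instance of the DEC-px-ADMM scheme of \citep{chang2014multi} applied to the edge-formulated problem (P4). The required ingredients are already in place: Assumption~\ref{ass:repeat_connectivity} gives a strongly connected communication graph (so the incidence structure induced by \eqref{eq:P4_constrCons_1}--\eqref{eq:P4_constrCons_2} satisfies the standard rank condition that relates the algorithm's fixed points to the KKT points of (P4)); Assumption~\ref{ass:lipschiz} applied to $\mathcal{L}_i$ provides the constant $L_i$ controlling the linearization error of \eqref{eq:dec_px_ADMM_approx}; and Assumption~\ref{ass:strict_convexity} provides the constant $m_i$ guaranteeing unique primal iterates and coercivity of the per-node penalized surrogate. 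Assumption~\ref{ass:noData} plays no role in the convergence argument beyond confirming that the information flow used by \eqref{eq:dec_apx_admm_gp} (only $\boldsymbol{\theta}_j^{(s)}$ to neighbors) is compatible with the decentralized protocol.

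The main obstacle, and the crux of the argument, is to show that condition \eqref{eq:thrm_dec_apx_convergence_cond} on $\kappa_i$ is precisely the hypothesis required by the descent lemma for the linearized proximal update in \citep[Theorem~2]{chang2014multi}. The strategy is to construct a Lyapunov function combining the primal gap $\|\boldsymbol{\theta}_i^{(s+1)}-\boldsymbol{\theta}_i^{\star}\|_2^2$, the dual gap $\|\boldsymbol{p}_i^{(s+1)}-\boldsymbol{p}_i^{\star}\|_2^2$, and a consensus-error term driven by $\mathcal{L}(t) = \boldsymbol{D} - \boldsymbol{A}$. Bounding the error introduced by the linearization \eqref{eq:dec_px_ADMM_approx} by the Lipschitz term $L_i^2/m_i^2$, and absorbing it through the strong convexity $m_i$, produces a contraction provided the coefficient $\kappa_i + \rho\underline{\lambda}(\boldsymbol{D}+\boldsymbol{A})$ dominates $L_i^2/m_i^2$; this is exactly \eqref{eq:thrm_dec_apx_convergence_cond}. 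Since the verification of these algebraic inequalities is routine once the correspondence with \citep{chang2014multi} is made explicit, the proof then concludes by citing \citep[Theorem~2]{chang2014multi} to obtain $\lim_{s\to\infty}\boldsymbol{\theta}_i^{(s)} = \boldsymbol{\theta}_i^{\star}$ and $\lim_{s\to\infty}\boldsymbol{p}_i^{(s)} = \boldsymbol{p}_i^{\star}$ for all $i\in\mathcal{V}$, with the limit point satisfying the KKT conditions of (P4).
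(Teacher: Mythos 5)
Your proposal is correct and follows essentially the same route as the paper's proof: both establish that the surrogate objective in \eqref{eq:dec-pxadmm_theta} is a strongly convex quadratic with Hessian $(\kappa_i+2\rho\,\mathrm{card}(\mathcal{N}_i))I_{D+2}\succ 0$, obtain \eqref{eq:dec_apx_admm_theta} from the first-order stationarity condition, and defer convergence under condition \eqref{eq:thrm_dec_apx_convergence_cond} to the DEC-px-ADMM theory of \citet{chang2014multi} (the paper cites Theorem~1 there, you cite Theorem~2, but the role played is identical). The only cosmetic difference is that you differentiate the objective directly, whereas the paper first expands it term by term into explicit quadratic form before differentiating.
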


The condition to select the penalty parameter $\kappa_i$ \eqref{eq:thrm_dec_apx_convergence_cond} depends on the graph topology. This implies that the stronger the network the faster the convergence. 

The workflow of DEC-apx-GP is identical to DEC-c-GP, yet the hyperparameter update step \eqref{eq:dec_apx_admm_theta} is performed analytically without requiring a nested optimization update as in \eqref{eq:dec_c_admm_theta} or \eqref{eq:dec-pxadmm_theta}. Implementation details are given in Algorithm~\ref{alg:dec_apx_ADMM_GP} and the structure is illustrated in Fig.~\ref{fig:dec_train_gp_concept}-(a). The gradient of the local log-likelihood $\nabla_{\boldsymbol{\theta}}\mathcal{L}_i$ can be computed similarly to~\eqref{eq:grad_mle}. The proposed iterative method \eqref{eq:dec_apx_admm_gp} tackles Problem \ref{pr:training1}.

The local time complexity of DEC-apx-GP is reduced to $\mathcal{O}(N_i^3) = \mathcal{O}(N^3/M^3)$ for the inversion of the local covariance matrix $\boldsymbol{C}_{\theta,i}^{-1}$ just once at every ADMM iteration. The space complexity is identical to DEC-c-GP and the total communications entail $\mathcal{O}(s_{\textrm{DEC-apx-GP}}^{\textrm{end}}(D+2))$ messages.

\begin{remark}\label{rem:con_apx_ADMM_GP}
A disadvantage of both decentralized methods DEC-c-GP and DEC-apx-GP is the poor approximation capabilities when the number of agents increases, similarly to Section~\ref{ssec:centTrain}. In particular, Assumption~\ref{ass:independence} is violated as we increase the number of sub-models $\mathcal{M}_i$.
\end{remark}

\newfloat{algorithm}{!t}{lop}
\begin{algorithm}
\caption{{DEC-gapx-GP}}\label{alg:dec_gapx_ADMM_GP} 
{
\textbf{Input:} $\mathcal{D}_i (\boldsymbol{X}_i, \boldsymbol{y}_i)
$, $k(\cdot,\cdot)$, $\rho$, $\mathcal{N}_i$, $\kappa_i$, $s_{\textrm{DEC-gapx-GP}}^{\textrm{end}}$

\textbf{Output:} $\hat{\boldsymbol{\theta}}$, $\boldsymbol{C}_{\theta,+i}^{-1}$, $\mathcal{D}_{+i}$ 
}
\begin{algorithmic}[1]

\ForEach {$i \in \mathcal V $}
\State $\mathcal{D}_{\textrm{c},i} \gets \texttt{Sample}(\mathcal{D}_i)$
\State $\mathcal{D}_{\textrm{c}} \gets \texttt{flooding}(\mathcal{D}_{\textrm{c},i})$
\State $\mathcal{D}_{+i} = \mathcal{D}_{i} \cup \mathcal{D}_{\textrm{c}}$
\EndFor

\State $\boldsymbol{C}_{\theta,+i}^{-1} \gets \texttt{DEC-apx-GP}(\mathcal{D}_{+i}, k, \rho, \mathcal{N}_i, \kappa_i, s_{\textrm{DEC-gapx-GP}}^{\textrm{end}})$
\State Return $\hat{\boldsymbol{\theta}}$, $\boldsymbol{C}_{\theta,+i}^{-1}$, $\mathcal{D}_{+i}$
\end{algorithmic}
\end{algorithm}
\subsection{DEC-gapx-GP} 
We propose to extend the computationally efficient DEC-apx-GP method with a local augmented dataset $\mathcal{D}_{+i}$ for all $i\in \mathcal{V}$ to address the poor approximation capabilities of \eqref{eq:dec_c_admm_GP} and \eqref{eq:dec_apx_admm_gp} when the network has large number of nodes (Remark~\ref{rem:con_apx_ADMM_GP}). The idea is similar to the centralized gapx-GP method as presented in Section~\ref{ssec:centTrainNew}. In order to reduce the approximation error, we relax Assumption~\ref{ass:noData} by allowing exchange of local subsets of data Assumption~\ref{ass:parData}. We termed the proposed method as generalized DEC-apx-GP (DEC-gapx-GP).

Since the network has a decentralized topology, flooding \citep{topkis1985concurrent} is employed to broadcast the local sample datasets $\mathcal{D}_{\textrm{c},i}$ and form the communication dataset $\mathcal{D}_{\textrm{c}}$. The rest is a direct application of DEC-apx-GP with the local augmented datatset $\mathcal{D}_{+i}$ for all $i\in\mathcal{V}$. Algorithm \ref{alg:dec_gapx_ADMM_GP} presents the implementation details of DEC-gapx-GP. In Fig.~\ref{fig:dec_train_gp_concept}-(b) the structure of the proposed method is illustrated. Larger circular objects indicate that the augmented covariance matrices $\boldsymbol{C}_{\theta,+i}$ have double dimension, i.e., $2N_i \times 2N_i$ for all $i\in \mathcal{V}$. In addition, the larger rectangular blocks represent the double size local augmented datasets $\mathcal{D}_{+i}\in \mathbb{R}^{2N_i}$. The proposed method addresses Problem~\ref{pr:training2}.

The local time complexity of gapx-GP entails $\mathcal{O}((2N_i)^3) = \mathcal{O}(8(N^3/M^3))$ computations to invert the local augmented covariance matrix $\boldsymbol{C}_{\theta,+i} = \boldsymbol{K}_{+i} + \sigma_{\epsilon}^2I_{2N_i}\in \mathbb{R}^{2N_i \times 2N_i}$. The proposed method requires $\mathcal{O}((2N_i)^2+D(2N_i)+(\mathrm{card}(\mathcal{N}_i)+2)(D+2)) = \mathcal{O}(4(N^2/M^2)+2D(N/M)+(\mathrm{card}(\mathcal{N}_i)+2)(D+2))$ space to store the local augmented covariance matrix $\boldsymbol{C}_{\theta,+i}^{-1}$, the local augmented dataset $\mathcal{D}_{+i}$, the sum of dual variables vector at the previous iteration $\boldsymbol{p}_{i}^{(s)}$, the hyperparameter vector at the previous iteration $\boldsymbol{\theta}_{i}^{(s)}$, and the hyperparameter vectors of all neighbors at the previous iteration $\{\boldsymbol{\theta}_{j}^{(s)}\}_{j \in \mathcal{N}_i}$. The total communication overhead is $\mathcal{O}(s_{\textrm{DEC-gapx-GP}}^{\textrm{end}}(D+2))$.

In Table~\ref{tab:complexityDecTrainingADMM}, we list the time, space, and communication complexity for the proposed decentralized factorized GP training methods. The DEC-c-GP is the most computationally expensive method, but it requires less communications than the other methods to converge. Therefore, the DEC-c-GP method favors applications with significant computational resources on the local nodes. Note that this method can also be extended with local augmented dataset $\mathcal{D}_{+i}$ for all $i\in \mathcal{V}$. Next, the DEC-apx-GP is the computationally most  affordable method. The DEC-gapx-GP stands between the two former methods on time complexity, but requires more space. However, the latter can produce more accurate estimates of the hyperparameters.  

\begin{remark}\label{rem:multi_modal}
Assumption~\ref{ass:strict_convexity} requires the local log-likelihood function $\mathcal{L}_i$ to be strongly convex. Similarly to the global log-likelihood $\mathcal{L}$, this is not guaranteed for the local log-likelihoods $\mathcal{L}_i$ for all $i\in\mathcal{V}$. Usually $\mathcal{L}_i$ is nonconvex with respect to the hyperparameters $\boldsymbol{\theta}_i$  \citep{mackay1998introduction,rasmussen2006gaussian,perez2013gaussian}. This is a well known issue of GP hyperparameter training with MLE. A common trick to address the nonconvexity problem is to use multiple starting points to the optimization problem \citep{rasmussen2006gaussian,chen2018priors,basak2021numerical}. Consequently, in this work we follow a similar approach. Note that as we increase the observations the local log-likelihoods tend to be unimodal distributions around the hyperparameters, and thus Assumption~\ref{ass:strict_convexity} is satisfied \citep{mackay1998introduction}. 
\end{remark}

\begin{remark}\label{rem:con_dec_xx_ADMM}
There is no condition to evaluate the termination of the decentralized algorithms, c-GP, apx-GP, and gapx-GP. To this end, we resort to predetermined number of iterations $s^{\textrm{end}}$ which imposes additional computations, storage, and neighbor-wise communications from each agent.
\end{remark}
\section{Proposed Decentralized GP Prediction}\label{sec:decentPred}
In this section, we discuss the discrete-time average consensus (DAC) method \citep{olfati2007consensus}, the Jacobi over-relaxation method (JOR) \citep[Ch. 2.4]{bertsekas2003parallel}, and a distributed algorithm to solve systems of linear equations (DALE) \citep{wang2016improvement,liu2017asynchronous}. In addition, we introduce a technique to identify statistically correlated agents for the location of interest. We combine these tools to approximate the aggregation of GP experts methods in a decentralized fashion. 
\subsection{Decentralized Aggregation Methods}
\textbf{\textit{DAC}}: The DAC is an iterative and parallel method to compute the average of a vector $\boldsymbol{w}\in\mathbb{R^M}$ within a network. More specifically, every agent $i$ has access to one element $w_i \in \mathbb{R}$ and the goal is to compute the average $\bar{\boldsymbol{w}} = (1/M)\sum_{i=1}^Mw_i$. The DAC update law yields,
\begin{equation}\label{eq:dac}
    w_i^{(s+1)} = w_i^{(s)} + \epsilon \sum_{j\in \mathcal{N}_i(t)} a_{ij}(t) \left( w_j^{(s)}- w_i^{(s)}\right),
\end{equation}
where $\epsilon$ is the parameter of the Perron matrix and $a_{ij}(t)$ is the ($i,j$)-th entry of the adjacency matrix. Use of consensus protocols implicitly requires that each node can distributively determine convergence in the network. In other words, just because an agent converged, that does not imply that the network has reached consensus. We employ a maximin stopping criterion \citep{yadav2007distributed} to locally detect convergence in the network. An additional assumption is required to implement the DAC.

\begin{assumption}\label{ass:fleet}
The total number of agents $M$ is known. 
\end{assumption}

\begin{lemma}\label{lem:dac_convergence}
\citep[Theorem 2]{olfati2007consensus}, \citep[Corollary 5.2]{olshevsky2009convergence} Let Assumption \ref{ass:repeat_connectivity} hold. If~$\epsilon\in (0,1/\Delta)$, then the DAC \eqref{eq:dac} converges to the average $\bar{\boldsymbol{w}}$ for any initialization $w_i^{(0)}$ with convergence time $T_M(\epsilon) = \mathcal{O}(M^3 \log(M/\epsilon))$.
\end{lemma}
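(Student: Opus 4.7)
The plan is to express the DAC iteration in matrix form and then reduce convergence to a standard spectral argument on the Perron matrix $\mathcal{P}(t) = I_M - \epsilon \mathcal{L}(t)$. Stacking the scalars $w_i^{(s)}$ into the vector $\boldsymbol{w}^{(s)} \in \mathbb{R}^M$, the update \eqref{eq:dac} reads $\boldsymbol{w}^{(s+1)} = \mathcal{P}(t)\boldsymbol{w}^{(s)}$, so after $s$ iterations $\boldsymbol{w}^{(s)} = \bigl(\prod_{\tau=0}^{s-1}\mathcal{P}(\tau)\bigr)\boldsymbol{w}^{(0)}$. The convergence claim then amounts to showing this matrix product converges to $(1/M)\mathbf{1}_M\mathbf{1}_M^{\intercal}$.

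The first technical step is to check that each $\mathcal{P}(t)$ is doubly stochastic when $\epsilon\in(0,1/\Delta)$. Row sums equal one because $\mathcal{L}(t)\mathbf{1}_M = \mathbf{0}_M$; the diagonal entries $1-\epsilon d_i(t) \geq 1-\epsilon \Delta > 0$ are nonnegative under the stated range of $\epsilon$, and the off-diagonal entries $\epsilon a_{ij}(t)\geq 0$ are trivially nonnegative; symmetry of the undirected graph then gives column-stochasticity. Double stochasticity immediately implies that the running average is preserved, $\mathbf{1}_M^{\intercal}\boldsymbol{w}^{(s+1)} = \mathbf{1}_M^{\intercal}\boldsymbol{w}^{(s)}$, so the only candidate fixed vector compatible with the dynamics is $\bar{w}\,\mathbf{1}_M$.

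The second step is to invoke Assumption~\ref{ass:repeat_connectivity}: over every window of $\gamma$ time steps the union graph is strongly connected, hence the block product $\prod_{\tau=t}^{t+\gamma-1}\mathcal{P}(\tau)$ is doubly stochastic with a strongly connected underlying graph and positive diagonals inherited from $1-\epsilon d_i(t)>0$. A Wolfowitz-type result on products of such stochastic matrices then guarantees that the spectral radius of the product restricted to the subspace orthogonal to $\mathbf{1}_M$ is strictly less than one, yielding geometric convergence of $\boldsymbol{w}^{(s)}$ to $\bar{w}\,\mathbf{1}_M$ for every initialization $\boldsymbol{w}^{(0)}$.

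The final step, and the main obstacle, is the explicit rate bound $T_M(\epsilon) = \mathcal{O}(M^3 \log(M/\epsilon))$, which is quantitative rather than qualitative: one must lower-bound the contraction factor of the $\gamma$-step product uniformly in $t$. The argument in \citep{olshevsky2009convergence} proceeds by bounding the smallest positive entry in a product of $\mathcal{O}(M)$ Perron matrices from below by a power of $\epsilon$, converting this into a coefficient-of-ergodicity contraction, and then inverting the contraction to reach the desired accuracy. Rather than reproducing these delicate combinatorial estimates, I would simply invoke their Corollary~5.2 together with Theorem~2 of \citep{olfati2007consensus} to obtain the stated bound.
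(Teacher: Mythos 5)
Your outline is sound and ends exactly where the paper does: Lemma~\ref{lem:dac_convergence} is stated in the paper as an imported result with no proof beyond the citations to \citep[Theorem 2]{olfati2007consensus} and \citep[Corollary 5.2]{olshevsky2009convergence}, and your argument (doubly stochastic Perron matrices with positive diagonals for $\epsilon\in(0,1/\Delta)$, average preservation, a Wolfowitz-type contraction over each $\gamma$-window under Assumption~\ref{ass:repeat_connectivity}) is the standard route to the qualitative claim, with the quantitative bound $T_M(\epsilon)=\mathcal{O}(M^3\log(M/\epsilon))$ deferred to the same references. This matches the paper's treatment, so there is nothing to reconcile.
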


\textbf{\textit{JOR}}: The JOR is an iterative and parallel method to solve a system of linear algebraic equations in the form of $\boldsymbol{H} \boldsymbol{q} = \boldsymbol{b}$, where $\boldsymbol{H} = [h_{ij}]\in \mathbb{R}^{M \times M}$ is a known non-singular matrix with non-zero diagonal entries $h_{ii}\neq 0$, $\boldsymbol{b}\in \mathbb{R}^{M}$ is a known vector, and $\boldsymbol{q}\in \mathbb{R}^{M}$ is an unknown vector. More specifically, the $i$-th node knows: i) the $i$-th row of the known matrix $\mathrm{row}_i\{\boldsymbol{H}\}\in \mathbb{R}^{1\times M}$; and ii) the $i$-th element of the known vector $b_i\in \mathbb{R}$. The objective is to find $q_i\in \mathbb{R}$, the $i$-th element of the unknown vector $\boldsymbol{q}$. The JOR iterative scheme yields,
\begin{align}\label{eq:jor}
    q_i^{(s+1)} = (1-\omega)q_i^{(s)} + \frac{\omega}{h_{ii}} \left(b_i - \sum_{j\neq i } h_{ij}q_j^{(s)}   \right),
\end{align}
where $\omega \in (0,1)$ the relaxation parameter. 

\begin{figure}[!t]
	\includegraphics[width=.75\columnwidth]{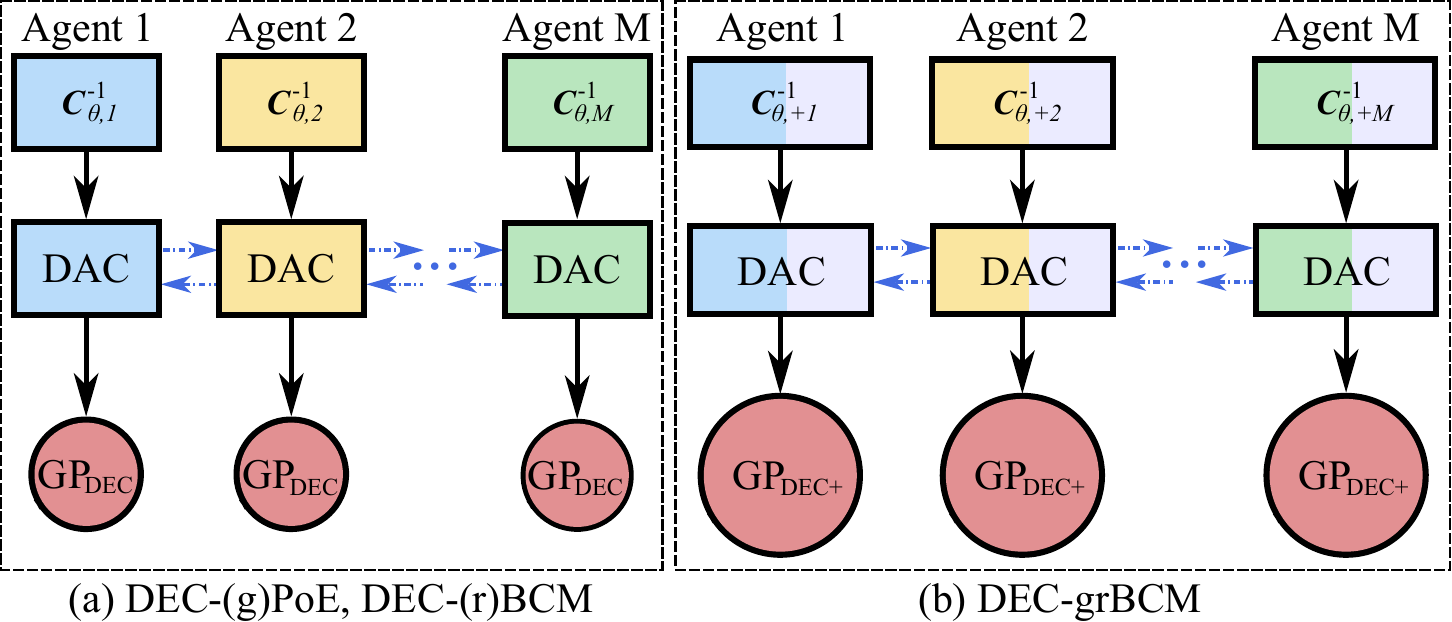}
	\centering
	\caption{The structure of the proposed DEC-PoE and DEC-BCM families. Blue dotted lines correspond to communication (strongly connected). Every agent implements discrete-time average consensus (DAC) methods.
	}
	\label{fig:dec_gp}
\end{figure}

\begin{remark}\label{rem:jor_neighbors}
The limit of the summation in \eqref{eq:jor} requires communication with all agents, as it is computed over $j$ other than $i$. This means that each agent must know the update value \eqref{eq:jor} of every other agent  $\{\boldsymbol{q}_j^{(s)}\}_{j\neq i}$, i.e. $j\neq i \implies j\in \mathcal{V}\backslash i$. That is a major restriction, as it imposes a strongly complete graph topology (Figure~\ref{fig:graph_mixed}). 
Although in \citep{cortes2009distributed,choi2015distributed,choudhary2017distributed} JOR is used for distributed networks, it is unrealistic for many network applications due to limited communication. However, we evaluate the use of JOR, as in some applications with small fleet size, strongly complete networks are feasible. For not strongly complete network topologies, distributed flooding is required at every iteration to obtain $\{\boldsymbol{q}_j^{(s)}\}_{j\neq i}$ and implement~\eqref{eq:jor}. 
The number of inter-agent communications for distributed flooding is the diameter of the graph $\mathrm{diam}(\mathcal{G})$. Thus, the total number of iterations yields $s_{\mathrm{JOR}} = \mathrm{diam}(\mathcal{G})s_{\mathrm{JOR}}^{\mathrm{end}}$. 
\end{remark} 

\begin{lemma}\label{lem:jor_convergence}
\citep[Theorem 2]{udwadia1992some} Let the graph $\mathcal{G}$ be time-invariant and strongly complete. If $\boldsymbol{H}$ is symmetric and PD, and $\omega<2/M$, then the JOR converges to the solution for any initialization $q_i^{(0)}$.
\end{lemma}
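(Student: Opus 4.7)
The plan is to recast the componentwise JOR update \eqref{eq:jor} as a matrix iteration and show that its iteration matrix is a strict contraction under the stated conditions. First I would let $\boldsymbol{D} \coloneqq \mathrm{diag}(h_{11},\ldots,h_{MM})$ and write \eqref{eq:jor} compactly as
\begin{equation*}
\boldsymbol{q}^{(s+1)} = (I_M - \omega \boldsymbol{D}^{-1}\boldsymbol{H})\boldsymbol{q}^{(s)} + \omega \boldsymbol{D}^{-1}\boldsymbol{b}.
\end{equation*}
Since $\boldsymbol{H}$ is nonsingular, the unique fixed point $\boldsymbol{q}^{\star}$ of this affine map satisfies $\boldsymbol{H}\boldsymbol{q}^{\star}=\boldsymbol{b}$, so the error $\boldsymbol{e}^{(s)} \coloneqq \boldsymbol{q}^{(s)}-\boldsymbol{q}^{\star}$ obeys $\boldsymbol{e}^{(s+1)} = \boldsymbol{T}\boldsymbol{e}^{(s)}$ with iteration matrix $\boldsymbol{T}\coloneqq I_M - \omega\boldsymbol{D}^{-1}\boldsymbol{H}$. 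Convergence from every initialization $q_i^{(0)}$ thus reduces to verifying that the spectral radius $\rho(\boldsymbol{T})<1$.

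Next I would exploit symmetry and positive definiteness of $\boldsymbol{H}$ to control the spectrum of $\boldsymbol{D}^{-1}\boldsymbol{H}$. Positive definiteness yields $h_{ii}>0$, so $\boldsymbol{D}^{1/2}$ is well-defined, and the similarity $\boldsymbol{D}^{-1}\boldsymbol{H} = \boldsymbol{D}^{-1/2}\boldsymbol{\Phi}\boldsymbol{D}^{1/2}$, with $\boldsymbol{\Phi}\coloneqq \boldsymbol{D}^{-1/2}\boldsymbol{H}\boldsymbol{D}^{-1/2}$, shows that $\boldsymbol{D}^{-1}\boldsymbol{H}$ shares its spectrum with the symmetric PD matrix $\boldsymbol{\Phi}$. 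Hence all eigenvalues $\lambda_i$ of $\boldsymbol{D}^{-1}\boldsymbol{H}$ are real and strictly positive, and the eigenvalues of $\boldsymbol{T}$ are exactly $1-\omega\lambda_i$.

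The last step would bound $\overline{\lambda}(\boldsymbol{D}^{-1}\boldsymbol{H})$ by $M$. The diagonal entries of $\boldsymbol{\Phi}$ are all $h_{ii}/h_{ii}=1$, so $\mathrm{tr}(\boldsymbol{\Phi})=M$; since $\boldsymbol{\Phi}$ is PD all its eigenvalues are positive, so any single one is dominated by the trace, giving $\overline{\lambda}(\boldsymbol{D}^{-1}\boldsymbol{H})=\overline{\lambda}(\boldsymbol{\Phi})\leq M$. Combined with $\omega<2/M$ (and $\omega>0$, implicit in its role as a relaxation parameter), this yields $0<\omega\lambda_i<2$ for every $i$, hence $|1-\omega\lambda_i|<1$ and $\rho(\boldsymbol{T})<1$, so $\|\boldsymbol{e}^{(s)}\|_2\to 0$ geometrically and $\boldsymbol{q}^{(s)}\to \boldsymbol{q}^{\star}$ from any $q_i^{(0)}$.

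The main obstacle will be the spectral bound: one has to pick the correct symmetric normalization (namely $\boldsymbol{D}^{-1/2}\boldsymbol{H}\boldsymbol{D}^{-1/2}$, not $\boldsymbol{H}$ or $\boldsymbol{D}^{-1}\boldsymbol{H}$ directly) so that the unit-diagonal observation exposes the clean trace bound $M$. Note that the strongly complete graph assumption plays no role in this algebraic argument; as discussed in Remark~\ref{rem:jor_neighbors}, it is invoked only so that the distributed implementation of \eqref{eq:jor} is feasible, i.e.\ so that every agent has access to the updates $\{q_j^{(s)}\}_{j\neq i}$ required by the summation.
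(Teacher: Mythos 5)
Your proof is correct. The paper does not supply its own argument for this lemma---it simply cites \citep[Theorem 2]{udwadia1992some}---and your derivation (rewriting \eqref{eq:jor} as the affine iteration with matrix $I_M-\omega\boldsymbol{D}^{-1}\boldsymbol{H}$, using the similarity to $\boldsymbol{D}^{-1/2}\boldsymbol{H}\boldsymbol{D}^{-1/2}$ to get real positive eigenvalues, and bounding the largest one by the trace $M$ of the unit-diagonal normalization) is exactly the standard argument underlying that cited result. Your closing observation that the strongly complete topology is needed only for implementability of the update, not for the convergence algebra, is also consistent with Remark~\ref{rem:jor_neighbors}.
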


\begin{lemma}\label{lem:jor_convergence_optimal}
\citep[Theorem 4]{udwadia1992some} Let the graph $\mathcal{G}$ be time-invariant and strongly complete. If $\boldsymbol{H}$ is symmetric and PD, and~$\omega^{\star} = 2/(\overline{\lambda}(\boldsymbol{R})+\underline{\lambda}(\boldsymbol{R}))$ where $\boldsymbol{R} = \mathrm{diag}(\boldsymbol{H})^{-1}\boldsymbol{H}$, then the JOR converges to the solution for any initialization $q_i^{(0)}$ with the optimal rate.
\end{lemma}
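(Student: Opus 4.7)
My plan is to recast \eqref{eq:jor} as a matrix fixed-point iteration and analyze the spectrum of the resulting iteration matrix. Let $\boldsymbol{D} = \mathrm{diag}(\boldsymbol{H})$, which is invertible since $h_{ii}\neq 0$, and note that the diagonal of a symmetric PD matrix is strictly positive, so $\boldsymbol{D}\succ 0$. Stacking \eqref{eq:jor} over $i$ gives
\begin{equation*}
\boldsymbol{q}^{(s+1)} = \bigl(\boldsymbol{I}_M - \omega \boldsymbol{D}^{-1}\boldsymbol{H}\bigr)\boldsymbol{q}^{(s)} + \omega \boldsymbol{D}^{-1}\boldsymbol{b} = \boldsymbol{T}_\omega \boldsymbol{q}^{(s)} + \omega \boldsymbol{D}^{-1}\boldsymbol{b},
\end{equation*}
with $\boldsymbol{T}_\omega = \boldsymbol{I}_M - \omega\boldsymbol{R}$. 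Since $\boldsymbol{H}$ is nonsingular, $\boldsymbol{q}^{\star} = \boldsymbol{H}^{-1}\boldsymbol{b}$ is the unique fixed point, and the iteration converges from any $\boldsymbol{q}^{(0)}$ if and only if the spectral radius satisfies $\rho(\boldsymbol{T}_\omega)<1$; moreover the asymptotic linear rate \emph{is} $\rho(\boldsymbol{T}_\omega)$. Hence the optimal-rate problem reduces to the scalar minimax $\min_{\omega}\rho(\boldsymbol{I}_M - \omega\boldsymbol{R})$.

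Next I would analyze the spectrum of $\boldsymbol{R}$. Although $\boldsymbol{R}=\boldsymbol{D}^{-1}\boldsymbol{H}$ is not symmetric in general, it is similar to $\boldsymbol{D}^{-1/2}\boldsymbol{H}\boldsymbol{D}^{-1/2}$ via $\boldsymbol{R} = \boldsymbol{D}^{-1/2}\bigl(\boldsymbol{D}^{-1/2}\boldsymbol{H}\boldsymbol{D}^{-1/2}\bigr)\boldsymbol{D}^{1/2}$. The matrix $\boldsymbol{D}^{-1/2}\boldsymbol{H}\boldsymbol{D}^{-1/2}$ is symmetric and, because $\boldsymbol{H}\succ 0$ and $\boldsymbol{D}^{-1/2}$ is nonsingular, also positive definite by Sylvester's law of inertia. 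Therefore $\boldsymbol{R}$ has real, strictly positive eigenvalues $0<\underline{\lambda}(\boldsymbol{R})\le \lambda_i(\boldsymbol{R})\le \overline{\lambda}(\boldsymbol{R})$. Since the eigenvalues of $\boldsymbol{T}_\omega$ are exactly $\{1-\omega\lambda_i(\boldsymbol{R})\}$, we get
\begin{equation*}
\rho(\boldsymbol{T}_\omega) = \max_i\,\bigl|1-\omega\lambda_i(\boldsymbol{R})\bigr| = \max\bigl\{\,|1-\omega\underline{\lambda}(\boldsymbol{R})|,\;|1-\omega\overline{\lambda}(\boldsymbol{R})|\,\bigr\},
\end{equation*}
where the second equality uses monotonicity of $\omega\mapsto 1-\omega\lambda$ in $\lambda$ for fixed $\omega>0$ to reduce the max over all $\lambda_i$ to the two extremes.

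Finally, I would solve this one-dimensional minimax by a standard equioscillation (Chebyshev) argument. The function $\omega\mapsto 1-\omega\underline{\lambda}(\boldsymbol{R})$ decreases from $1$, while $\omega\mapsto 1-\omega\overline{\lambda}(\boldsymbol{R})$ decreases faster, crossing zero first; the outer max is minimized precisely when the two branches are equal with opposite signs, i.e.\ $1-\omega^{\star}\underline{\lambda}(\boldsymbol{R}) = -\bigl(1-\omega^{\star}\overline{\lambda}(\boldsymbol{R})\bigr)$. Solving yields
\begin{equation*}
\omega^{\star} = \frac{2}{\overline{\lambda}(\boldsymbol{R})+\underline{\lambda}(\boldsymbol{R})}, \qquad \rho(\boldsymbol{T}_{\omega^{\star}}) = \frac{\overline{\lambda}(\boldsymbol{R})-\underline{\lambda}(\boldsymbol{R})}{\overline{\lambda}(\boldsymbol{R})+\underline{\lambda}(\boldsymbol{R})}<1,
\end{equation*}
which gives both convergence from any initialization $q_i^{(0)}$ and the optimal (i.e.\ minimum spectral radius) rate. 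The main obstacle I anticipate is the spectral step: rigorously converting the non-symmetric $\boldsymbol{R}$ into a symmetric PD problem through the congruence/similarity with $\boldsymbol{D}^{\pm 1/2}$ so that $\underline{\lambda}(\boldsymbol{R})$ and $\overline{\lambda}(\boldsymbol{R})$ are guaranteed to be real, positive, and to capture the full spectrum; the subsequent minimax optimization is routine. Note that the strongly complete graph hypothesis plays no role in this convergence argument—it is only needed, as discussed in the remark preceding the lemma, so that \eqref{eq:jor} can actually be executed.
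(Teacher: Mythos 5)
Your argument is correct and complete. The paper itself offers no proof of this lemma---it is imported verbatim as Theorem~4 of the cited Udwadia reference---so there is no in-paper derivation to compare against; what you have written is the classical self-contained argument that the citation stands in for. Your reduction of \eqref{eq:jor} to the stationary affine iteration $\boldsymbol{q}^{(s+1)} = (\boldsymbol{I}_M-\omega\boldsymbol{R})\boldsymbol{q}^{(s)}+\omega\boldsymbol{D}^{-1}\boldsymbol{b}$ is the right normal form, the similarity $\boldsymbol{R}=\boldsymbol{D}^{-1/2}\bigl(\boldsymbol{D}^{-1/2}\boldsymbol{H}\boldsymbol{D}^{-1/2}\bigr)\boldsymbol{D}^{1/2}$ correctly delivers a real positive spectrum (this is exactly the step that makes $\overline{\lambda}(\boldsymbol{R})$ and $\underline{\lambda}(\boldsymbol{R})$ meaningful as used in the lemma and in Algorithm~\ref{alg:dec_npae_star}), and the equioscillation solution of $\min_\omega\max\{|1-\omega\underline{\lambda}|,|1-\omega\overline{\lambda}|\}$ yields precisely $\omega^\star=2/(\overline{\lambda}(\boldsymbol{R})+\underline{\lambda}(\boldsymbol{R}))$ with optimal spectral radius $(\overline{\lambda}-\underline{\lambda})/(\overline{\lambda}+\underline{\lambda})<1$. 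Your closing observation that the strongly complete hypothesis is purely an implementability condition, not a convergence condition, is also consistent with Remark~\ref{rem:jor_neighbors}. No gaps.
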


\begin{remark}\label{rem:jor_jor_star}
The difference between Lemma~\ref{lem:jor_convergence} and \ref{lem:jor_convergence_optimal} is that the latter employs the optimal relaxation factor $\omega^{\star}$ which is characterized by the eigenvalues of $\boldsymbol{R}$. In principle, the smaller the relaxation factor $\omega$ the slower the convergence speed \citep{guo2015jor}. Since $\omega^*>2/M$, the optimal relaxation leads to faster convergence of JOR to the solution. To compute $\omega^{\star}$ in a network of agents, additional communication is required to distributively estimate the maximum and minimum eigenvalues of $\boldsymbol{R}$. However, the sufficient condition for $\omega$ of Lemma \ref{lem:jor_convergence} can be locally computed with no communication. Let the distributed method for the computation of $\omega^{\star}$ entail $s^{\textrm{end}}_{\omega^{\star}}$ iterations, JOR with $\omega$ from Lemma \ref{lem:jor_convergence} converge after $s^{\textrm{end}}_{\textrm{JOR}}$ iterations, and JOR with $\omega^{\star}$ from from Lemma \ref{lem:jor_convergence_optimal} converge after $s^{\textrm{end}}_{\textrm{JOR}^{\star}}$ iterations. Then, $\omega^{\star}$ is communication-wise more efficient in decentralized networks when $s^{\textrm{end}}_{\omega^{\star}}+s^{\textrm{end}}_{\textrm{JOR}^{\star}}<s^{\textrm{end}}_{\textrm{JOR}}$. 
\end{remark}

\begin{table}[!t]
\caption{Communication Complexity of Decentralized GP Aggregations}
\centering
\begin{threeparttable}
{\begin{tabular}{ l c c }
\toprule

 Method & Graph & Communication Complexity\\
 
 \midrule 
  
  DEC-PoE  &  SC & $\mathcal{O}(2\chi)$\\ 
 
  DEC-gPoE  & SC & $\mathcal{O}(2\chi)$\\
 
  DEC-BCM  &   SC & $\mathcal{O}(2\chi)$\\
  
  DEC-rBCM & SC & $\mathcal{O}(3\chi)$\\
  
  DEC-grBCM  & SC &  $\mathcal{O}(3\chi)$\\
 
  DEC-NPAE   &  SCC &  $\mathcal{O}(2Ms^{\textrm{end}}_{\textrm{JOR}}+2\chi+M\xi)$\\
  
  DEC-NPAE$^{\star}$ & SCC &  $\mathcal{O}(2M(s^{\textrm{end}}_{\textrm{JOR}^{\star}}+s^{\textrm{end}}_{\textrm{PM}})+2\chi +M\xi +M^2)$\\

\bottomrule
\end{tabular}
\begin{tablenotes}[para,flushleft]
      \item 
      $\chi = s^{\textrm{end}}_{\textrm{DAC}}\mathrm{card}(\mathcal{N}_i)$, $\xi = N^2/M^2+D(N/M)$, SC: strongly connected, SCC: strongly complete connected.
    \end{tablenotes}
    }\label{tab:comms_dec_aggrations}
    \end{threeparttable}
\end{table}

\textbf{\textit{PM}}: The optimal relaxation factor $\omega^{\star}$ involves the maximum eigenvalue $\overline{\lambda}(\boldsymbol{R})$ and minimum eigenvalue $\underline{\lambda}(\boldsymbol{R})$ (Lemma~\ref{lem:jor_convergence_optimal}). We employ the power method (PM) to compute $\overline{\lambda}(\boldsymbol{R})$ and the inverse power method (IPM) to compute $\underline{\lambda}(\boldsymbol{R})$. The PM is a two step iterative algorithm that follows,
\begin{subequations}
\begin{alignat}{3}\label{eq:pm_step1}
    \boldsymbol{g}^{(s+1)} &  = \boldsymbol{R} \boldsymbol{e}^{(s)}\\ \label{eq:pm_step2}
    \boldsymbol{e}^{(s+1)} & = \frac{1}{\|\boldsymbol{g}^{(s+1)}\|_{\infty}} \boldsymbol{g}^{(s+1)},
\end{alignat}
\label{eq:pm}
\end{subequations}
where $\|\cdot\|_{\infty}$ denotes the infinity norm. As the PM algorithm converges $\|\boldsymbol{e}^{(s)} -\boldsymbol{e}^{(s-1)} \|_2 \rightarrow 0$, the infinity norm approximates the dominant eigenvalue $\|\boldsymbol{g}^{(s)}\|_{\infty} \approx \overline{\lambda}(\boldsymbol{R})$. After obtaining $\overline{\lambda}(\boldsymbol{R})$, we formulate the spectral shift of $\boldsymbol{R}$, that is $\boldsymbol{B} = \boldsymbol{R}-\overline{\lambda}(\boldsymbol{R})I_M$. The IPM is the application of PM \eqref{eq:pm} on $\boldsymbol{B}$. Then, we derive the minimum eigenvalue as $\underline{\lambda}(\boldsymbol{R}) = |\overline{\lambda}(\boldsymbol{B})-\overline{\lambda}(\boldsymbol{R})|$. In order to obtain both $\overline{\lambda}(\boldsymbol{R})$ and $\underline{\lambda}(\boldsymbol{R})$, we need to execute the PM algorithm \eqref{eq:pm} two sequential times. Let the first PM algorithm to converge after $s^{\textrm{end}}_{\textrm{PM}}$ iterations and the second after $s^{\textrm{end}}_{\textrm{IPM}}$ iterations. The use of the optimal relaxation is communication-wise more efficient if $s^{\textrm{end}}_{\textrm{PM}}+s^{\textrm{end}}_{\textrm{IPM}}+s^{\textrm{end}}_{\textrm{JOR}^{\star}}<s^{\textrm{end}}_{\textrm{JOR}}$ (Remark~\ref{rem:jor_jor_star}). Note that if $\boldsymbol{H}$ is symmetric, then $\boldsymbol{R} = \mathrm{diag}(\boldsymbol{H})^{-1}\boldsymbol{H}$ is also symmetric, as the only changes occur in the diagonal elements, with $\mathrm{diag}(\boldsymbol{H})^{-1} = \{\boldsymbol{H}_{ii}^{-1}\}_{i=1}^M$.

\begin{lemma}\label{lem:pm_convergencel}
\citep[Chapter 8]{golub2013matrix} Let the graph $\mathcal{G}$ be time-invariant and strongly complete. If $\boldsymbol{H}$ is symmetric, then the PM converges to the dominant real eigenvalue $\overline{\lambda}(\boldsymbol{R})$ with convergence rate $\mathcal{O}((\lambda_2/\overline{\lambda})^{s^{\textrm{end}}_{\textrm{PM}}})$, where $\lambda_2$ is the second largest eigenvalue.
\end{lemma}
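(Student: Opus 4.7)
The plan is to invoke the standard spectral argument for the power method, adapted to the symmetric setting that the lemma assumes. Since $\boldsymbol{H}$ is symmetric and $\mathrm{diag}(\boldsymbol{H})^{-1}$ only rescales the diagonal of $\boldsymbol{H}$, I would first verify that $\boldsymbol{R} = \mathrm{diag}(\boldsymbol{H})^{-1}\boldsymbol{H}$ is also symmetric (as already noted in the text preceding the lemma), so that $\boldsymbol{R}$ admits a real spectrum $\overline{\lambda} = \lambda_1 \geq \lambda_2 \geq \ldots \geq \lambda_M$ with a corresponding orthonormal eigenbasis $\{\boldsymbol{v}_k\}_{k=1}^M$.

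Next, I would expand the initial vector $\boldsymbol{e}^{(0)}$ in this eigenbasis as $\boldsymbol{e}^{(0)} = \sum_{k=1}^M c_k \boldsymbol{v}_k$ and show inductively from the update \eqref{eq:pm_step1}--\eqref{eq:pm_step2} that
\begin{equation*}
    \boldsymbol{e}^{(s)} = \frac{\boldsymbol{R}^s \boldsymbol{e}^{(0)}}{\|\boldsymbol{R}^s \boldsymbol{e}^{(0)}\|_{\infty}} = \frac{\lambda_1^s\left( c_1 \boldsymbol{v}_1 + \sum_{k=2}^M c_k (\lambda_k/\lambda_1)^s \boldsymbol{v}_k \right)}{\left\|\lambda_1^s\left( c_1 \boldsymbol{v}_1 + \sum_{k=2}^M c_k (\lambda_k/\lambda_1)^s \boldsymbol{v}_k\right)\right\|_{\infty}}.
\end{equation*}
Under the generic assumption $c_1 \neq 0$ (standard for the power method; otherwise the iterate must be perturbed), every term for $k \geq 2$ contracts by a factor $|\lambda_k/\lambda_1| \leq |\lambda_2/\lambda_1| < 1$ per iteration, so $\boldsymbol{e}^{(s)} \to \pm \boldsymbol{v}_1/\|\boldsymbol{v}_1\|_{\infty}$ with residual of order $(\lambda_2/\overline{\lambda})^{s}$.

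From there I would deduce the eigenvalue estimate: applying $\boldsymbol{R}$ once more yields $\boldsymbol{g}^{(s+1)} = \boldsymbol{R}\boldsymbol{e}^{(s)}$, and substituting the expansion shows $\|\boldsymbol{g}^{(s+1)}\|_{\infty} = |\lambda_1| + \mathcal{O}((\lambda_2/\overline{\lambda})^{s})$. The claimed rate $\mathcal{O}((\lambda_2/\overline{\lambda})^{s_{\textrm{PM}}^{\textrm{end}}})$ then follows directly. The main obstacle (and the only subtlety worth spelling out) is the strict spectral-gap requirement $|\lambda_2| < |\overline{\lambda}|$: if $\boldsymbol{R}$ has a repeated dominant eigenvalue or eigenvalues equal in magnitude but opposite in sign, the iteration need not converge to a single eigenvector, and one must either invoke a deflation argument or restrict to matrices $\boldsymbol{H}$ arising from the GP covariance construction, which in practice inherit a strict gap from the positive definiteness of $\boldsymbol{C}_{\theta,A}$. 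Since this lemma is cited verbatim from \citep[Chapter 8]{golub2013matrix}, I would state this gap condition explicitly and then refer to the textbook for the full argument rather than reproducing it.
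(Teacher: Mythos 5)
Your proposal is correct, and it is essentially the argument the paper is pointing to: the paper supplies no proof of this lemma at all, only the citation to \citep[Chapter 8]{golub2013matrix}, so what you have written is the standard eigenbasis-expansion proof that the citation stands in for, including the one hypothesis the lemma leaves implicit (a strict spectral gap $|\lambda_2|<|\overline{\lambda}|$ and a nonzero component $c_1$ of the initial vector along the dominant eigenvector), which you correctly identify as the only real subtlety.

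One inherited inaccuracy is worth repairing rather than repeating: the claim that $\boldsymbol{R}=\mathrm{diag}(\boldsymbol{H})^{-1}\boldsymbol{H}$ is symmetric whenever $\boldsymbol{H}$ is (which you take from the text preceding the lemma) is false in general, since the $(i,j)$ entry becomes $h_{ij}/h_{ii}$ while the $(j,i)$ entry becomes $h_{ij}/h_{jj}$, and these differ unless the diagonal of $\boldsymbol{H}$ is constant. This does not break your proof: with $\boldsymbol{D}=\mathrm{diag}(\boldsymbol{H})$ positive, $\boldsymbol{R}$ is similar to the symmetric matrix $\boldsymbol{D}^{-1/2}\boldsymbol{H}\boldsymbol{D}^{-1/2}$, so it still has a real spectrum and a complete (though not orthonormal) eigenbasis, and the expansion $\boldsymbol{e}^{(0)}=\sum_k c_k\boldsymbol{v}_k$ together with the contraction of the $k\geq 2$ modes by $|\lambda_k/\lambda_1|$ goes through verbatim, yielding the same rate $\mathcal{O}((\lambda_2/\overline{\lambda})^{s^{\textrm{end}}_{\textrm{PM}}})$. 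Stating the argument via this diagonal similarity would make your proof robust to the paper's misstatement while changing nothing else.
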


\newfloat{algorithm}{!t}{lop}
\begin{algorithm}
\caption{DEC-PoE}\label{alg:dec_poe} 
{
\textbf{Input:} $\mathcal{D}_i(\boldsymbol{X}_i,\boldsymbol{y}_i)$, $\hat{\boldsymbol{\theta}}$, $\boldsymbol{C}_{\theta,i}^{-1}$,  $\mathcal{N}_i$,  $k$, $M$, $\boldsymbol{x}_*$, $\Delta$

\textbf{Output:} ${\mu}_{\textrm{DEC-PoE}}$, ${\sigma}_{\textrm{DEC-PoE}}^{-2}$
}
\begin{algorithmic}[1]

\State $\epsilon = 1/\Delta$

\ForEach {$i \in \mathcal V $}

\State $\mu_i \gets \texttt{localMean}(\boldsymbol{x}_*,k, \hat{\boldsymbol{\theta}},\mathcal{D}_i, \boldsymbol{C}_{\theta,i}^{-1})$ \eqref{eq:local_mean} 
\State $\sigma_i^{-2} \gets \texttt{localVariance}(\boldsymbol{x}_*,k, \hat{\boldsymbol{\theta}},\mathcal{D}_i, \boldsymbol{C}_{\theta,i}^{-1}, )$ \eqref{eq:local_variance} 

\State initialize $w_{\mu,i}^{(0)} =\beta_i\sigma_i^{-2}\mu_i$, $w_{\sigma^{-2},i}^{(0)} =\beta_i\sigma_i^{-2}$, $\beta_i=1$

\Repeat
\State communicate ${w}_{\mu,i}^{(s)}$, ${w}_{\sigma^{-2},i}^{(s)}$ to agents in $\mathcal{N}_i$
\State $w_{\mu,i}^{(s+1)} \gets \texttt{DAC}(\epsilon,w_{\mu,i}^{(s)},\{\boldsymbol{w}_{\mu,j}^{(s)}\}_{j\in \mathcal{N}_i},\mathcal{N}_i  )$ \eqref{eq:dac} \Comment{DAC1}
\State $w_{\sigma^{-2},i}^{(s+1)} \gets \texttt{DAC}(\epsilon,w_{\sigma^{-2},i}^{(s)},\{\boldsymbol{w}_{\sigma^{-2},j}^{(s)}\}_{j\in \mathcal{N}_i},\mathcal{N}_i )$ \eqref{eq:dac} \Comment{DAC2}
\Until {{maximin stopping criterion}}
\State ${\sigma}_{\textrm{DEC-PoE}}^{-2} =Mw_{\sigma^{-2},i}^{(\textrm{end})}$ \eqref{eq:variance_poe}
\State ${\mu}_{\textrm{DEC-PoE}}  = {\sigma}_{\textrm{DEC-PoE}}^{2}Mw_{\mu,i}^{(\textrm{end})}$ \eqref{eq:mean_poe}

\EndFor
\end{algorithmic}
\end{algorithm}

\newfloat{algorithm}{!t}{lop}
\begin{algorithm}
\caption{DEC-gPoE}\label{alg:dec_gpoe} 
{
\textbf{Input:} $\mathcal{D}_i(\boldsymbol{X}_i,\boldsymbol{y}_i)$, $\hat{\boldsymbol{\theta}}$, $\boldsymbol{C}_{\theta,i}^{-1}$,  $\mathcal{N}_i$,  $k$, $M$, $\boldsymbol{x}_*$, $\Delta$

\textbf{Output:} ${\mu}_{\textrm{DEC-gPoE}}$, ${\sigma}_{\textrm{DEC-gPoE}}^{-2}$
}
\begin{algorithmic}[1]

\State Identical to Algorithm~\ref{alg:dec_poe} with $\beta_i = 1/M$ instead of $\beta_i = 1$ (line 5)
\end{algorithmic}
\end{algorithm}
\subsubsection{DEC-PoE Family} The decentralized PoE (DEC-PoE) method makes use of two DAC algorithms (FIgure~\ref{fig:dec_gp}-(a)). The first DAC computes the average $(1/M)\sum_{i=1}^M\beta_i\sigma_i^{-2}$ and the second 
$(1/M)\sum_{i=1}^M\beta_i\sigma_i^{-2}\mu_i$, where $\beta_i=1$. At every iteration of DAC each agent communicates both computed values $w_{\mu,i}^{(s)} $, $w_{\sigma^{-2},i}^{(s)} $ to its neighbors $\mathcal{N}_i$. After convergence, each DAC average is multiplied by the number of nodes $M$ and follow \eqref{eq:mean_poe}, \eqref{eq:variance_poe} to recover the DEC-PoE prediction mean and precision. The implementation details are given in Algorithm~\ref{alg:dec_poe}. The time and space complexity are identical to the local time and space complexity of the PoE family as listed in Table~\ref{tab:complexityPrediction}. Let  $s^{\textrm{end}}_{\textrm{DAC}} $ be the maximum number of iterations of the two DAC to converge. The total communications are $\mathcal{O}(2s^{\textrm{end}}_{\textrm{DAC}}\mathrm{card}(\mathcal{N}_i))$ for all~$i\in \mathcal{V}$ (Table~\ref{tab:comms_dec_aggrations}). 

Next, we form the decentralized gPoE (DEC-gPoE) (Figure~\ref{fig:dec_gp}-(a)). The DEC-gPoE is identical to the DEC-PoE, but $\beta_i=1/M$ instead of $\beta_i=1$ (Algorithm~\ref{alg:dec_gpoe}). The time, space, and communication complexity are identical to the DEC-PoE. Both DEC-PoE and DEC-gPoE methods address Problem~\ref{pr:prediction1}.

\newfloat{algorithm}{!t}{lop}
\begin{algorithm}
\caption{DEC-BCM}\label{alg:dec_bcm} 
{
\textbf{Input:} $\mathcal{D}_i(\boldsymbol{X}_i,\boldsymbol{y}_i)$, $\hat{\boldsymbol{\theta}}$, $\boldsymbol{C}_{\theta,i}^{-1}$,  $\mathcal{N}_i$,  $k$, $M$, $\boldsymbol{x}_*$, $\Delta$

\textbf{Output:} ${\mu}_{\textrm{DEC-BCM}}$, ${\sigma}_{\textrm{DEC-BCM}}^{-2}$
}
\begin{algorithmic}[1]

\State $\epsilon = 1/\Delta$

\ForEach {$i \in \mathcal V $}

\State $\mu_i \gets \texttt{localMean}(\boldsymbol{x}_*,k, \hat{\boldsymbol{\theta}},\mathcal{D}_i, \boldsymbol{C}_{\theta,i}^{-1})$ \eqref{eq:local_mean} 
\State $\sigma_i^{-2} \gets \texttt{localVariance}(\boldsymbol{x}_*,k, \hat{\boldsymbol{\theta}},\mathcal{D}_i, \boldsymbol{C}_{\theta,i}^{-1}, )$ \eqref{eq:local_variance}
\State $\sigma_{**}^2 = k(\boldsymbol{x}_*, \boldsymbol{x}_*)$

\State initialize $w_{\mu,i}^{(0)} =\beta_i\sigma_i^{-2}\mu_i$, $w_{\sigma^{-2},i}^{(0)} =\beta_i\sigma_i^{-2}$, $\beta_i=1$

\Repeat
\State communicate ${w}_{\mu,i}^{(s)}$, ${w}_{\sigma^{-2},i}^{(s)}$ to agents in $\mathcal{N}_i$
\State $w_{\mu,i}^{(s+1)} \gets \texttt{DAC}(\epsilon,w_{\mu,i}^{(s)},\{\boldsymbol{w}_{\mu,j}^{(s)}\}_{j\in \mathcal{N}_i},\mathcal{N}_i  )$ \eqref{eq:dac} \Comment{DAC1}
\State $w_{\sigma^{-2},i}^{(s+1)} \gets \texttt{DAC}(\epsilon,w_{\sigma^{-2},i}^{(s)},\{\boldsymbol{w}_{\sigma^{-2},j}^{(s)}\}_{j\in \mathcal{N}_i},\mathcal{N}_i )$ \eqref{eq:dac} \Comment{DAC2}
\Until {{maximin stopping criterion}}
\State ${\sigma}_{\textrm{DEC-BCM}}^{-2} =Mw_{\sigma^{-2},i}^{(\textrm{end})} + (1-\sum_{i=1}^M\beta_i)\sigma_{**}^{-2}$ \eqref{eq:variance_bcm}
\State ${\mu}_{\textrm{DEC-BCM}}  = {\sigma}_{\textrm{DEC-BCM}}^{2}Mw_{\mu,i}^{(\textrm{end})}$ \eqref{eq:mean_bcm}

\EndFor
\end{algorithmic}
\end{algorithm}

\newfloat{algorithm}{!t}{lop}
\begin{algorithm}
\caption{DEC-rBCM}\label{alg:dec_rbcm} 
{
\textbf{Input:} $\mathcal{D}_i(\boldsymbol{X}_i,\boldsymbol{y}_i)$, $\hat{\boldsymbol{\theta}}$, $\boldsymbol{C}_{\theta,i}^{-1}$,  $\mathcal{N}_i$,  $k$, $M$, $\boldsymbol{x}_*$, $\Delta$

\textbf{Output:} ${\mu}_{\textrm{DEC-rBCM}}$, ${\sigma}_{\textrm{DEC-rBCM}}^{-2}$
}
\begin{algorithmic}[1]

\State $\epsilon = 1/\Delta$

\ForEach {$i \in \mathcal V $}

\State $\mu_i \gets \texttt{localMean}(\boldsymbol{x}_*,k, \hat{\boldsymbol{\theta}},\mathcal{D}_i, \boldsymbol{C}_{\theta,i}^{-1})$ \eqref{eq:local_mean} 
\State $\sigma_i^{-2} \gets \texttt{localVariance}(\boldsymbol{x}_*,k, \hat{\boldsymbol{\theta}},\mathcal{D}_i, \boldsymbol{C}_{\theta,i}^{-1} )$ \eqref{eq:local_variance}
\State $\sigma_{**}^2 = k(\boldsymbol{x}_*, \boldsymbol{x}_*)$

\State initialize $w_{\mu,i}^{(0)} =\beta_i\sigma_i^{-2}\mu_i$, $w_{\sigma^{-2},i}^{(0)} =\beta_i\sigma_i^{-2}$, $w_{\beta_i}^{(0)} =\beta_i$, \quad \quad $\beta_i = 0.5[\log \sigma_{**}^2 - \log \sigma_i^2]$

\Repeat
\State communicate ${w}_{\mu,i}^{(s)}$, ${w}_{\sigma^{-2},i}^{(s)}$, ${w}_{\beta_i}^{(s)}$ to agents in $\mathcal{N}_i$
\State $w_{\mu,i}^{(s+1)} \gets \texttt{DAC}(\epsilon,w_{\mu,i}^{(s)},\{\boldsymbol{w}_{\mu,j}^{(s)}\}_{j\in \mathcal{N}_i},\mathcal{N}_i  )$ \eqref{eq:dac} \Comment{DAC1}
\State $w_{\sigma^{-2},i}^{(s+1)} \gets \texttt{DAC}(\epsilon,w_{\sigma^{-2},i}^{(s)},\{\boldsymbol{w}_{\sigma^{-2},j}^{(s)}\}_{j\in \mathcal{N}_i},\mathcal{N}_i )$ \eqref{eq:dac} \Comment{DAC2}
\State $w_{\beta_i}^{(s+1)} \gets \texttt{DAC}(\epsilon,w_{\beta_i}^{(s)},\{\boldsymbol{w}_{\beta_j}^{(s)}\}_{j\in \mathcal{N}_i},\mathcal{N}_i )$ \eqref{eq:dac} \Comment{DAC3}
\Until {{maximin stopping criterion}}
\State ${\sigma}_{\textrm{DEC-rBCM}}^{-2} =Mw_{\sigma^{-2},i}^{(\textrm{end})} + (1-Mw_{\beta_i}^{(\textrm{end})})\sigma_{**}^{-2}$ \eqref{eq:variance_bcm}
\State ${\mu}_{\textrm{DEC-rBCM}}  = {\sigma}_{\textrm{DEC-rBCM}}^{2}Mw_{\mu,i}^{(\textrm{end})}$ \eqref{eq:mean_bcm}

\EndFor
\end{algorithmic}
\end{algorithm}
\subsubsection{DEC-BCM Family} The decentralized BCM (DEC-BCM) method employs two DAC algorithms (Figure~\ref{fig:dec_gp}-(a)). The first DAC computes the average $(1/M)\sum_{i=1}^M\beta_i\sigma_i^{-2}$ and the second $(1/M)\sum_{i=1}^M\beta_i\sigma_i^{-2}\mu_i$, where $\beta_i=1$. At every iteration of DAC each agent communicates both computed values $w_{\mu,i}^{(s)} $, $w_{\sigma^{-2},i}^{(s)} $ to its neighbors $\mathcal{N}_i$. After convergence, each DAC average is multiplied by the number of nodes $M$ and follow \eqref{eq:mean_bcm}, \eqref{eq:variance_bcm} to recover the DEC-BCM mean and precision. The implementation details are provided in Algorithm~\ref{alg:dec_bcm}. The time, space, and communication complexity are identical to the DEC-PoE family. The DEC-BCM addresses Problem~\ref{pr:prediction1}.

We introduce the decentralized rBCM (DEC-rBCM) technique that employs three DAC algorithms to compute the averages $(1/M)\sum_{i=1}^M\beta_i\sigma_i^{-2}$, $(1/M)\sum_{i=1}^M\beta_i\sigma_i^{-2}\mu_i$, and $(1/M)\sum_{i=1}^M\beta_i$, where $\beta_i = 0.5[\log \sigma_{**}^2 - \log \sigma_i^2]$. At every iteration of DAC each agent communicates ${w}_{\mu,i}^{(s)}$, ${w}_{\sigma^{-2},i}^{(s)}$, ${w}_{\beta_i}^{(s)}$ to its neighbors $\mathcal{N}_i$. After convergence, each DAC average is multiplied by the number of nodes $M$ and follow \eqref{eq:mean_bcm}, \eqref{eq:variance_bcm} to recover the DEC-rBCM prediction mean and precision. Implementation details are given in Algorithm~\ref{alg:dec_rbcm}. The local time and space complexity are identical to the rBCM (Table~\ref{tab:complexityPrediction}). Let  $s^{\textrm{end}}_{\textrm{DAC}} $ be the maximum number of iterations of the three DAC to converge. The total communications are $\mathcal{O}(3s^{\textrm{end}}_{\textrm{DAC}}\mathrm{card}(\mathcal{N}_i))$ for all $i\in \mathcal{V}$ (Table~\ref{tab:comms_dec_aggrations}). The DEC-rBCM addresses Problem~\ref{pr:prediction1}.

\newfloat{algorithm}{!t}{lop}
\begin{algorithm}
\caption{DEC-grBCM}\label{alg:dec_grbcm} 
{
\textbf{Input:} $\mathcal{D}_{+i}(\boldsymbol{X}_{+i},\boldsymbol{y}_{+i})$, $\hat{\boldsymbol{\theta}}$, $\boldsymbol{C}_{\theta,+i}^{-1}$,  $\mathcal{N}_i$,  $k$, $M$, $\boldsymbol{x}_*$, $\Delta$

\textbf{Output:} ${\mu}_{\textrm{DEC-grBCM}}$, ${\sigma}_{\textrm{DEC-grBCM}}^{-2}$
}
\begin{algorithmic}[1]

\State $\epsilon = 1/\Delta$

\ForEach {$i \in \mathcal V $}

\State $\mu_{+i} \gets \texttt{localMean}(\boldsymbol{x}_*,k, \hat{\boldsymbol{\theta}},\mathcal{D}_{+i}, \boldsymbol{C}_{\theta,+i}^{-1})$ \eqref{eq:local_mean} 
\State $\sigma_{+i}^{-2} \gets \texttt{localVariance}(\boldsymbol{x}_*,k, \hat{\boldsymbol{\theta}},\mathcal{D}_{+i}, \boldsymbol{C}_{\theta,+i}^{-1} )$ \eqref{eq:local_variance}
\State $\sigma_{\textrm{c}}^2 = k(\boldsymbol{X}_{\textrm{c}}, \boldsymbol{X}_{\textrm{c}})$

\State initialize $w_{\mu,i}^{(0)} =\beta_i\sigma_{+i}^{-2}\mu_{+i}$, $w_{\sigma^{-2},i}^{(0)} =\beta_i\sigma_{+i}^{-2}$, $w_{\beta_i}^{(0)} =\beta_i$, \quad \quad $\beta_i = 0.5[\log \sigma_{\textrm{c}}^2 - \log \sigma_{+i}^2]$

\Repeat
\State communicate ${w}_{\mu,i}^{(s)}$, ${w}_{\sigma^{-2},i}^{(s)}$, ${w}_{\beta_i}^{(s)}$ to agents in $\mathcal{N}_i$
\State $w_{\mu,i}^{(s+1)} \gets \texttt{DAC}(\epsilon,w_{\mu,i}^{(s)},\{\boldsymbol{w}_{\mu,j}^{(s)}\}_{j\in \mathcal{N}_i},\mathcal{N}_i  )$ \eqref{eq:dac} \Comment{DAC1}
\State $w_{\sigma^{-2},i}^{(s+1)} \gets \texttt{DAC}(\epsilon,w_{\sigma^{-2},i}^{(s)},\{\boldsymbol{w}_{\sigma^{-2},j}^{(s)}\}_{j\in \mathcal{N}_i},\mathcal{N}_i )$ \eqref{eq:dac} \Comment{DAC2}
\State $w_{\beta_i}^{(s+1)} \gets \texttt{DAC}(\epsilon,w_{\beta_i}^{(s)},\{\boldsymbol{w}_{\beta_j}^{(s)}\}_{j\in \mathcal{N}_i},\mathcal{N}_i )$ \eqref{eq:dac} \Comment{DAC3}
\Until {{maximin stopping criterion}}
\State ${\sigma}_{\textrm{DEC-grBCM}}^{-2} =Mw_{\sigma^{-2},i}^{(\textrm{end})} + (1-Mw_{\beta_i}^{(\textrm{end})})\sigma_{\textrm{c}}^{-2}$ \eqref{eq:variance_grbcm}
\State ${\mu}_{\textrm{DEC-grBCM}}  = {\sigma}_{\textrm{DEC-grBCM}}^{2}(Mw_{\mu,i}^{(\textrm{end})} - (Mw_{\beta_i}^{(\textrm{end})} -1)\sigma_{\textrm{c}}^{-2}\mu_{\textrm{c}})$ \eqref{eq:mean_grbcm}

\EndFor
\end{algorithmic}
\end{algorithm}

We propose the decentralized grBCM (DEC-grBCM) method which employs three DAC algorithms (Figure~\ref{fig:dec_gp}-(b)) to compute the averages $(1/M)\sum_{i=1}^M\beta_i\sigma_{+i}^{-2}$, $(1/M)\sum_{i=1}^M\beta_i\sigma_{+i}^{-2}\mu_i$, and $(1/M)\sum_{i=1}^M\beta_i$, where $\beta_i = 0.5[\log \sigma_{\textrm{c}}^2 - \log \sigma_{+i}^2]$. At every iteration of DAC each agent communicates ${w}_{\mu,i}^{(s)}$, ${w}_{\sigma^{-2},i}^{(s)}$, ${w}_{\beta_i}^{(s)}$ to its neighbors $\mathcal{N}_i$. After convergence, each DAC average is multiplied by the number of nodes $M$ and follow \eqref{eq:mean_grbcm}, \eqref{eq:variance_grbcm} to recover the prediction mean and precision. Implementation details are given in Algorithm~\ref{alg:dec_grbcm}. The local time and space complexity are identical to the grBCM (Table~\ref{tab:complexityPrediction}). Let  $s^{\textrm{end}}_{\textrm{DAC}} $ be the maximum number of iterations of the three DAC to converge. The total communications are $\mathcal{O}(3s^{\textrm{end}}_{\textrm{DAC}}\mathrm{card}(\mathcal{N}_i))$ for all $i\in \mathcal{V}$ (Table~\ref{tab:comms_dec_aggrations}). The DEC-grBCM addresses Problem~\ref{pr:prediction2}.

\begin{proposition}
Let the Assumption \ref{ass:repeat_connectivity},~\ref{ass:parData},~\ref{ass:independence},~\ref{ass:cond_independence},~\ref{ass:fleet} hold throughout the approximation. If $\omega<2/M$ then the DEC-grBCM is consistent for any initialization. 

\proof
The proof is a direct consequence of \Cref{prop:grbcmCons} and \Cref{lem:dac_convergence}.
\end{proposition}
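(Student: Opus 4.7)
The plan is to factor the consistency claim into two independent guarantees: (i) that DEC-grBCM's decentralized output coincides with what centralized grBCM would compute on the same augmented datasets, and (ii) that centralized grBCM is already known to be consistent. The two ingredients are then chained.

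For step (i), I would examine the three DAC loops in Algorithm~\ref{alg:dec_grbcm}. By flooding (Assumption~\ref{ass:parData}) every agent locally assembles the communication dataset $\mathcal{D}_{\textrm{c}}$, the augmented dataset $\mathcal{D}_{+i}$, and hence $\mu_{\textrm{c}}, \sigma_{\textrm{c}}^{2}$, $\mu_{+i}, \sigma_{+i}^{2}$. Each agent initializes $w_{\mu,i}^{(0)} = \beta_i \sigma_{+i}^{-2}\mu_{+i}$, $w_{\sigma^{-2},i}^{(0)} = \beta_i \sigma_{+i}^{-2}$, and $w_{\beta_i}^{(0)} = \beta_i$. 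Under Assumption~\ref{ass:repeat_connectivity} and the parameter choice $\epsilon = 1/\Delta \in (0,1/\Delta]$, Lemma~\ref{lem:dac_convergence} ensures that each DAC loop converges to the exact arithmetic mean of its initial values. Multiplying each converged DAC quantity by $M$ (known by Assumption~\ref{ass:fleet}) and substituting into the last two lines of Algorithm~\ref{alg:dec_grbcm} reproduces precisely the centralized grBCM formulas \eqref{eq:mean_grbcm} and \eqref{eq:variance_grbcm}. Thus, upon termination of the maximin stopping rule, the DEC-grBCM prediction mean and variance match those of centralized grBCM.

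For step (ii), I would invoke Proposition~\ref{prop:grbcmCons} directly: under Assumption~\ref{ass:independence} and \ref{ass:cond_independence}, centralized grBCM is consistent in the sense of Definition~\ref{def:consistency}, i.e.\ $\mu_{\textrm{grBCM}}(\boldsymbol{x}_*) \to \mu_{\textrm{full}}(\boldsymbol{x}_*)$ and $\sigma_{\textrm{grBCM}}^{2}(\boldsymbol{x}_*) \to \sigma_{\textrm{full}}^{2}(\boldsymbol{x}_*)$ as $N \to \infty$, for all $\boldsymbol{x}_*$. Because step (i) gives pointwise equality $\mu_{\textrm{DEC-grBCM}} = \mu_{\textrm{grBCM}}$ and $\sigma_{\textrm{DEC-grBCM}}^{2} = \sigma_{\textrm{grBCM}}^{2}$ at every query, taking $N \to \infty$ transfers consistency verbatim to the decentralized scheme.

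The only delicate point in writing this out rigorously is step (i): one must check that the three DAC recursions remain numerically decoupled (they do, since the three vectors evolve with the same Perron matrix but independent initial conditions) and that the denominators $(\sum_i \beta_i)$ appearing in \eqref{eq:mean_grbcm}--\eqref{eq:variance_grbcm} are correctly reconstructed from the $w_{\beta_i}$ consensus (which they are, since every agent post-multiplies by $M$). I note that the condition ``$\omega<2/M$'' in the statement appears to be a vestige of the JOR-based DEC-NPAE analysis and is inactive here, since DEC-grBCM uses only DAC; the relevant parameter is $\epsilon$, which is fixed in line 1 of Algorithm~\ref{alg:dec_grbcm} to satisfy Lemma~\ref{lem:dac_convergence}.
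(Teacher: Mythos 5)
Your proposal is correct and follows essentially the same route as the paper, which simply cites Proposition~\ref{prop:grbcmCons} (consistency of centralized grBCM) together with Lemma~\ref{lem:dac_convergence} (exact convergence of DAC to the averages); you have merely spelled out the chaining that the paper leaves implicit. Your side observation that the hypothesis $\omega<2/M$ is vestigial here is also apt, since DEC-grBCM involves no JOR iteration and the operative condition is $\epsilon\in(0,1/\Delta)$.
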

\subsubsection{DEC-NPAE Family}
An additional assumption is required to implement the DEC-NPAE family methods.

\begin{assumption}\label{ass:strongly_complete}
The graph topology is strongly complete, i.e. every agent $i$ can communicate with every other node $j \neq i$. 
\end{assumption}

Assumption~\ref{ass:strongly_complete} is conservative, but mandatory for the implementation of the PM and JOR algorithms. In order to use the DEC-NPAE family with strongly connected graph topologies, flooding is required (Remark~\ref{rem:jor_neighbors}).

\newfloat{algorithm}{!t}{lop}
\begin{algorithm}
\caption{DEC-NPAE}\label{alg:dec_npae} 
{
\textbf{Input:} $\mathcal{D}_i(\boldsymbol{X}_i,\boldsymbol{y}_i)$, $\boldsymbol{X}$, $\hat{\boldsymbol{\theta}}$, $\boldsymbol{C}_{\theta,i}^{-1}$,  $\mathcal{N}_i$,  $k$, $M$, $\boldsymbol{x}_*$, $\Delta$

\textbf{Output:} ${\mu}_{\textrm{DEC-NPAE}}$, ${\sigma}_{\textrm{DEC-NPAE}}^2$
}
\begin{algorithmic}[1]

\State initialize $\omega = 2/M$; $\epsilon = 1/\Delta$

\ForEach {$i \in \mathcal V $}

\State communicate $\boldsymbol{C}_{\theta,i}^{-1}$, $\boldsymbol{X}_{i}$ to agents in $\mathcal{V} \backslash i$ \label{alg:dec_npae_star:comm_neighbor}

\State $\mu_i \gets \texttt{localMean}(\boldsymbol{x}_*,k, \hat{\boldsymbol{\theta}},\mathcal{D}_i, \boldsymbol{C}_{\theta,i}^{-1})$ \eqref{eq:local_mean} 
\State $[\boldsymbol{k}_A ]_i \gets \texttt{crossCov}(\boldsymbol{x}_*,k,  \hat{\boldsymbol{\theta}},\boldsymbol{X}_i, \boldsymbol{C}_{\theta,i}^{-1})$ \eqref{eq:local_covariance}

\State $\mathrm{row}_i\{\boldsymbol{C}_{\theta,A}\} \gets \texttt{localCov}(\boldsymbol{x}_*,k,  \hat{\boldsymbol{\theta}}, \boldsymbol{X}, \boldsymbol{C}_{\theta,i}^{-1}, \{ \boldsymbol{C}_{\theta,j}^{-1}\}_{j\neq i})$~\eqref{eq:cross_covariance}

\State $[\boldsymbol{H}]_i = \mathrm{row}_i\{\boldsymbol{C}_{\theta,A}\}$; $b_{\mu,i} = \mu_i$;  $b_{\sigma^2,i} = [\boldsymbol{k}_A ]_i$
\State initialize $q_{\mu,i}^{(0)} = b_{\mu,i}/[\boldsymbol{H}]_{ii}$, $q_{\sigma^2,i}^{(0)} = b_{\sigma^2,i}/[\boldsymbol{H}]_{ii}$ 

\Repeat \Comment{2$\times$JOR}

\State communicate ${q}_{\mu,i}^{(s)}$, ${q}_{\sigma^2,i}^{(s)}$ to agents in $\mathcal{V} \backslash i$ \label{alg:dec_npae_comm_neighbor}

\State $q_{\mu,i}^{(s+1)} \gets \texttt{JOR}(\omega, [\boldsymbol{H}]_i,b_{\mu,i},q_{\mu,i}^{(s)},\{\boldsymbol{q}_{\mu,j}^{(s)}\}_{j\neq i}  )$ \eqref{eq:jor}  \label{alg:dec_npae_jor1}
\State $q_{\sigma^2,i}^{(s+1)} \gets \texttt{JOR}(\omega, [\boldsymbol{H}]_i,b_{\sigma^2,i},q_{\sigma^2,i}^{(s)},\{\boldsymbol{q}_{\sigma^2,j}^{(s)}\}_{j\neq i} )$ \eqref{eq:jor} \label{alg:dec_npae_jor2}

\Until {{maximin stopping criterion}}

\State initialize $w_{\mu,i}^{(0)} = [\boldsymbol{k}_A ]_i q_{\mu,i}^{(\textrm{end})}$, $w_{\sigma^2,i}^{(0)} = [\boldsymbol{k}_A ]_i q_{\sigma^2,i}^{(\textrm{end})}$

\Repeat
\State communicate ${w}_{\mu,i}^{(s)}$, ${w}_{\sigma^2,i}^{(s)}$ to agents in $\mathcal{N}_i$
\State $w_{\mu,i}^{(s+1)} \gets \texttt{DAC}(\epsilon,w_{\mu,i}^{(s)},\{\boldsymbol{w}_{\mu,j}^{(s)}\}_{j\in \mathcal{N}_i},\mathcal{N}_i  )$ \eqref{eq:dac} \Comment{DAC1}
\State $w_{\sigma^2,i}^{(s+1)} \gets \texttt{DAC}(\epsilon,w_{\sigma^2,i}^{(s)},\{\boldsymbol{w}_{\sigma^2,j}^{(s)}\}_{j\in \mathcal{N}_i},\mathcal{N}_i )$ \eqref{eq:dac}\Comment{DAC2}
\Until {{maximin stopping criterion}}
\State ${\mu}_{\textrm{DEC-NPAE}}  = Mw_{\mu,i}^{(\textrm{end})}$ 
\State ${\sigma}_{\textrm{DEC-NPAE}}^2 ={\sigma_f^2}(k_{**}- Mw_{\sigma^2,i}^{(\textrm{end})})$ 
\EndFor
\end{algorithmic}
\end{algorithm}

We present \textsc{dec-NPAE} which combines JOR and DAC to decentralize the computations \eqref{eq:mean_npae}, \eqref{eq:variance_npae} of~\textsc{NPAE} (Figure~\ref{fig:dec_npae}-(a)). We execute two parallel JOR algorithms with known matrix $\boldsymbol{H} = \boldsymbol{C}_{\theta,A}$ and known vectors: i) $\boldsymbol{b} = \boldsymbol{\mu}$; and ii) $\boldsymbol{b} = \boldsymbol{k}_{A}$. The first JOR is associated with the prediction mean \eqref{eq:mean_npae} and the second with the variance \eqref{eq:variance_npae}. Note that $\boldsymbol{C}_{\theta,A}$ is a symmetric and PD covariance matrix. Implementation details are provided in Algorithm~\ref{alg:dec_npae}. We split up the computation in two parts. First, each entity computes three quantities: i) the local mean $\mu_i$ \eqref{eq:local_mean}; ii) the local cross covariance $[\boldsymbol{k}_A]_i$ \eqref{eq:local_covariance}; and iii) the local row covariance $\mathrm{row}_i\{\boldsymbol{C}_{\theta,A}\}$ \eqref{eq:cross_covariance}. For the local computation of \eqref{eq:cross_covariance} the agents must know the inputs $\{\boldsymbol{X}_j\}_{j \neq i}$ of all other agents, to find $\boldsymbol{C}_{\theta,ij}$ and $\boldsymbol{k}_{j,*}$. The inputs $\{\boldsymbol{X}_j\}_{j \neq i}$ are communicated between agents. The local inverted covariance matrices of all other agents $\{\boldsymbol{C}_{\theta,j}^{-1}\}_{j\neq i}$ can be locally computed, but it is computationally very expensive to invert $M-1$ matrices, i.e. $\mathcal{O}(MN_i^3)= \mathcal{O}(N^3/M^2)$. Since every agent $i$ has already stored its local covariance matrix from the training step (Section~\ref{sec:decentTrain}), we select to exchange $\{\boldsymbol{C}_{\theta,j}^{-1}\}_{j\neq i}$ between agents (Algorithm~\ref{alg:dec_npae}-[Line~\ref{alg:dec_npae_star:comm_neighbor}]). After every JOR iteration, each agent $i$ communicates the computed values $q_{\mu,i}^{(s)} $, $q_{\sigma^2,i}^{(s)} $ to its neighbors $\mathcal{N}_i$ (Algorithm~\ref{alg:dec_npae}-[line \ref{alg:dec_npae_comm_neighbor}]). Next, we compute an element of the unknown vectors $q_{\mu,i} = [\boldsymbol{C}_{\theta,A}^{-1}\boldsymbol{\mu}]_i $, $q_{\sigma^2,i} = [\boldsymbol{C}_{\theta,A}^{-1}\boldsymbol{k}_{A}]_i $ (Algorithm~\ref{alg:dec_npae}-[lines \ref{alg:dec_npae_jor1}, \ref{alg:dec_npae_jor2}]) with the JOR method. When JOR converges, 
each agent computes locally the $i$-th element of the resulting summation from: i) the multiplication between the vectors $\boldsymbol{k}_{A}^{\intercal}$ and $\boldsymbol{C}_{\theta,A}^{-1}\boldsymbol{\mu}$ \eqref{eq:mean_npae}, that is $w_{\mu,i} = [\boldsymbol{k}_{A}]_iq_{\mu,i}^{(\textrm{end})}$; and ii) the multiplication between the vectors $\boldsymbol{k}_{A}^{\intercal}$ and $\boldsymbol{C}_{\theta,A}^{-1}\boldsymbol{k}_A$ \eqref{eq:variance_npae}, that is $w_{\sigma^2,i} = [\boldsymbol{k}_{A}]_i q_{\sigma^2,i}^{(\textrm{end})}$. Second, since all agents have stored a part of the summations $w_{\mu,i}$, $w_{\sigma^2,i}$, we use the DAC to compute the averages $(1/M)\sum_{i=1}^M [\boldsymbol{k}_{A}]_iq_{\mu,i}^{(\textrm{end})}$ and $(1/M)\sum_{i=1}^M [\boldsymbol{k}_{A}]_iq_{\sigma^2,i}^{(\textrm{end})}$. After every DAC iteration, each agent $i$ communicates the computed values $w_{\mu,i}^{(s)} $, $w_{\sigma^2,i}^{(s)} $ to its neighbors $\mathcal{N}_i$. When both DAC converge, each agent follows \eqref{eq:mean_npae}, \eqref{eq:variance_npae} to recover the DEC-NPAE mean and variance. The local time and space complexity are identical to the local NPAE as shown in Table~\ref{tab:complexityPrediction}. Let  $s^{\textrm{end}}_{\textrm{JOR}} $ and $s^{\textrm{end}}_{\textrm{DAC}} $ be the maximum number of iterations of the JOR and DAC to converge respectively. The total communications for a strongly complete topology yields $\mathcal{O}(2s^{\textrm{end}}_{\textrm{JOR}}M+2s^{\textrm{end}}_{\textrm{DAC}}\mathrm{card}(\mathcal{N}_i)+MN_i^2+MDN_i) = \mathcal{O}(2s^{\textrm{end}}_{\textrm{JOR}}M+2s^{\textrm{end}}_{\textrm{DAC}}\mathrm{card}(\mathcal{N}_i)+M(N^2/M^2+DN/M)$ for all $i\in \mathcal{V}$ as listed in Table~\ref{tab:comms_dec_aggrations}. 

\newfloat{algorithm}{!t}{lop}
\begin{algorithm}
\caption{\textsc{PowerMethod}}\label{alg:PM} 
{
\textbf{Input:} $\boldsymbol{R}$, $\mathcal{N}_i$, $M$, $\eta_{\textrm{PM}}$

\textbf{Output:} $\overline{\lambda}(\boldsymbol{R})$
}
\begin{algorithmic}[1]

\State initialize $\boldsymbol{e}^{(0)} = 1/M$

\Repeat

\State ${g}_i^{(s+1)} =  \mathrm{row}_i\{\boldsymbol{R}\} \boldsymbol{e}^{(s)}$ \eqref{eq:pm_step1} 

\State communicate $g_i^{(s+1)}$ to agents in $ \mathcal{V} \backslash i$ \label{alg:pm:communications}

\State $\|\boldsymbol{g}^{(s+1)}\|_{\infty} = \max\{ | \boldsymbol{g}^{s+1} | \}$

\State $\boldsymbol{e}^{(s+1)}  = \boldsymbol{g}^{(s+1)}/\|\boldsymbol{g}^{(s+1)}\|_{\infty}$ \eqref{eq:pm_step2}

\Until {$\| \boldsymbol{e}^{(s+1)} - \boldsymbol{e}^{(s)}\|_2< \eta_{\textrm{PM}} $}

\State $\overline{\lambda}(\boldsymbol{R}) = \|\boldsymbol{g}^{(\textrm{end})}\|_{\infty}$

\end{algorithmic}
\end{algorithm}

\newfloat{algorithm}{!t}{lop}
\begin{algorithm}
\caption{DEC-NPAE*}\label{alg:dec_npae_star} 
{
\textbf{Input:} $\mathcal{D}_i(\boldsymbol{X}_i,\boldsymbol{y}_i)$, $\boldsymbol{X}$, $\hat{\boldsymbol{\theta}}$, $\boldsymbol{C}_{\theta,i}^{-1}$,  $\mathcal{N}_i$,  $k$, $M$, $\boldsymbol{x}_*$, $\Delta$, $\eta_{\textrm{PM}}$

\textbf{Output:} ${\mu}_{\textrm{DEC-NPAE}^{\star}}$, ${\sigma}_{\textrm{DEC-NPAE}^{\star}}^2$
}
\begin{algorithmic}[1]
\ForEach {$i \in \mathcal V $}

\State communicate $\mathrm{row}_i\{\boldsymbol{C}_{\theta,A}\}$ to agents in $ \mathcal{V} \backslash i$ \label{alg:dec_npae_star:communications}

\State $\mathrm{diag}(\boldsymbol{C}_{\theta,A})^{-1} = \mathrm{diag}(\{\boldsymbol{C}_{\theta,A}\}_{ii}^{-1})$
\State $\boldsymbol{R} = \mathrm{diag}(\boldsymbol{C}_{\theta,A})^{-1} \boldsymbol{C}_{\theta,A}$

\State $\overline{\lambda}(\boldsymbol{R}) \gets \texttt{PowerMethod} (\boldsymbol{R}, \mathcal{N}_i, M, \eta_{\textrm{PM}})$ \Comment{PM1}

\State $\boldsymbol{B} = \boldsymbol{R} - \overline{\lambda}(\boldsymbol{R})I_M$ \label{alg:dec_npae_star:spectralShift}

\State $\overline{\lambda}(\boldsymbol{B}) \gets \texttt{PowerMethod} (\boldsymbol{B}, \mathcal{N}_i, M, \eta_{\textrm{PM}})$ \Comment{PM2}

\State $\underline{\lambda}(\boldsymbol{R}) = |\overline{\lambda}(\boldsymbol{B})-\overline{\lambda}(\boldsymbol{R})|$ \label{alg:dec_npae_star:min_eig}

\State $\omega^{\star} = 2/(\overline{\lambda}(\boldsymbol{R})+\underline{\lambda}(\boldsymbol{R}))$ 
\EndFor
\State $\texttt{DEC-NPAE}(\mathcal{D}_i, \boldsymbol{X}, \hat{\boldsymbol{\theta}},\boldsymbol{C}_{\theta,i}^{-1},  \mathcal{N}_i,  k, M, \boldsymbol{x}_*, \Delta , \omega^{\star})$
\end{algorithmic}
\end{algorithm}

The decentralized NPAE$^{\star}$ (DEC-NPAE$^{\star}$) method (Figure~\ref{fig:dec_npae}-(b)) is similar to the DEC-NPAE, but includes an additional routine (Algorithm~\ref{alg:PM}) to compute the optimal relaxation factor $\omega^{\star}$ (Lemma~\ref{lem:jor_convergence_optimal}). More specifically, we employ the PM iterative scheme \eqref{eq:pm} to estimate the largest $\overline{\lambda}$ and smallest $\underline{\lambda}$ eigenvalues of $\boldsymbol{R}$. The workflow is as follows. To compute the matrix of interest $\boldsymbol{R} = \mathrm{diag}(\boldsymbol{C}_{\theta,A})^{-1} \boldsymbol{C}_{\theta,A}$, each agent $i$ constructs $\boldsymbol{C}_{\theta,A}$ after exchanging $\{\mathrm{row}_j\{\boldsymbol{C}_{\theta,A}\}\}_{j\neq i}$ (Algorithm~\ref{alg:dec_npae_star}-[Line \ref{alg:dec_npae_star:communications}]). Next, each agent $i$ executes the PM (Algorithm~\ref{alg:PM}) to obtain the maximium eigenvalue $\overline{\lambda}(\boldsymbol{R})$. Then, the spectral shift matrix $\boldsymbol{B}$ is composed (Algorithm~\ref{alg:dec_npae_star}-[line \ref{alg:dec_npae_star:spectralShift}]). Using $\boldsymbol{B}$ as an input to the PM algorithm, its maximum eigenvalue is obtained $\overline{\lambda}(\boldsymbol{B})$. To this end, the minimum eigenvalue of $\boldsymbol{R}$ can be computed (Algorithm~\ref{alg:PM}-[Line~\ref{alg:dec_npae_star:min_eig}]). Subsequently, the optimal relaxation $\omega^{\star}$ is computed according to Lemma~\ref{lem:jor_convergence_optimal}. Provided $\omega^{\star}$, the DEC-NPAE (Algorithm~\ref{alg:dec_npae}) is executed. Let $s^{\textrm{end}}_{\textrm{PM}}$ be the iterations required for the PM to converge. Then, the total communications are $\mathcal{O}(2s^{\textrm{end}}_{\textrm{PM}}M +M^2)+\mathcal{O}(\textrm{DEC-NPAE})$ to exchange: i) the $g_i^{(s)}$ for two PM routines (Algorithm~\ref{alg:PM}-[Line~\ref{alg:pm:communications}]); ii) the $\mathrm{row}_i\{\boldsymbol{C}_{\theta,A}\}$ (Algorithm~\ref{alg:dec_npae_star}-[Line~\ref{alg:dec_npae_star:communications}]); and iii) the quantities of DEC-NPAE. A comparison of the communication complexity for all decentralized GP aggregation methods is presented in Table~\ref{tab:comms_dec_aggrations}. In Figure~\ref{fig:dec_npae} we illustrate the structure of the DEC-NPAE family. Both methods of the DEC-NPAE family address Problem~\ref{pr:prediction2}. 

\begin{figure}[!t]
	\includegraphics[width=.75\columnwidth]{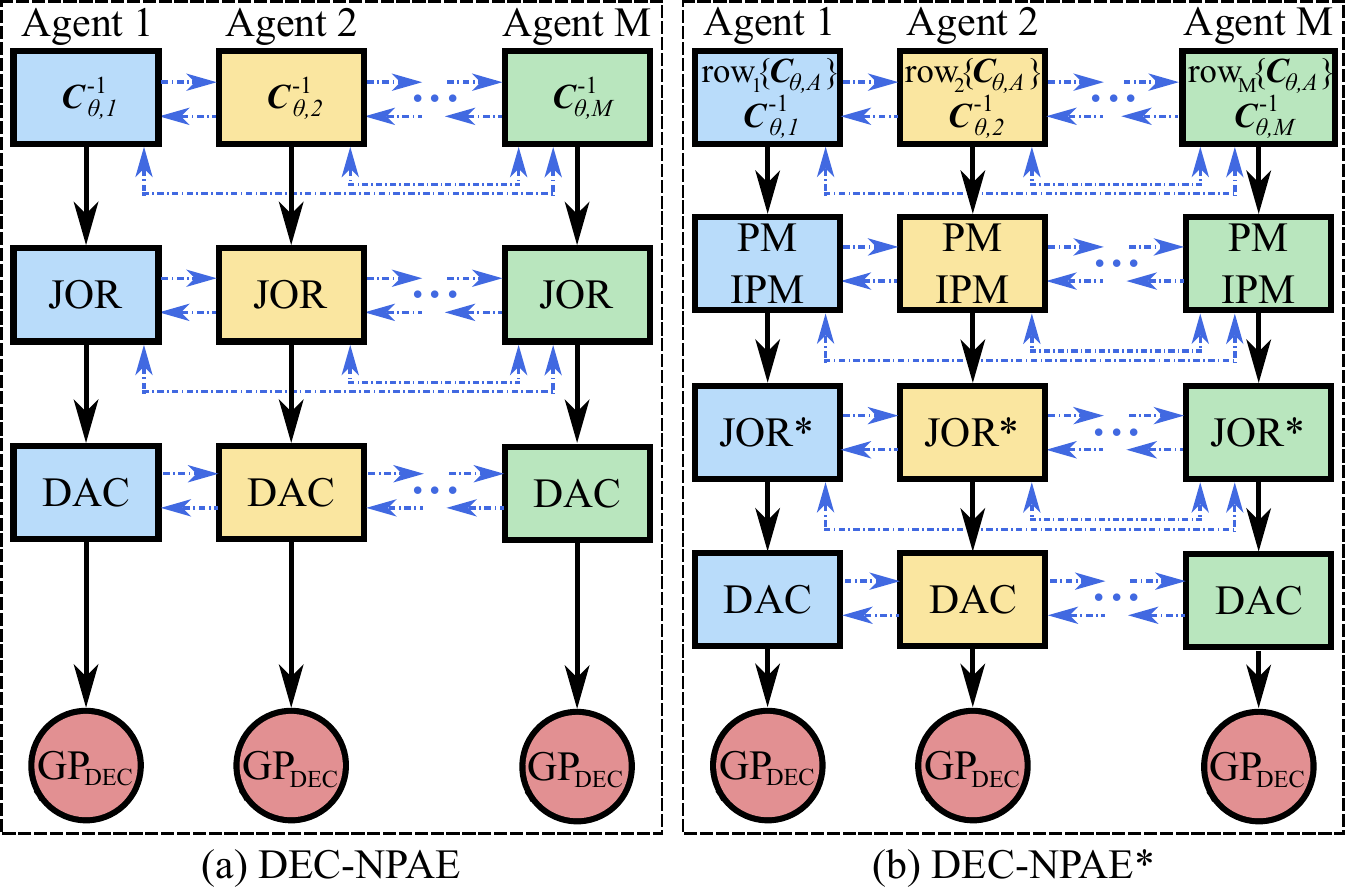}
	\centering
	\caption{The structure of the DEC-NPAE family. Blue dotted lines correspond to communication (strongly connected and strongly complete). (a) DEC-NPAE incorporates Jacobi over-relaxation (JOR) and discrete-time average consensus (DAC). (b) DEC-NPAE$^{\star}$ makes use of the power method (PM) to obtain the optimal relaxation factor and execute JOR$^{\star}$, and DAC.
	}
	\label{fig:dec_npae}
\end{figure}

\begin{proposition}\label{prop:dec_npae_family}
Let the graph $\mathcal{G}$ be strongly complete during the JOR and PM iterations (Assumption~\ref{ass:strongly_complete}), and strongly connected during the DAC iterations (Assumption \ref{ass:repeat_connectivity}). In addition, let Assumption~\ref{ass:parData},~\ref{ass:independence},~\ref{ass:fleet} hold throughout the approximation. If $\omega<2/M$, $\epsilon\in(0,1/\Delta)$, then the DEC-NPAE is consistent for any initialization. Provided that the conditions for JOR hold for the PM iterations and that $\omega^{\star} = 2/(\overline{\lambda}(\boldsymbol{R})+\underline{\lambda}(\boldsymbol{R}))$, then the DEC-NPAE$^{\star}$ is consistent for any initial conditions. 

\proof
The proof for DEC-NPAE is a direct consequence of \Cref{prop:consistency} and \Cref{lem:dac_convergence}, \ref{lem:jor_convergence}. Similarly for DEC-NPAE$^{\star}$, the proof follows from \Cref{prop:consistency} and \Cref{lem:dac_convergence}, \ref{lem:jor_convergence_optimal}.
\end{proposition}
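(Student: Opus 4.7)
The plan is to reduce the claim to the consistency of the centralized NPAE (Proposition~\ref{prop:consistency}) by showing that both algorithms compute, in the limit of their inner iterations, exactly the quantities $\mu_{\textrm{NPAE}}(\boldsymbol{x}_*)$ and $\sigma^2_{\textrm{NPAE}}(\boldsymbol{x}_*)$ defined in \eqref{eq:mean_npae}--\eqref{eq:variance_npae}. Since consistency is a statement about the infinite-data limit $N\to\infty$ and is independent of how the finite-$N$ aggregated prediction is actually computed, it suffices to prove \emph{exact computation} of the NPAE aggregate by the decentralized scheme.

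For \textbf{DEC-NPAE}, I would proceed in three steps. (i) Verify that the aggregated covariance $\boldsymbol{C}_{\theta,A}$ assembled row-wise across agents is symmetric and positive definite; this holds since it is a covariance matrix of local GP predictors under Assumption~\ref{ass:independence}. Combined with Assumption~\ref{ass:strongly_complete} and the choice $\omega<2/M$, Lemma~\ref{lem:jor_convergence} yields that the two parallel JOR iterations (with right-hand sides $\boldsymbol{\mu}$ and $\boldsymbol{k}_A$) converge from any initialization, so each agent $i$ asymptotically holds the $i$-th entries of $\boldsymbol{C}_{\theta,A}^{-1}\boldsymbol{\mu}$ and $\boldsymbol{C}_{\theta,A}^{-1}\boldsymbol{k}_A$. (ii) After local multiplication by $[\boldsymbol{k}_A]_i$, each agent holds one summand of the inner products $\boldsymbol{k}_A^{\intercal}\boldsymbol{C}_{\theta,A}^{-1}\boldsymbol{\mu}$ and $\boldsymbol{k}_A^{\intercal}\boldsymbol{C}_{\theta,A}^{-1}\boldsymbol{k}_A$. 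Under Assumption~\ref{ass:repeat_connectivity} and $\epsilon\in(0,1/\Delta)$, Lemma~\ref{lem:dac_convergence} ensures both DAC instances converge to the true averages. (iii) Multiplying by $M$ (known via Assumption~\ref{ass:fleet}) recovers the sums, and substitution into \eqref{eq:mean_npae}--\eqref{eq:variance_npae} returns the exact NPAE prediction. Consistency then follows from Proposition~\ref{prop:consistency}.

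For \textbf{DEC-NPAE$^{\star}$}, the only additional component is the distributed computation of $\omega^{\star}$ through two sequential runs of the power method. I would invoke Lemma~\ref{lem:pm_convergencel} once on $\boldsymbol{R}=\mathrm{diag}(\boldsymbol{C}_{\theta,A})^{-1}\boldsymbol{C}_{\theta,A}$ (symmetric because $\boldsymbol{C}_{\theta,A}$ is symmetric and the diagonal rescaling preserves symmetry of the similarity-transformed spectrum up to what PM needs) to obtain $\overline{\lambda}(\boldsymbol{R})$, then on the spectral-shifted matrix $\boldsymbol{B}=\boldsymbol{R}-\overline{\lambda}(\boldsymbol{R})I_M$ to recover $\underline{\lambda}(\boldsymbol{R})=|\overline{\lambda}(\boldsymbol{B})-\overline{\lambda}(\boldsymbol{R})|$. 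With the thus-computed $\omega^{\star}$, the hypotheses of Lemma~\ref{lem:jor_convergence_optimal} are satisfied, and the three-step argument above applies verbatim, again yielding the exact NPAE aggregate and, via Proposition~\ref{prop:consistency}, consistency.

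The main obstacle I anticipate is not the convergence of any single primitive (these are off-the-shelf via the cited lemmas) but the careful bookkeeping that each agent's locally stored scalar at the end of JOR or DAC is precisely the entry needed by the next stage, and that the sequential composition JOR~$\to$~local product~$\to$~DAC actually reconstructs the bilinear forms in \eqref{eq:mean_npae}--\eqref{eq:variance_npae}. A secondary subtlety is that Lemma~\ref{lem:jor_convergence} and Lemma~\ref{lem:pm_convergencel} are stated for time-invariant strongly complete graphs, so one must be explicit that Assumption~\ref{ass:strongly_complete} is in force during the JOR and PM phases, while only Assumption~\ref{ass:repeat_connectivity} is needed during the DAC phase; the proof statement already separates these regimes, so this is a matter of referencing rather than a substantive difficulty.
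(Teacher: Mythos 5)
Your proposal is correct and follows essentially the same route as the paper: reduce consistency of DEC-NPAE (resp.\ DEC-NPAE$^{\star}$) to Proposition~\ref{prop:consistency} by using Lemma~\ref{lem:jor_convergence} (resp.\ Lemma~\ref{lem:jor_convergence_optimal}) and Lemma~\ref{lem:dac_convergence} to show the decentralized scheme recovers the exact NPAE aggregate \eqref{eq:mean_npae}--\eqref{eq:variance_npae}. The paper states this as a one-line citation of the same results; you simply make the intermediate bookkeeping explicit.
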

\subsection{Nearest Neighbor Decentralized Aggregation Methods}
\textbf{\textit{DALE}}: An alternative method to solve a linear system of algebraic equations, but for strongly connected (Assumption~\ref{ass:repeat_connectivity}) and not strongly complete topology (Assumption~\ref{ass:strongly_complete}) is DALE. The latter is an iterative method with identical setup to JOR $\boldsymbol{H} \boldsymbol{q} = \boldsymbol{b}$, where  $\boldsymbol{H}$ is a known matrix, $\boldsymbol{b}$ a known vector, and $\boldsymbol{q}$ an unknown vector. The $i$-th node knows: i) $i$-th row of $\boldsymbol{H}_i = \mathrm{row}_i\{\boldsymbol{H}\}\in \mathbb{R}^{1\times M}$; and ii) $i$-th entry of $b_i\in \mathbb{R}$. In addition, DALE is formulated as a consensus problem, where the goal for all agents is to obtain the same solution $\boldsymbol{q}_i\in \mathbb{R}^M$ and not just an element of the unknown vector as in JOR. The DALE~follows,
\begin{equation}\label{eq:dale}
     \boldsymbol{q}_i^{(s+1)} = \boldsymbol{H}_i^{\intercal}(\boldsymbol{H}_i\boldsymbol{H}_i^{\intercal})^{-1}{b}_i+\frac{1}{\mathrm{card}( \mathcal{N}_i(t))} \boldsymbol{P}_i \sum_{j\in \mathcal{N}_i(t)}\boldsymbol{q}_j^{(s)} ,
\end{equation}
where $\boldsymbol{P}_i = I_M - \boldsymbol{H}_i^{\intercal}(\boldsymbol{H}_i\boldsymbol{H}_i^{\intercal})^{-1}\boldsymbol{H}_i \in \mathbb{R}^{M\times M}$ is the orthogonal projection onto the kernel of $\boldsymbol{H}_i$. In addition, DALE can be executed in a time-varying network under Assumption \ref{ass:repeat_connectivity}. 

\begin{assumption}\label{ass:row_rank}
Matrix $\boldsymbol{H}$ is full row rank.
\end{assumption}

\begin{lemma}\label{lem:dale_convergence}
\citep[Theorem 3]{liu2017asynchronous} Let Assumption \ref{ass:repeat_connectivity},~\ref{ass:row_rank} hold. There exists a constant $\phi \in (0,1)$ such that   all $\boldsymbol{q}_i^{(s)}$ converge to the solution for any initialization $\boldsymbol{q}_i^{(0)}$ with worst case convergence speed~$\phi^s$. 
\end{lemma}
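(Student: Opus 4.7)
The plan is to set up an error recursion around the unique solution $\boldsymbol{q}^{\star}$, bound the one-step operator as non-expansive, and then promote non-expansiveness to strict linear contraction over $\gamma$-step windows. Because $\boldsymbol{H}\in\mathbb{R}^{M\times M}$ is square and of full row rank (Assumption~\ref{ass:row_rank}), it is invertible and $\boldsymbol{H}\boldsymbol{q}=\boldsymbol{b}$ has a unique solution $\boldsymbol{q}^{\star}$ with $\boldsymbol{H}_i\boldsymbol{q}^{\star}=b_i$ for every agent $i$. The vector $\boldsymbol{q}^{\star}$ is a fixed point of each local affine map $\boldsymbol{v}\mapsto \boldsymbol{H}_i^{\intercal}(\boldsymbol{H}_i\boldsymbol{H}_i^{\intercal})^{-1}b_i + \boldsymbol{P}_i\boldsymbol{v}$, since the first term is the minimum-norm solution of $\boldsymbol{H}_i\boldsymbol{v}=b_i$ while the residue $\boldsymbol{q}^{\star}-\boldsymbol{H}_i^{\intercal}(\boldsymbol{H}_i\boldsymbol{H}_i^{\intercal})^{-1}b_i$ lies in $\ker\boldsymbol{H}_i$, which is fixed by $\boldsymbol{P}_i$. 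Subtracting this identity from~\eqref{eq:dale} yields the error recursion
\begin{equation*}
\boldsymbol{e}_i^{(s+1)} = \boldsymbol{P}_i \frac{1}{\mathrm{card}(\mathcal{N}_i(t))}\sum_{j\in\mathcal{N}_i(t)}\boldsymbol{e}_j^{(s)}, \qquad \boldsymbol{e}_i^{(s)}\coloneqq \boldsymbol{q}_i^{(s)}-\boldsymbol{q}^{\star}.
\end{equation*}
Stacking the errors into $\boldsymbol{E}^{(s)}\in\mathbb{R}^{M^2}$, I would write the dynamics as $\boldsymbol{E}^{(s+1)}=T(s)\boldsymbol{E}^{(s)}$ with $T(s) = \mathrm{blkdiag}(\boldsymbol{P}_1,\ldots,\boldsymbol{P}_M)(W(s)\otimes I_M)$, where $W(s)$ is the row-stochastic neighbor-averaging matrix with positive diagonal (since $i\in\mathcal{N}_i(t)$).

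Next I would verify that every $T(s)$ is non-expansive in $\|\cdot\|_2$: the block-diagonal orthogonal projection has operator norm exactly $1$, and $W(s)\otimes I_M$ has spectral radius $1$ with $\mathbf{1}_{M}\otimes\boldsymbol{x}$ spanning its $\lambda=1$ eigenspace. Since $W(s)$ is in general only row-stochastic, a careful choice of norm (or a shift argument to a doubly-stochastic surrogate on the disagreement subspace) is needed here, but no contraction rate can be extracted at this level.

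The crux of the proof, and the place I expect to spend most of the effort, is turning non-expansiveness into strict contraction on a window of length $\gamma$. Two mechanisms cooperate. First, under Assumption~\ref{ass:repeat_connectivity}, the product $W(t+\gamma-1)\cdots W(t)$ is a primitive row-stochastic matrix whose second-largest singular value is uniformly bounded away from $1$, so the disagreement among the $\boldsymbol{e}_i^{(s)}$ contracts geometrically at some rate $\mu\in(0,1)$ per window. Second, once the errors are close to a common vector $\boldsymbol{e}$, the block-diagonal projection produces $(\boldsymbol{P}_1\boldsymbol{e},\ldots,\boldsymbol{P}_M\boldsymbol{e})$, whose total squared norm is strictly less than $M\|\boldsymbol{e}\|^2$ unless $\boldsymbol{e}\in\bigcap_i\ker\boldsymbol{H}_i=\ker\boldsymbol{H}=\{0\}$; the contraction factor is controlled by the Friedrichs angle between the hyperplanes $\ker\boldsymbol{H}_i$, which is strictly positive exactly because $\boldsymbol{H}$ has full row rank. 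Combining both bounds on each $\gamma$-window yields $\|T(t+\gamma-1)\cdots T(t)\|_2\le \phi^{\gamma}$ for a $\phi\in(0,1)$ independent of $t$, and hence $\|\boldsymbol{E}^{(s)}\|_2\le C\phi^s\|\boldsymbol{E}^{(0)}\|_2$. The hard part is keeping these two estimates compatible under a time-varying graph: the Friedrichs angle is a time-invariant geometric quantity of $\{\boldsymbol{H}_i\}$, while the mixing rate depends on the specific graph sequence, and they must be glued together in a single Lyapunov-style argument that uses no more than Assumption~\ref{ass:repeat_connectivity}.
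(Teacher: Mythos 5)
The paper contains no proof of \Cref{lem:dale_convergence}: the statement is imported verbatim as \citep[Theorem~3]{liu2017asynchronous}, so the only argument to compare against is the one in that reference. Your outline does reproduce its overall architecture --- the fixed-point identity for $\boldsymbol{q}^{\star}$ (valid because $\boldsymbol{H}$ is square, so full row rank gives invertibility), the error recursion $\boldsymbol{e}_i^{(s+1)}=\boldsymbol{P}_i\cdot\frac{1}{\mathrm{card}(\mathcal{N}_i(t))}\sum_{j\in\mathcal{N}_i(t)}\boldsymbol{e}_j^{(s)}$, and a windowed contraction driven jointly by Assumption~\ref{ass:repeat_connectivity} and $\bigcap_i\ker\boldsymbol{H}_i=\ker\boldsymbol{H}=\{0\}$. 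As a plan it is pointed in the right direction.

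As a proof, however, it has a genuine gap precisely at the step you defer. First, the one-step operator $T(s)$ is in general \emph{not} non-expansive in $\|\cdot\|_2$, since a row-stochastic $W(s)$ can have spectral norm exceeding one (and its unit-modulus eigenspace need not be one-dimensional when $\mathcal{G}(t)$ is disconnected at a single instant, which Assumption~\ref{ass:repeat_connectivity} permits); the cited proof avoids this by working in a mixed matrix norm (the $\infty$-norm over block rows of the induced $2$-norms of the $M\times M$ blocks), in which projection-then-row-averaging is non-expansive by construction. Second, the decisive estimate $\|T(t+\gamma-1)\cdots T(t)\|\le\phi^{\gamma}<1$ is asserted as an objective rather than derived, and your ``two mechanisms'' do not compose as described: the projections are interleaved with the averaging at every step, the agreement/disagreement splitting is not invariant under $\mathrm{blkdiag}(\boldsymbol{P}_1,\dots,\boldsymbol{P}_M)$, so one cannot first let consensus act and then apply the kernel-intersection (Friedrichs-angle) argument to a common error vector. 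What is actually required is the key lemma of \citep{liu2017asynchronous} and its predecessors: any product of orthogonal projections $\boldsymbol{P}_{i_1}\cdots\boldsymbol{P}_{i_k}$ in which every index $1,\dots,M$ appears is a strict contraction when $\bigcap_i\ker\boldsymbol{H}_i=\{0\}$, combined with a combinatorial argument showing that, over a $\gamma$-window of a jointly strongly connected sequence, every block entry of the expanded product contains such a complete projection chain; uniformity of $\phi$ then follows by compactness. Finally, the positive diagonal of $W(s)$ presupposes $i\in\mathcal{N}_i(t)$ (self-arcs), which the paper's definition of $\mathcal{N}_i(t)$ does not guarantee but which the convergence argument does rely on.
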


\begin{remark}\label{rem:dale_convergence}
The convergence speed constant 
$\phi$ depends on the number of agents $M$ and the diameter of the graph $\mathrm{diam}(\mathcal{G})$. The larger the fleet size and the diameter the slower the convergence.
\end{remark}

\begin{remark}\label{rem:dale_nearest_neighbor}
The $i$-th node using DALE \eqref{eq:dale} exchanges information only with its neighbors $j \in \mathcal{N}_i$ and not with the whole network (see in contrast \Cref{rem:jor_neighbors} for JOR). In addition, DALE is concurrently a consensus algorithm and updates the whole vector $\boldsymbol{q}_i^{(s)} \in \mathbb{R}^{M}$, while JOR updates just the corresponding entry $[\boldsymbol{q}_i^{(s)}]_i\in \mathbb{R}$. Thus, DALE is equivalent to the operation of both JOR and~DAC. 
\end{remark}

\begin{figure}[!t]
	\includegraphics[width=.45\columnwidth]{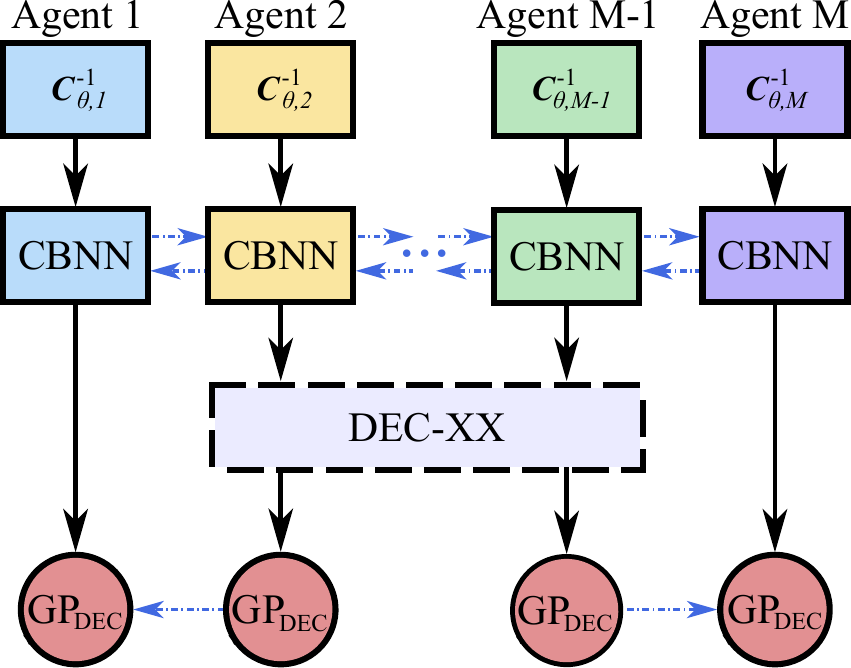}
	\centering
	\caption{The structure of the proposed nearest neighbor decentralized aggregation methods. Blue dotted lines correspond to communication (strongly connected). The covariance-based nearest neighbor (CBNN) method identifies statistically correlated agents---in this illustration the CBNN set is $\mathcal{V}_{\textrm{NN}}\in[2,M-1]$. Next, a decentralized aggregation method among the DEC-PoE and DEC-BCM families is executed within the $\mathcal{V}_{\textrm{NN}}$ nodes. After convergence, the predicted values are communicated to the rest agents of the network.
	}
	\label{fig:dec_nn_gp_1}
\end{figure}

\textbf{\textit{CBNN}}: To identify statistically correlated agents for a location of interest $\boldsymbol{x}_*$ we introduce the covariance-based nearest neighbor (CBNN) method. Let every agent $i$ to have its own opinion for the location of interest $\{ \mu_1, \hdots, \mu_M \}$, where $\mu_i = \mathrm{E}[y(\boldsymbol{x}_*) \mid \mathcal{D}_i, \boldsymbol{\theta}]$ computed as a GP local mean \eqref{eq:local_mean}. In other words, every agent makes a prediction $\mu_i$ for the location of interest $\boldsymbol{x}_*$ based on its local dataset $\mathcal{D}_i$. Then, we use the local mean values to form the \textit{mean dataset} $\mathcal{D}_{\mu} = (\{ \boldsymbol{X}_i\}_{i=1}^M,\{  \mu_i\}_{i=1}^M) = (\boldsymbol{X},\boldsymbol{\mu})$, where $\boldsymbol{X}_i \in \mathbb{R}^{D\times N_i}$, $\boldsymbol{X} \in \mathbb{R}^{D\times N}$, $\mu_i \in \mathbb{R}$, and $\boldsymbol{\mu} \in \mathbb{R}^M$. 

\begin{definition}\label{def:mean_random_process}
Let the vector of random variables $(\mu_1(\boldsymbol{x}_*), \hdots, \mu_M(\boldsymbol{x}_*), y(\boldsymbol{x}_*))^ \intercal \in \mathbb{R}^{M+1}$ to form a random process, where the first two moments exist with zero mean $\mu_{\mu}=0$ and a finite covariance $\boldsymbol{C}_{\theta, \mu}$.
\end{definition}

\begin{proposition}\label{prop:gp_mean_random_process}
\citep[Proposition 3]{bachoc2017some} The random process (Definition~\ref{def:mean_random_process}) approximates a GP, $(\mu_1(\boldsymbol{x}_*), \hdots, \mu_M(\boldsymbol{x}_*), y(\boldsymbol{x}_*))^{\intercal} \sim \mathcal{GP}(\mu_{\mu}, \boldsymbol{C}_{\theta, \mu})$ as $N\rightarrow \infty$.
\end{proposition}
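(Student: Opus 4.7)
The plan is to establish the two defining properties of a Gaussian process for the stacked random field $(\mu_1(\cdot),\ldots,\mu_M(\cdot),y(\cdot))$: joint normality of every finite-dimensional marginal, together with a well-defined mean and covariance kernel. I would begin by conditioning on the true hyperparameter vector $\boldsymbol{\theta}$ and exploiting the observation that each local mean $\mu_i(\boldsymbol{x}_*) = \boldsymbol{k}_{*,i}^{\intercal}\boldsymbol{C}_{\theta,i}^{-1}\boldsymbol{y}_i$ in \eqref{eq:local_mean} is a deterministic linear functional of the observations $\boldsymbol{y}_i$, which are themselves jointly Gaussian under the model \eqref{eq:model}.

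First, I would show that for any finite collection of query locations $\{\boldsymbol{x}_*^{(\ell)}\}_{\ell=1}^L$, the stacked vector gathering every $\mu_i(\boldsymbol{x}_*^{(\ell)})$ together with every $y(\boldsymbol{x}_*^{(\ell)})$ is a fixed linear transformation of the jointly Gaussian vector $(\boldsymbol{y}_1^{\intercal},\ldots,\boldsymbol{y}_M^{\intercal},y(\boldsymbol{x}_*^{(1)}),\ldots,y(\boldsymbol{x}_*^{(L)}))^{\intercal}$, hence itself jointly Gaussian. Next I would compute the first two moments: zero mean follows directly from linearity and the zero-mean GP prior, while the cross-covariance between $\mu_i(\boldsymbol{x}_*)$ and $\mu_j(\boldsymbol{x}_*')$ evaluates to $\boldsymbol{k}_{*,i}^{\intercal}\boldsymbol{C}_{\theta,i}^{-1}\boldsymbol{C}_{\theta,ij}\boldsymbol{C}_{\theta,j}^{-1}\boldsymbol{k}_{*',j}$, matching the block-wise structure already introduced in \eqref{eq:cross_covariance}. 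Assembling these blocks over $i,j\in\mathcal{V}$ together with the $y(\boldsymbol{x}_*)$ component yields the covariance $\boldsymbol{C}_{\theta,\mu}$ of \Cref{def:mean_random_process}; positive semi-definiteness is inherited from $k$ via a standard Gram-matrix argument on the underlying PSD kernel.

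The main obstacle is that the local means are in practice evaluated at the estimated hyperparameters $\hat{\boldsymbol{\theta}}$ rather than the true $\boldsymbol{\theta}$, and the nonlinear data-dependent map $\boldsymbol{y}\mapsto\hat{\boldsymbol{\theta}}(\boldsymbol{y})$ destroys exact Gaussianity in finite samples. This is precisely why the statement is asymptotic in $N$: I would invoke consistency of the MLE in (P1), so that $\hat{\boldsymbol{\theta}}\xrightarrow{p}\boldsymbol{\theta}$, and then combine this with the continuous mapping theorem and a Slutsky-type argument to transfer the exact Gaussianity at $\boldsymbol{\theta}$ into the asymptotic regime at $\hat{\boldsymbol{\theta}}$, so that the limiting finite-dimensional distributions remain Gaussian with moments $\mu_{\mu}=0$ and $\boldsymbol{C}_{\theta,\mu}$. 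The most delicate step will be upgrading this pointwise-in-$\boldsymbol{x}_*$ statement to a joint statement over any finite collection of test locations uniform enough to warrant the GP designation; this requires mild regularity of the kernel in the hyperparameters, which is easily satisfied by the separable squared exponential covariance \eqref{eq:seKernel} employed throughout.
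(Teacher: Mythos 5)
The paper does not actually prove this proposition: it is imported wholesale from \citep[Proposition~3]{bachoc2017some} with no argument supplied, so there is no in-paper proof to compare yours against step by step. That said, the first two-thirds of your sketch is exactly the standard argument behind the cited result, and it is worth noting that it proves something \emph{stronger} than what you set out to show: once the hyperparameters $\boldsymbol{\theta}$ are fixed, each $\mu_i(\boldsymbol{x}_*)=\boldsymbol{k}_{*,i}^{\intercal}\boldsymbol{C}_{\theta,i}^{-1}\boldsymbol{y}_i$ in \eqref{eq:local_mean} is a deterministic linear functional of the jointly Gaussian vector $(\boldsymbol{y}_1^{\intercal},\ldots,\boldsymbol{y}_M^{\intercal},y(\boldsymbol{x}_*))^{\intercal}$, so the stacked process is \emph{exactly} a vector-valued GP for every finite $N$, with zero mean and the block covariance you compute (which indeed reproduces \eqref{eq:cross_covariance}). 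No limit in $N$ is needed for that part, and your finite-dimensional-marginals argument plus the Gram-matrix PSD check is complete as stated.

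The genuine gap is in the last paragraph, where you try to explain the ``as $N\rightarrow\infty$'' qualifier via consistency of the MLE in (P1) followed by a continuous-mapping/Slutsky argument. This step cannot be waved through in the setting of this paper: the data live in a bounded domain (e.g.\ $[0,2]^2$) with $N\rightarrow\infty$, which is the fixed-domain (infill) asymptotic regime, and under infill asymptotics the MLE of GP covariance parameters is \emph{not} consistent in general --- only microergodic combinations of parameters (for the squared exponential kernel, essentially $\sigma_f^2/\prod_d l_d$-type quantities, not $\sigma_f^2$ and the $l_d$ individually) are consistently estimable. So ``invoke consistency of the MLE'' is an unjustified premise, and the Slutsky transfer built on it does not go through as written. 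The clean fix is to drop that machinery entirely: either state the result conditionally on the (estimated or true) hyperparameter vector, in which case your exact linear-Gaussian argument already gives the conclusion with no asymptotics, or, if you insist on accounting for plug-in estimation, restrict to an increasing-domain regime or to microergodic parameters where consistency actually holds. As it stands, the only part of your proof that needs the limit is the part that is wrong, and the part that is right does not need the limit.
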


\newfloat{algorithm}{!t}{lop}
\begin{algorithm}
\caption{DEC-NN-PoE}\label{alg:dec_nn_poe} 
{
\textbf{Input:} $\mathcal{D}_i(\boldsymbol{X}_i,\boldsymbol{y}_i)$, $\hat{\boldsymbol{\theta}}$, $\boldsymbol{C}_{\theta,i}^{-1}$,  $\mathcal{N}_i$,  $k$, $M$, $\boldsymbol{x}_*$, $\Delta$, $\eta_{\text{NN}}$

\textbf{Output:} ${\mu}_{\textrm{DEC-NN-PoE}}$, ${\sigma}_{\textrm{DEC-NN-PoE}}^{-2}$
}
\begin{algorithmic}[1]

\ForEach {$i \in \mathcal V $}

\State $[\boldsymbol{k}_{\mu,*}]_i \gets \texttt{CrossCovCBNN}(\boldsymbol{x}_*,k, \hat{\boldsymbol{\theta}},\boldsymbol{X}_i,\boldsymbol{C}_{\theta,i}^{-1})$ \eqref{eq:cbnn_cross_covar}

\ForEach {$j \in \mathcal{N}_i $}

\If {$[\boldsymbol{k}_{\mu,*}]_j <\eta_{\text{NN}}$} \label{alg:dec_nn_poe:eval}
\State $\mathcal{N}_{\textrm{NN},i} = \mathcal{N}_i\backslash j$
\State $j \gets \texttt{flooding}(\mathcal{V})$
\State $\mathcal{V}_{\textrm{NN}} = \mathcal{V}\backslash j$
\EndIf

\EndFor
\State $M_{\textrm{NN}} = \mathrm{card}(\mathcal{V}_{\textrm{NN}})$

\EndFor

\State $\texttt{DEC-PoE}(\mathcal{D}_i, \hat{\boldsymbol{\theta}}, \boldsymbol{C}_{\theta,i}^{-1},  \mathcal{N}_{\textrm{NN},i},  k, M_{\textrm{NN}}, \boldsymbol{x}_*, \Delta)$

\State communicate $\mu_{\textrm{DEC-NN-PoE}}$ and $\sigma^2_{\textrm{DEC-NN-PoE}}$ to agents in $\mathcal{V} \backslash \mathcal{{V}_{\textrm{NN}}}$

\end{algorithmic}
\end{algorithm}

\newfloat{algorithm}{!t}{lop}
\begin{algorithm}
\caption{DEC-NN-gPoE}\label{alg:dec_nn_gpoe} 
{
\textbf{Input:} $\mathcal{D}_i(\boldsymbol{X}_i,\boldsymbol{y}_i)$, $\hat{\boldsymbol{\theta}}$, $\boldsymbol{C}_{\theta,i}^{-1}$,  $\mathcal{N}_i$,  $k$, $M$, $\boldsymbol{x}_*$, $\Delta$, $\eta_{\text{NN}}$

\textbf{Output:} ${\mu}_{\textrm{DEC-gPoE}}$, ${\sigma}_{\textrm{DEC-gPoE}}^{-2}$
}
\begin{algorithmic}[1]

\State Identical to Algorithm~\ref{alg:dec_nn_poe} with routine $\texttt{DEC-PoE}$ replaced by $\texttt{DEC-gPoE}$
\end{algorithmic}
\end{algorithm}

The covariance of the new GP (Proposition~\ref{prop:gp_mean_random_process}) yields,
\begin{align*}
   \boldsymbol{C}_{\theta, \mu} = \mathrm{Cov}[\boldsymbol{\mu}(\boldsymbol{x}_*), y(\boldsymbol{x}_*)] =  \begin{bmatrix}
\boldsymbol{K}_{\mu} & \boldsymbol{k}_{\mu,*}^{\intercal} \\
 \boldsymbol{k}_{\mu,*} & k_{**}
 \end{bmatrix},
\end{align*}
where $\boldsymbol{k}_{\mu,*}^\intercal \in \mathbb{R}^{M}$ is the cross-covariance. Interestingly, the cross-covariance element of the $i$-th agent represents the correlation of a local dataset $\mathcal{D}_i$ to the location of interest $\boldsymbol{x}_*$ with a positive scalar number $[\boldsymbol{k}_{\mu,*}]_i\in \mathbb{R}_{\geq 0}$. Essentially, this means that when the corresponding entry tends to zero $[\boldsymbol{k}_{\mu,*}]_i \rightarrow 0$, then agent $i$ is statistically uncorrelated to the location of interest $\boldsymbol{x}_*$. Every agent $i$ can compute locally its cross-covariance element as,
\begin{align}\label{eq:cbnn_cross_covar}
    [\boldsymbol{k}_{\mu,*}]_i = \boldsymbol{k}_{i,*}^{\intercal}\boldsymbol{C}_{\theta, i}^{-1} \boldsymbol{k}_{i,*},
\end{align}
where $ \boldsymbol{k}_{i,*} = k(\boldsymbol{X}_i,\boldsymbol{x}_*)$. The workflow of CBNN is as follows. Every agent $i$ computes its cross-covariance $[\boldsymbol{k}_{\mu,*}]_i$ \eqref{eq:cbnn_cross_covar}. When the correlation of agent $i$ to the location of interest is below a threshold $[\boldsymbol{k}_{\mu,*}]_i< \eta_{\textrm{NN}}$, then the agent does not take place to the aggregation of GP experts. In other words, the agent is not allowed to have an opinion on $y(\boldsymbol{x}_*)$. After all agents compute their correlation, the nearest neighbor subset of nodes is derived $\mathcal{V}_{\textrm{NN}} \subseteq \mathcal{V}$ with $M_{\textrm{NN}} = \mathrm{card}(\mathcal{V}_{\textrm{NN}})\leq M$.

\begin{lemma}\label{lem:cbnn}
Let the agents to operate in a spatial environment with input space of dimension $D=2$. Let each agent $i$ to collect local data $\mathcal{D}_i$ from a disjoint partition in stripes along the $y$-axis (Figure~\ref{fig:sst_observations}-(b)) and the network of agents to form a path graph topology (Figure~\ref{fig:graph_mixed}-(a)). Then, the exclusion of agents from the aggregation using CBNN preserves network connectivity.

\proof
The proof is provided in Appendix~\ref{app:proof:cbnn}.
\end{lemma}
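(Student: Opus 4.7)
The plan is to show that the set $\mathcal{V}_{\textrm{NN}}$ of agents retained by CBNN is a contiguous segment of the path graph; since any contiguous subset of a path is itself a path, the induced subgraph is automatically connected. The heart of the argument is a unimodality property of $i \mapsto [\boldsymbol{k}_{\mu,*}]_i$ that comes from the stripe geometry together with the separable SE kernel \eqref{eq:seKernel}.

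First, I would index the agents $1,\ldots,M$ so that the vertical stripes $S_i = [a_i,a_{i+1}] \times \mathbb{R}$ are ordered along the $x$-axis. By hypothesis, the path graph places agent $i$ adjacent to $i\pm 1$, so this labeling is compatible with the graph. Let $k^{\star}$ denote the stripe index that contains the horizontal coordinate $x_{*,1}$ of the query location $\boldsymbol{x}_* = (x_{*,1},x_{*,2})$.

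Second, I would exploit separability to write $k(\boldsymbol{x},\boldsymbol{x}') = k_1(x_1,x_1')\,k_2(x_2,x_2')$ with each factor strictly decreasing in the squared coordinate-wise distance. For every data point $\boldsymbol{X}_{i,n} \in S_i$ the horizontal factor is bounded above by $\alpha_i \coloneqq \sigma_f \exp\bigl(-\mathrm{dist}(x_{*,1},[a_i,a_{i+1}])^2/l_1^2\bigr)$, and $\alpha_i$ is strictly decreasing in $|i-k^{\star}|$ because the stripes are disjoint and ordered along the $x$-axis. Consequently $\boldsymbol{k}_{i,*}$ is bounded entry-wise by $\alpha_i$ times a vector whose entries depend only on the vertical coordinates of the data in stripe $i$.

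Third, and this is the main obstacle, I must push this entry-wise decay through the quadratic form $[\boldsymbol{k}_{\mu,*}]_i = \boldsymbol{k}_{i,*}^{\intercal}\boldsymbol{C}_{\theta,i}^{-1}\boldsymbol{k}_{i,*}$, because monotone decay of $\boldsymbol{k}_{i,*}$ does not generally transfer to a quadratic form weighted by an indefinite-sign matrix $\boldsymbol{C}_{\theta,i}^{-1}$. My approach is to factor $\boldsymbol{k}_{i,*} = \alpha_i\,\tilde{\boldsymbol{k}}_i$, where $\tilde{\boldsymbol{k}}_i$ carries only the vertical-axis factor and a bounded perturbation coming from the width of the stripe. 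Under the standing assumption that the sampling pattern inside each stripe is comparable, the Rayleigh-type bound $\underline{\lambda}(\boldsymbol{C}_{\theta,i}^{-1})\,\|\tilde{\boldsymbol{k}}_i\|_2^2 \leq \tilde{\boldsymbol{k}}_i^{\intercal}\boldsymbol{C}_{\theta,i}^{-1}\tilde{\boldsymbol{k}}_i \leq \overline{\lambda}(\boldsymbol{C}_{\theta,i}^{-1})\,\|\tilde{\boldsymbol{k}}_i\|_2^2$ lets me sandwich $[\boldsymbol{k}_{\mu,*}]_i$ between constant multiples of $\alpha_i^2$, so the strict monotonicity of $\alpha_i$ in $|i-k^{\star}|$ translates into strict unimodality of $[\boldsymbol{k}_{\mu,*}]_i$ with maximum at $i = k^{\star}$.

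Finally, unimodality forces the super-level set $\mathcal{V}_{\textrm{NN}} = \{i : [\boldsymbol{k}_{\mu,*}]_i \geq \eta_{\textrm{NN}}\}$ to be an interval $\{i_{\min},\ldots,i_{\max}\} \subseteq \{1,\ldots,M\}$. Restricting the path graph to $\mathcal{V}_{\textrm{NN}}$ retains every edge $(i,i+1)$ for $i_{\min}\leq i < i_{\max}$, so the induced subgraph is again a path and therefore strongly connected, which is exactly what is needed for the subsequent DAC routines among the CBNN-selected agents.
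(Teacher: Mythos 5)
Your overall strategy coincides with the paper's: both proofs reduce the claim to showing that the CBNN-retained agents form a contiguous segment of the path graph, which is then trivially connected. The paper gets there by observing that under the stripe partition one may take $l_1=l_2=l$ for the selection step, so the kernel becomes radially decreasing and the retained agents are exactly those whose stripes meet a disk of radius $r_{\textrm{NN}}$ centered at $\boldsymbol{x}_*$; a disk intersected with ordered vertical stripes selects consecutive indices. You instead keep the anisotropic separable kernel and try to prove unimodality of $i\mapsto[\boldsymbol{k}_{\mu,*}]_i$ directly, and you correctly isolate the real technical obstacle that the paper's proof passes over in silence: monotone entrywise decay of $\boldsymbol{k}_{i,*}$ does not transfer through the quadratic form $\boldsymbol{k}_{i,*}^{\intercal}\boldsymbol{C}_{\theta,i}^{-1}\boldsymbol{k}_{i,*}$, since $\boldsymbol{C}_{\theta,i}^{-1}$, while positive definite (not ``indefinite-sign'' as you write), generally has negative off-diagonal entries.

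The gap is that your resolution of that obstacle does not actually close it. The Rayleigh sandwich gives $\underline{\lambda}(\boldsymbol{C}_{\theta,i}^{-1})\,\alpha_i^2\|\tilde{\boldsymbol{k}}_i\|_2^2 \leq [\boldsymbol{k}_{\mu,*}]_i \leq \overline{\lambda}(\boldsymbol{C}_{\theta,i}^{-1})\,\alpha_i^2\|\tilde{\boldsymbol{k}}_i\|_2^2$ with constants that depend on the stripe $i$ through both the spectrum of $\boldsymbol{C}_{\theta,i}^{-1}$ and the vertical distribution of the data; strict decrease of $\alpha_i$ in $|i-k^{\star}|$ therefore yields unimodality of $[\boldsymbol{k}_{\mu,*}]_i$ only if the upper bound at the farther stripe falls below the lower bound at the nearer one, i.e., only if the gap in $\alpha_i^2$ dominates the ratio $\overline{\lambda}_i\|\tilde{\boldsymbol{k}}_i\|_2^2 / \underline{\lambda}_j\|\tilde{\boldsymbol{k}}_j\|_2^2$ between adjacent stripes. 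Your ``standing assumption that the sampling pattern inside each stripe is comparable'' is doing all the work here and is not among the hypotheses of the lemma; without it one can construct a nearer stripe whose data sit far from $x_{*,2}$ vertically and a farther stripe whose data cluster at $x_{*,2}$, so that the super-level set $\{i:[\boldsymbol{k}_{\mu,*}]_i\geq\eta_{\textrm{NN}}\}$ is not an interval. Either state that comparability condition explicitly as an added hypothesis and quantify it, or follow the paper's (admittedly coarser) route of collapsing to the isotropic kernel and arguing geometrically via the disk; as written, the unimodality step does not follow.
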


The advantages of using CBNN to identify statistically correlated agents are: i) the selection of nearest neighbors is justified through a covariance not just by using an arbitrary radius; ii) only the local dataset $\mathcal{D}_i$ is required to compute \eqref{eq:cbnn_cross_covar} with no data exchange, which satisfies Assumption~\ref{ass:noData}; iii) the total communications are reduced, as a subset of the agents takes part to the aggregation $\mathcal{V}_{\textrm{NN}}$; iv) the DAC converges faster (Lemma~\ref{lem:dac_convergence}); and v) the DALE can be employed as $\boldsymbol{H}$ is ensured to be full row rank. 
\subsubsection{DEC-NN-PoE Family}
The decentralized nearest neighbor PoE (DEC-NN-PoE) family is identical to the DEC-PoE family with a CBNN selection as shown in Figure~\ref{fig:dec_nn_gp_1}. The implementation details for DEC-NN-PoE are given in Algorithm~\ref{alg:dec_nn_poe} and for DEC-NN-gPoE in Algorithm~\ref{alg:dec_nn_gpoe}. The workflow is as follows. Every agent $i$ computes the local cross-covariance of CBNN $[\boldsymbol{k}_{\mu,*}]_i$ \eqref{eq:cbnn_cross_covar} and evaluates its involvement to the aggregation (Algorithm~\ref{alg:dec_nn_poe}-[Line~\ref{alg:dec_nn_poe:eval}]). After the CBNN terminates, the remaining agents $\mathcal{V}_{\textrm{NN}}$ run the DEC-PoE family routines (Algorithm~\ref{alg:dec_poe},~\ref{alg:dec_gpoe}). Finally, the predicted values are transmitted  to the agents that did not take part to the aggregation $\mathcal{V}\backslash \mathcal{V}_{\textrm{NN}}$. The time and space computational complexity is identical to the local PoE family (Table~\ref{tab:complexityPrediction}). The communication complexity for both methods is $\mathcal{O}(2s^{\textrm{end}}_{\textrm{DAC}}\mathrm{card}(\mathcal{N}_{\textrm{NN},i}))$. Both methods address Problem~\ref{pr:prediction1}.

\newfloat{algorithm}{!t}{lop}
\begin{algorithm}
\caption{DEC-NN-BCM}\label{alg:dec_nn_bcm} 
{
\textbf{Input:} $\mathcal{D}_i(\boldsymbol{X}_i,\boldsymbol{y}_i)$, $\hat{\boldsymbol{\theta}}$, $\boldsymbol{C}_{\theta,i}^{-1}$,  $\mathcal{N}_i$,  $k$, $M$, $\boldsymbol{x}_*$, $\Delta$, $\eta_{\text{NN}}$

\textbf{Output:} ${\mu}_{\textrm{DEC-BCM}}$, ${\sigma}_{\textrm{DEC-BCM}}^{-2}$
}
\begin{algorithmic}[1]

\State Identical to Algorithm~\ref{alg:dec_nn_poe} with routine $\texttt{DEC-PoE}$ replaced by $\texttt{DEC-BCM}$
\end{algorithmic}
\end{algorithm}

\newfloat{algorithm}{!t}{lop}
\begin{algorithm}
\caption{DEC-NN-rBCM}\label{alg:dec_nn_rbcm} 
{
\textbf{Input:} $\mathcal{D}_i(\boldsymbol{X}_i,\boldsymbol{y}_i)$, $\hat{\boldsymbol{\theta}}$, $\boldsymbol{C}_{\theta,i}^{-1}$,  $\mathcal{N}_i$,  $k$, $M$, $\boldsymbol{x}_*$, $\Delta$, $\eta_{\text{NN}}$

\textbf{Output:} ${\mu}_{\textrm{DEC-rBCM}}$, ${\sigma}_{\textrm{DEC-rBCM}}^{-2}$
}
\begin{algorithmic}[1]

\State Identical to Algorithm~\ref{alg:dec_nn_poe} with routine $\texttt{DEC-PoE}$ replaced by $\texttt{DEC-rBCM}$
\end{algorithmic}
\end{algorithm}

\newfloat{algorithm}{!t}{lop}
\begin{algorithm}
\caption{DEC-NN-grBCM}\label{alg:dec_nn_grbcm} 
{
\textbf{Input:} $\mathcal{D}_{+i}(\boldsymbol{X}_{+i},\boldsymbol{y}_{+i})$, $\hat{\boldsymbol{\theta}}$, $\boldsymbol{C}_{\theta,+i}^{-1}$,  $\mathcal{N}_i$,  $k$, $M$, $\boldsymbol{x}_*$, $\Delta$, $\eta_{\text{NN}}$

\textbf{Output:} ${\mu}_{\textrm{DEC-NN-grBCM}}$, ${\sigma}_{\textrm{DEC-NN-grBCM}}^{-2}$
}
\begin{algorithmic}[1]

\State Identical to Algorithm~\ref{alg:dec_nn_poe} with routine $\texttt{DEC-PoE}$ replaced by $\texttt{DEC-grBCM}$
\end{algorithmic}
\end{algorithm}
\subsubsection{DEC-NN-BCM Family}
The decentralized nearest neighbor BCM (DEC-NN-BCM) family is identical to the DEC-BCM family with a CBNN selection (Figure~\ref{fig:dec_nn_gp_1}). The implementation details for DEC-NN-BCM are given in Algorithm~\ref{alg:dec_nn_bcm}, for DEC-NN-rBCM in Algorithm~\ref{alg:dec_nn_rbcm}, and for DEC-NN-grBCM in Algorithm~\ref{alg:dec_nn_grbcm}. The time and space complexity is identical to the local complexity of the DEC-BCM family (Table~\ref{tab:complexityPrediction}). The communication complexity for DEC-NN-BCM and DEC-NN-rBCM is $\mathcal{O}(2s^{\textrm{end}}_{\textrm{DAC}}\mathrm{card}(\mathcal{N}_{\textrm{NN},i}))$, while for DEC-NN-grBCM is $\mathcal{O}(3s^{\textrm{end}}_{\textrm{DAC}}\mathrm{card}(\mathcal{N}_{\textrm{NN},i}))$. The DEC-NN-BCM and DEC-NN-rBCM methods address Problem~\ref{pr:prediction1}, while DEC-NN-grBCM addresses Problem~\ref{pr:prediction2}.

\begin{figure}[!t]
	\includegraphics[width=.45\columnwidth]{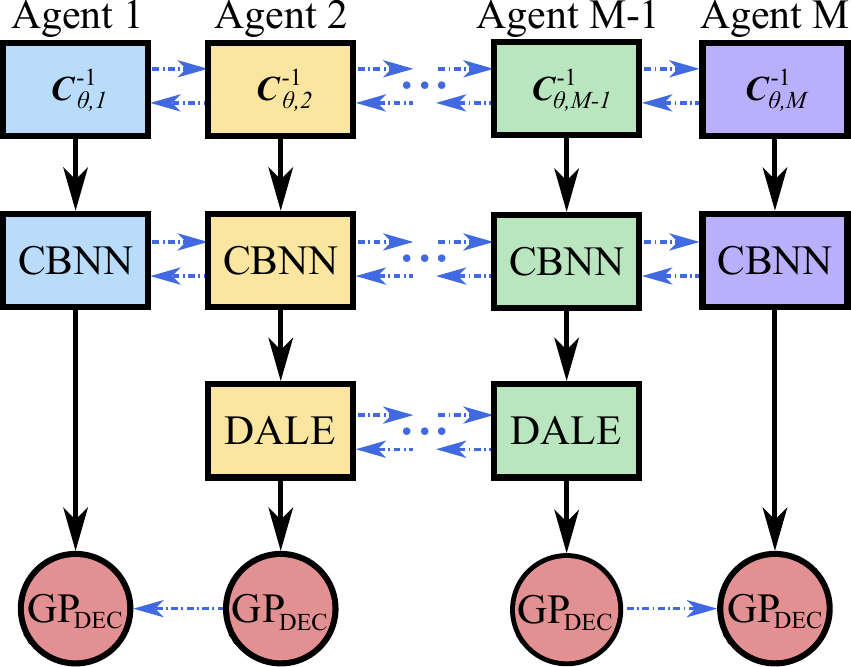}
	\centering
	\caption{The structure of the proposed nearest neighbor decentralized aggregation methods. Blue dotted lines correspond to communication (strongly connected). The covariance-based nearest neighbor (CBNN) method identifies statistically correlated agents---in this illustration the CBNN set is $\mathcal{V}_{\textrm{NN}}\in[2,M-1]$. Next, a distributed algorithm for solving a linear system of equations (DALE) is executed within the $\mathcal{V}_{\textrm{NN}}$ nodes. After convergence, the predicted values are communicated to the rest agents of the network.
	}
	\label{fig:dec_nn_gp_2}
\end{figure}

\begin{proposition}
Let the Assumption \ref{ass:repeat_connectivity},~\ref{ass:parData},~\ref{ass:independence},~\ref{ass:cond_independence},~\ref{ass:fleet} hold throughout the approximation. If $\omega<2/M$ then the DEC-NN-grBCM is consistent for any initialization. 

\proof
The proof is a direct consequence of \Cref{prop:grbcmCons} and \Cref{lem:dac_convergence},~\ref{lem:cbnn}.
\end{proposition}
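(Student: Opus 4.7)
The plan is to combine three ingredients: the connectivity-preservation guarantee of CBNN (Lemma~\ref{lem:cbnn}), the convergence of DAC under the given step-size condition (Lemma~\ref{lem:dac_convergence}), and the consistency of the centralized grBCM (Proposition~\ref{prop:grbcmCons}). First I would invoke Lemma~\ref{lem:cbnn} to argue that the subset $\mathcal{V}_{\textrm{NN}}$ returned by CBNN still induces a subgraph satisfying Assumption~\ref{ass:repeat_connectivity}, so the three DAC routines inside Algorithm~\ref{alg:dec_grbcm} can be legitimately executed on the reduced fleet of size $M_{\textrm{NN}}=\mathrm{card}(\mathcal{V}_{\textrm{NN}})$.

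Next I would check that $\omega<2/M$ implies $\epsilon\in(0,1/\Delta)$ (using $\Delta\le M-1<M$), so the hypotheses of Lemma~\ref{lem:dac_convergence} are met on the CBNN-reduced graph. Each of the three DAC processes then converges in finite time at every agent to the exact averages over $\mathcal{V}_{\textrm{NN}}$ of the quantities $\beta_i\sigma_{+i}^{-2}$, $\beta_i\sigma_{+i}^{-2}\mu_{+i}$ and $\beta_i$ introduced in Algorithm~\ref{alg:dec_grbcm}. Rescaling each converged value by $M_{\textrm{NN}}$ and plugging into \eqref{eq:mean_grbcm}--\eqref{eq:variance_grbcm} shows that $\mu_{\textrm{DEC-NN-grBCM}}(\boldsymbol{x}_*)$ and $\sigma^2_{\textrm{DEC-NN-grBCM}}(\boldsymbol{x}_*)$ coincide, for every initialization of the consensus variables, with the output of a centralized grBCM run over the ensemble $\{\mathcal{M}_i\}_{i\in \mathcal{V}_{\textrm{NN}}}$ equipped with the shared communication dataset $\mathcal{D}_{\textrm{c}}$ (which is well-defined under Assumption~\ref{ass:parData}).

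Consistency then reduces to an application of Proposition~\ref{prop:grbcmCons} to this centralized grBCM over $\mathcal{V}_{\textrm{NN}}$. Under Assumptions~\ref{ass:independence}, \ref{ass:cond_independence}, and \ref{ass:fleet}, and since $\mathcal{D}_{\textrm{c}}$ provides the global enrichment that grBCM requires for consistency, letting $N\to\infty$ yields $\mu_{\textrm{DEC-NN-grBCM}}(\boldsymbol{x}_*)\to \mu_{\textrm{full}}(\boldsymbol{x}_*)$ and $\sigma^2_{\textrm{DEC-NN-grBCM}}(\boldsymbol{x}_*)\to \sigma^2_{\textrm{full}}(\boldsymbol{x}_*)$ in the sense of Definition~\ref{def:consistency}.

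The subtle step, and the one I expect to be the main obstacle, is reconciling the CBNN restriction with Proposition~\ref{prop:grbcmCons}, which is stated over the full fleet. I would address it by showing that an agent pruned by CBNN---namely one with $[\boldsymbol{k}_{\mu,*}]_i<\eta_{\textrm{NN}}$---satisfies $\sigma_{+i}^{-2}(\boldsymbol{x}_*)\to \sigma_{\textrm{c}}^{-2}(\boldsymbol{x}_*)$ and $\beta_i\to 0$ as its local information about $\boldsymbol{x}_*$ vanishes, so its terms in \eqref{eq:mean_grbcm}--\eqref{eq:variance_grbcm} are absorbed into the $\sigma_{\textrm{c}}^{-2}$ correction and drop out of the asymptotic aggregate. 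This lets me identify the grBCM aggregate over $\mathcal{V}_{\textrm{NN}}$ with the grBCM aggregate over $\mathcal{V}$ in the limit, after which Proposition~\ref{prop:grbcmCons} completes the argument.
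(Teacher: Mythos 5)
Your proposal follows the same skeleton as the paper's proof, which is a one-line citation of exactly the three ingredients you use: Proposition~\ref{prop:grbcmCons} for the consistency of the centralized grBCM, Lemma~\ref{lem:dac_convergence} for convergence of the three DAC routines, and Lemma~\ref{lem:cbnn} for connectivity of the CBNN subgraph. Where you go beyond the paper is the final step: you explicitly reconcile the fact that Proposition~\ref{prop:grbcmCons} is stated for the full fleet with the CBNN restriction to $\mathcal{V}_{\textrm{NN}}$, by arguing that a pruned agent has $\sigma_{+i}^{2}\to\sigma_{\textrm{c}}^{2}$ and hence $\beta_i=\tfrac{1}{2}[\log\sigma_{\textrm{c}}^2-\log\sigma_{+i}^2]\to 0$, so its terms drop out of \eqref{eq:mean_grbcm}--\eqref{eq:variance_grbcm} asymptotically. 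The paper does not make this argument; it simply cites Lemma~\ref{lem:cbnn}, which only guarantees connectivity of the reduced graph, not that the reduced aggregate agrees with the full one. Your addition is the more honest treatment (strictly, with a nonzero threshold $\eta_{\textrm{NN}}$ the pruned contributions are small but not exactly zero, so the identification holds only up to an error controlled by $\eta_{\textrm{NN}}$). Two small corrections: the hypothesis $\omega<2/M$ is the JOR relaxation condition and plays no role in DEC-NN-grBCM, which uses only DAC, so your attempt to derive $\epsilon\in(0,1/\Delta)$ from it is a non sequitur --- the condition you actually need is $\epsilon\in(0,1/\Delta)$ itself, imposed directly as in Lemma~\ref{lem:dac_convergence}; and DAC converges asymptotically (terminated by the maximin criterion at a tolerance), not in finite time, so the equality with the centralized aggregate is approximate rather than exact.
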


\newfloat{algorithm}{!t}{lop}
\begin{algorithm}
\caption{\textsc{dec-NN-NPAE}}\label{alg:dec_nn_npae} 
{
\textbf{Input:} $\mathcal{D}_i(\boldsymbol{X}_i,\boldsymbol{y}_i)$, $\boldsymbol{X}$, $\hat{\boldsymbol{\theta}}$, $\boldsymbol{C}_{\theta,i}^{-1}$,  $\mathcal{N}_i$,  $k$, $M$, $\boldsymbol{x}_*$, $\Delta$, $\eta_{\text{NN}}$

\textbf{Output:} ${\mu}_{\textrm{DEC-NN-NPAE}}$, ${\sigma}_{\textrm{DEC-NN-NPAE}}^2$
}

\begin{algorithmic}[1]

\ForEach {$i \in \mathcal V $}

\State $[\boldsymbol{k}_A ]_i \gets \texttt{crossCov}(\boldsymbol{x}_*,k,  \hat{\boldsymbol{\theta}},\boldsymbol{X}_i, \boldsymbol{C}_{\theta,i}^{-1})$ \eqref{eq:local_covariance}

\State $[\boldsymbol{k}_{\mu,*}]_i \gets \texttt{CrossCovCBNN}(\boldsymbol{x}_*,k, \hat{\boldsymbol{\theta}},\boldsymbol{X}_i,\boldsymbol{C}_{\theta,i}^{-1})$ \eqref{eq:cbnn_cross_covar}

\ForEach {$j \in \mathcal{N}_i $}

\If {$[\boldsymbol{k}_{\mu,*}]_j <\eta_{\text{NN}}$} \label{alg:dec_nn_npae:eval}
\State $\mathcal{N}_{\textrm{NN},i} = \mathcal{N}_i\backslash j$; $\mathcal{V}_{\textrm{NN}} = \mathcal{V}\backslash j$

\State $j \gets \texttt{flooding}(\mathcal{V}_{\textrm{NN}})$

\Else 

\State $[\boldsymbol{k}_A ]_j \gets \texttt{flooding}(\mathcal{V}_{\textrm{NN}})$

\EndIf

\EndFor
\EndFor

\ForEach {$i \in \mathcal V_{\textrm{NN}} $}

\State $\mu_i \gets \texttt{localMean}(\boldsymbol{x}_*,k, \hat{\boldsymbol{\theta}},\mathcal{D}_i, \boldsymbol{C}_{\theta,i}^{-1})$ \eqref{eq:local_mean} 

\State  $\boldsymbol{C}_{\theta,i}^{-1}, \boldsymbol{X}_{i} \gets \texttt{flooding}(\mathcal{V}_{\textrm{NN}})$ \label{alg:dec_nn_npae:comm_neighbor}

\State $\boldsymbol{k}_{\textrm{NN},A} = [\boldsymbol{k}_A ]_i \cup \{[\boldsymbol{k}_A ]_j\}_{j\in\mathcal{V}_{\textrm{NN}}}$
\State $\mathrm{row}_{\textrm{NN},i}\{\boldsymbol{C}_{\theta,A}\} \gets \texttt{localCov}(\boldsymbol{x}_*,k, \boldsymbol{X}, \hat{\boldsymbol{\theta}},\mathcal{V}_{\textrm{NN}} )$ \eqref{eq:cross_covariance}

\State $\boldsymbol{H}_{i} = \mathrm{row}_{\textrm{NN},i}\{\boldsymbol{C}_{\theta,A}\}$; $M_{\textrm{NN}} = \mathrm{card}(\mathcal{V}_{\textrm{NN}})$

\State $b_{\mu,i} = \mu_{\textrm{NN},i}$; $\boldsymbol{b}_{\sigma^2} = \boldsymbol{k}_{\textrm{NN},A}$

\State $\boldsymbol{P}_{i} = I_{M_{\textrm{NN}}} - \boldsymbol{H}_{i}^{\intercal}(\boldsymbol{H}_{i}\boldsymbol{H}_{i}^{\intercal})^{-1}\boldsymbol{H}_{i}$

\State initialize $\boldsymbol{q}_{\mu,i}^{(0)} = b_{\mu,i}\oslash \boldsymbol{H}_{i}$;  $\boldsymbol{q}_{\sigma^2,i}^{(0)} = \boldsymbol{b}_{\sigma^2} \oslash \boldsymbol{H}_{i} $ 

\Repeat\Comment{2$\times$DALE}
\State communicate $\boldsymbol{q}_{\mu,i}^{(s)}$, $\boldsymbol{q}_{\sigma^2,i}^{(s)}$ to neighbors $\mathcal{N}_{\textrm{NN},i}$ \label{alg:dec_nn_npae:comm_neighbor}
\State $\boldsymbol{q}_{\mu,i}^{(s+1)} \gets \texttt{DALE}(\boldsymbol{P}_i,\boldsymbol{H}_i,b_{\mu,i},\{\boldsymbol{q}_{\mu,j}^{(s)}\}_{j\in \mathcal{N}_{\textrm{NN},i}},\mathcal{N}_{\textrm{NN},i}  )$ \eqref{eq:dale} \label{alg:dec_nn_npae_dale_mu}
\State $\boldsymbol{q}_{\sigma^2,i}^{(s+1)} \gets \texttt{DALE}(\boldsymbol{P}_i,\boldsymbol{H}_i,b_{\sigma^2,i},\{\boldsymbol{q}_{\sigma^2,j}^{(s)}\}_{j\in \mathcal{N}_{\textrm{NN},i}},\mathcal{N}_{\textrm{NN},i} )$ \eqref{eq:dale}
\label{alg:dec_nn_npae_dale_var}
\Until {{maximin stopping criterion}}

\State ${\mu}_{\textrm{DEC-NN-NPAE}}  = \boldsymbol{k}_{\textrm{NN},A}^{\intercal}\boldsymbol{q}_{\mu,i}^{\textrm{end}}$ 
\State ${\sigma}_{\textrm{DEC-NN-NPAE}}^2 ={\sigma_f^2}(k(\boldsymbol{x}_*,\boldsymbol{x}_* )- \boldsymbol{k}_{\textrm{NN},A}^{\intercal}\boldsymbol{q}_{\sigma^2,i}^{\textrm{end}})$ 

\EndFor

\end{algorithmic}
\end{algorithm}

\subsubsection{DEC-NN-NPAE}
We introduce the decentralized nearest neighbor NPAE (DEC-NN-NPAE) method to distribute the computations \eqref{eq:mean_npae}, \eqref{eq:variance_npae} of \textsc{NPAE} (Figure~\ref{fig:dec_nn_gp_2}). The DEC-NN-NPAE employs the CBNN and DALE \eqref{eq:dale} methods. By using CBNN, one can satisfy Assumption~\ref{ass:row_rank} and use the DALE. Thus, the DEC-NN-NPAE relaxes the strongly complete topology (Assumption~\ref{ass:strongly_complete}) to a strongly connected topology (Assumption~\ref{ass:repeat_connectivity}) as discussed in Remark~\ref{rem:dale_nearest_neighbor}. Implementation details are given in Algorithm~\ref{alg:dec_nn_npae}. The workflow is as follows. First, each entity computes: i) the local cross covariance $[\boldsymbol{k}_A]_i$ \eqref{eq:local_covariance}; and ii) the cross-covariance of CBNN $[\boldsymbol{k}_{\mu,*}]_i$ \eqref{eq:cbnn_cross_covar}. Next, we execute the CBNN routine to select the nearest neighbors. During the CBNN, if a criterion is met for an agent $j$ to stay in idle (Algorithm~\ref{alg:dec_nn_npae}-[Line~\ref{alg:dec_nn_npae:eval}]), it is removed from the list of agents $\mathcal{V}_{\textrm{NN}} = \mathcal{V}\backslash j$; and if not, the corresponding element of the local cross covariance $[\boldsymbol{k}_A ]_j$ is communicated to all other agents  $\mathcal{V}_{\textrm{NN}}\backslash i$. When the CBNN routine terminates, we execute the DALE method on the nearest neighbors $\mathcal{V}_{\textrm{NN}}$. Similarly to DEC-NPAE, the inputs $\{\boldsymbol{X}_j\}_{j \neq i}$ and the local inverted covariance matrices $\{\boldsymbol{C}_{\theta,j}^{-1}\}_{j\neq i}$ are communicated between CBNN agents. Next, we execute two parallel DALE algorithms with known matrix $\boldsymbol{H} = \boldsymbol{C}_{\theta,A}$ and known vectors: i) $\boldsymbol{b} = \boldsymbol{\mu}$; and ii) $\boldsymbol{b} = \boldsymbol{k}_{A}$. The first DALE is associated with the prediction mean $\mu_{\textrm{DEC-NN-NPAE}}$ (Algorithm~\ref{alg:dec_nn_npae}-[Line~\ref{alg:dec_nn_npae_dale_mu}]) and the second with the variance $\sigma_{\textrm{DEC-NN-NPAE}}^2$ (Algorithm~\ref{alg:dec_nn_npae}-[Line~\ref{alg:dec_nn_npae_dale_var}]).
After every DALE iteration, each agent $i$ communicates the computed vectors $\boldsymbol{q}_{\mu,i}^{(s)} $, $\boldsymbol{q}_{\sigma^2,i}^{(s)} $ to its neighbors $\mathcal{N}_{\textrm{NN},i}$ (Algorithm~\ref{alg:dec_nn_npae}-[Line \ref{alg:dec_nn_npae:comm_neighbor}]). Next, we update the vectors $\boldsymbol{q}_{\mu,i} $, $\boldsymbol{q}_{\sigma^2,i} $ (Algorithm~\ref{alg:dec_npae}-[Lines \ref{alg:dec_nn_npae_dale_mu}, \ref{alg:dec_nn_npae_dale_var}]) with the DALE method. When both DALE converge, each agent follows \eqref{eq:mean_npae}, \eqref{eq:variance_npae} to recover the DEC-NN-NPAE mean and variance. The local time and space complexity are identical to the local NPAE as shown in Table~\ref{tab:complexityPrediction}. Let  $s^{\textrm{end}}_{\textrm{DALE}} $ be the maximum number of iterations of DALE to converge. The total communications during the CBNN yields $\mathcal{O}(M_{\textrm{NN}})$ and during DALE $\mathcal{O}(2s^{\textrm{end}}_{\textrm{DALE}}\mathrm{card}(\mathcal{N}_{\textrm{NN},i}) + M_{\textrm{NN}}N_i^2+M_{\textrm{NN}}DN_i) = \mathcal{O}(2s^{\textrm{end}}_{\textrm{DALE}}\mathrm{card}(\mathcal{N}_{\textrm{NN},i}) +M_{\textrm{NN}}(N^2/M_{\textrm{NN}}^2+DN/M_{\textrm{NN}}))$ for all $i\in \mathcal{V}_{\textrm{NN}}$. DEC-NN-NPAE addresses Problem~\ref{pr:prediction2}.

\begin{proposition}\label{thrm:danGP}
Let Assumption \ref{ass:repeat_connectivity},~\ref{ass:parData},~\ref{ass:independence},~\ref{ass:fleet} hold throughout the approximation. Then, the DEC-NN-NPAE is consistent for any initialization of DALE.

\proof
The proof is a direct consequence of \Cref{prop:consistency} and \Cref{lem:dale_convergence},~\ref{lem:cbnn}.
\end{proposition}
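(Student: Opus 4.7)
The plan is to decompose the consistency claim into two pieces: first show that the two DALE routines embedded in Algorithm~\ref{alg:dec_nn_npae} recover exactly the linear-system solutions that define NPAE restricted to the CBNN subset $\mathcal{V}_{\textrm{NN}}$, and second invoke the asymptotic consistency of NPAE (Proposition~\ref{prop:consistency}) on that subset to close the argument. This mirrors the structure suggested by the cited lemmas, namely Lemma~\ref{lem:dale_convergence} and Lemma~\ref{lem:cbnn}.

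First I would verify that the two DALE instances solve $\boldsymbol{C}_{\theta,A}\boldsymbol{q}_{\mu}=\boldsymbol{\mu}$ and $\boldsymbol{C}_{\theta,A}\boldsymbol{q}_{\sigma^2}=\boldsymbol{k}_A$ on $\mathcal{V}_{\textrm{NN}}$. Each agent $i\in\mathcal{V}_{\textrm{NN}}$ supplies the row $\boldsymbol{H}_i=\mathrm{row}_i\{\boldsymbol{C}_{\theta,A}\}$ and the scalar right-hand sides $b_{\mu,i}=\mu_i$ and $b_{\sigma^2,i}=[\boldsymbol{k}_A]_i$, which is exactly the data format required by DALE. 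The row-rank condition of Assumption~\ref{ass:row_rank} follows automatically, because $\boldsymbol{C}_{\theta,A}$ is symmetric positive definite and any principal submatrix indexed by $\mathcal{V}_{\textrm{NN}}$ inherits that property, so every row has full row rank trivially. Lemma~\ref{lem:cbnn} ensures that removing the statistically uncorrelated agents does not destroy the connectivity properties needed by Assumption~\ref{ass:repeat_connectivity}, so the network restricted to $\mathcal{V}_{\textrm{NN}}$ still satisfies the repeated-connectivity hypothesis. Lemma~\ref{lem:dale_convergence} then gives $\lim_{s\to\infty}\boldsymbol{q}_{\mu,i}^{(s)}=\boldsymbol{C}_{\theta,A}^{-1}\boldsymbol{\mu}$ and $\lim_{s\to\infty}\boldsymbol{q}_{\sigma^2,i}^{(s)}=\boldsymbol{C}_{\theta,A}^{-1}\boldsymbol{k}_A$ for every initialization, so the returned quantities $\boldsymbol{k}_{\textrm{NN},A}^{\intercal}\boldsymbol{q}_{\mu,i}^{\textrm{end}}$ and $k_{**}-\boldsymbol{k}_{\textrm{NN},A}^{\intercal}\boldsymbol{q}_{\sigma^2,i}^{\textrm{end}}$ agree with the NPAE prediction mean \eqref{eq:mean_npae} and variance \eqref{eq:variance_npae} computed over the aggregate dataset $\bigcup_{j\in\mathcal{V}_{\textrm{NN}}}\mathcal{D}_j$.

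Second, with DEC-NN-NPAE identified as NPAE on $\mathcal{V}_{\textrm{NN}}$, I would appeal directly to Proposition~\ref{prop:consistency}, which asserts consistency of NPAE for any collection of local prediction means. Because CBNN excludes only those agents whose cross-covariance entry $[\boldsymbol{k}_{\mu,*}]_j$ falls below the threshold $\eta_{\textrm{NN}}$, the retained set still satisfies the hypotheses of Proposition~\ref{prop:consistency}, and the aggregated mean and variance converge to those of the full GP as $N\to\infty$, which is the definition of consistency in Definition~\ref{def:consistency}.

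The hardest part I expect is not any single technical step but making the CBNN restriction rigorous: one must argue that the excluded agents contribute asymptotically negligibly (since their cross-covariances vanish) so that using only $\mathcal{V}_{\textrm{NN}}$ does not break the asymptotic equivalence with the full GP. Once this is granted, the remainder is routine bookkeeping tying the DALE output indices to the NPAE formulas and invoking the three cited results in sequence.
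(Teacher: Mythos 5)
Your proposal follows essentially the same route as the paper: the paper's proof is exactly the chain you describe, invoking Lemma~\ref{lem:cbnn} to preserve connectivity on $\mathcal{V}_{\textrm{NN}}$, Lemma~\ref{lem:dale_convergence} to conclude that the two DALE instances recover the NPAE linear-system solutions for any initialization, and Proposition~\ref{prop:consistency} for consistency of the resulting aggregation. Your added expansion (checking the full-row-rank hypothesis via positive definiteness of the principal submatrix, and flagging that the asymptotic negligibility of the excluded agents deserves a rigorous argument) is a faithful and somewhat more careful elaboration of the same citation chain, not a different proof.
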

\section{Numerical Experiments}\label{sec:numericalExperiments}
In this section, we perform numerical experiments to illustrate the efficiency of the proposed methods. Synthetic data with known hyper-parameters values are employed to evaluate the GP training methods in four aspects: i) hyper-parameter estimation accuracy; ii) computation time per agent; iii) communications per agent; and iv) comparison with centralized GP training techniques. A real-world dataset of sea surface temperature (SST) \citep{MUR4nasa,chin2017multi} 
is used to assess the GP prediction algorithms in four aspects: i) prediction accuracy; ii) uncertainty quantification; iii) communications per agent; and iv) comparison with aggregation of GP experts methods. All numerical experiments are conducted in MATLAB using the GPML package \citep{rasmussen2010gaussian} on an Intel Core i7-6700 CPU @3.40 GHz with 32.0 GB memory RAM. 

\begin{figure*}[!t]
	\includegraphics[width=\textwidth]{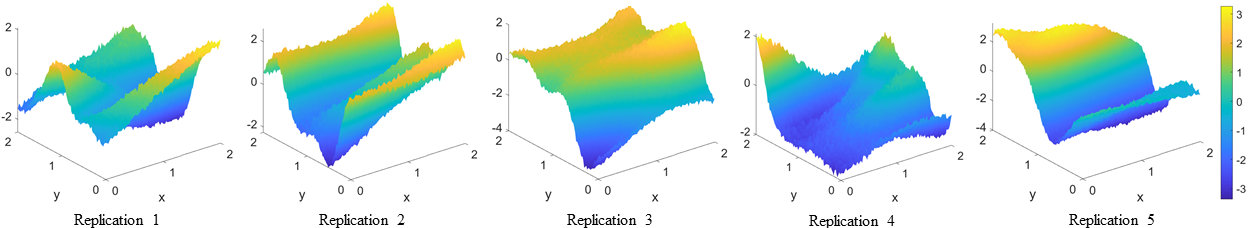}
	\centering
	\caption{Five replications of the synthetic GP with known hyper-parameter values $\boldsymbol{\theta} = (1.2, 0.3, 1.3, 0.1)^{\intercal}$  for $N = 8,100$ data.}
	\label{fig:replications_training}
\end{figure*}

\begin{figure*}[!t]
	\includegraphics[width=\textwidth]{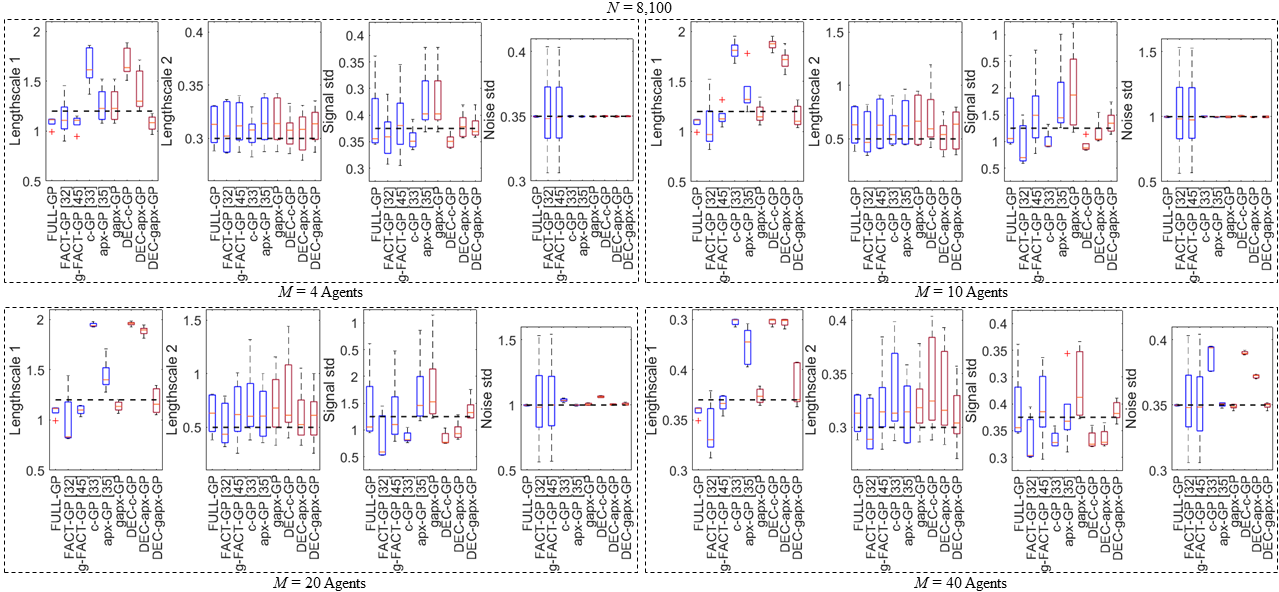}
	\centering
	\caption{Accuracy of GP hyper-parameter training using $N=8,100$ data for four fleet sizes and 10 replications. The true values are demonstrated with a black dotted line. The existing GP training methods are shown in blue boxes (FULLGP, FACT-GP \citep{deisenroth2015distributed}, g-FACT-GP \citep{liu2018generalized}, c-GP \citep{xu2019wireless}, apx-GP \citep{xie2019distributed}) and the proposed in maroon boxes (gapx-GP, DEC-c-GP, DEC-apx-GP, and DEC-gapx-GP). 
	}
	\label{fig:boxplots_training_8100}
\end{figure*}

\begin{table}[!t]
\caption{Time \& Communication Rounds of GP Training Methods}
\centering
\footnotesize{\begin{tabular}{ c l r r r r }
\toprule

 &  &  \multicolumn{2}{c}{$N=8,100$}  & \multicolumn{2}{c}{$N=32,400$} \\
 
 \cmidrule(lr){3-4} \cmidrule(lr){5-6} 
 
 \multirow{2}{*}{$M$} & \multirow{2}{*}{Method}  & \multirow{2}{*}{Time [s]} & Comms &  \multirow{2}{*}{Time [s]} & Comms \\ 
  &  &  & $s^{\textrm{end}}$ &   & $s^{\textrm{end}}$ \\ 
\midrule

 & FULL-GP & 2,114.2 & - & - & -   \\
  \midrule 
 \multirow{8}{*}{4} & FACT-GP \citep{deisenroth2015distributed} &  75.9  & 186.0 & 2,361.9 & 196.0   \\
 &  {g-FACT-GP} \citep{liu2018generalized} &  332.1 & 160.0 & >3,000 & -   \\
 &  {c-GP} \citep{xu2019wireless} &  404.1 & 141.4 & - & -  \\
 &  {apx-GP} \citep{xie2019distributed} & \textbf{26.8} & \phantom{0}43.6 & \textbf{817.6} &   45.2 \\
 &  {gapx-GP} & 67.3 & \phantom{0}\textbf{39.7} & 2,074.2 &  \textbf{42.1}  \\
 \cmidrule(lr){2-6}  
 &  {DEC-c-GP} &  414.1 & 100 & - & -  \\
 &  {DEC-apx-GP} &  \textbf{61.9} & 100 & \textbf{1,821.3} & 100   \\
 &  {DEC-gapx-GP} & 328.1 & 100 & >3,000 & -   \\

  \midrule 
 \multirow{8}{*}{10} &  FACT-GP \citep{deisenroth2015distributed} & 9.8 & 179.6 & 228.2 &  194.2\\
  &  {g-FACT-GP} \citep{liu2018generalized} & 31.8 & 131.8 & 1,035.6 & 155.2 \\
 & {c-GP} \citep{xu2019wireless} & 92.1 & 193.8 & - & -   \\
 & {apx-GP} \citep{xie2019distributed} & \textbf{3.8} & \phantom{0}47.8 & \textbf{88.8} & \phantom{0}46.8   \\
 & {gapx-GP} & 15.1 & \phantom{0}\textbf{42.2} & 522.2 & \phantom{0}\textbf{44.3} \\
 \cmidrule(lr){2-6}  
 & {DEC-c-GP} & 82.4 & 100 & - & -   \\
 & {DEC-apx-GP} & \textbf{8.4} & 100 & \textbf{188.8} & 100   \\
 & {DEC-gapx-GP} & 38.5 & 100 & 1,123.4 & 100  \\
 
  \midrule 
 \multirow{8}{*}{20} & FACT-GP \citep{deisenroth2015distributed} & 2.6 & 172.6 & 46.6 & 226.2 \\
  &  {g-FACT-GP} \citep{liu2018generalized} & 7.0 & 127.2 & 199.4 & 167.6   \\
 & {c-GP} \citep{xu2019wireless} & 31.4 & 127.8 & - & -  \\
 & {apx-GP} \citep{xie2019distributed} & \textbf{1.3} & \phantom{0}56.2 & \textbf{18.3} & \phantom{0}49.8  \\
 & {gapx-GP} & 4.1 & \phantom{0}\textbf{50.6} & 85.8 & \phantom{0}\textbf{45.6}  \\
 \cmidrule(lr){2-6}  
 & {DEC-c-GP} & 30.4 & 100 & - & -  \\
 & {DEC-apx-GP} & \textbf{2.2} & 100 & \textbf{36.9} & 100  \\
 & {DEC-gapx-GP} & 8.1 & 100 & 185.8 & 100  \\
 
  \midrule 
 \multirow{8}{*}{40} & FACT-GP \citep{deisenroth2015distributed} & 0.5 & 139.6  & \phantom{0}9.1 & 160.0 \\
  &  {g-FACT-GP} \citep{liu2018generalized} & 1.8 & 112.2 & 30.9 & 128.6 \\
 & {c-GP} \citep{xu2019wireless} & 8.9 & \phantom{0}66.6 & - & - \\
 & {apx-GP} \citep{xie2019distributed} & \textbf{0.3} & \phantom{0}56.4  & \phantom{0}\textbf{4.6} & \phantom{0}54.4 \\
 & {gapx-GP} & 1.2 & \phantom{0}\textbf{51.2} & 17.9 & \phantom{0}\textbf{49.2} \\
 \cmidrule(lr){2-6}  
 & {DEC-c-GP} & 9.1 & 100 & - & -  \\
 & {DEC-apx-GP} & \textbf{0.5} & 100 & \phantom{0}\textbf{8.2} & 100 \\
 & {DEC-gapx-GP} & 2.5 & 100 & 36.4 & 100 \\

\bottomrule
\end{tabular}
    }\label{tab:gp_training_sim}
\end{table}
\subsection{Decentralized GP Training}
We generate two sets of data with total size $N = 8,100$ and $N=32,400$ using the observation model \eqref{eq:model} and the separable squared exponential covariance function \eqref{eq:seKernel} with hyper-parameter values $\boldsymbol{\theta} = ( l_1, l_2, \sigma_f, \sigma_{\epsilon})^{\intercal} = (1.2, 0.3, 1.3, 0.1)^{\intercal}$. For every set of random functions we perform 10 replications to avoid random assignment of data. An example of five replications for $N = 8,100$ data is presented in Figure~\ref{fig:replications_training}. Note that the smaller the length-scale $l$, the more wiggly is the random function. Since $l_2<l_1$, the profile of the produced random functions is more uneven along the $y$-axis rather than the $x$-axis. Next, we equally partition the space of interest $\mathbb{S} = [0,2]^2$ along the $x$-axis according to fleet sizes $M = \{4,10,20,40\}$, and assign local datasets that lie in the corresponding local space, e.g., for $M=10$ agents see Figure~\ref{fig:sst_observations}-(b). We compare the centralized GP training methods FACT-GP \citep{deisenroth2015distributed}, g-FACT-GP \citep{liu2018generalized}, c-GP \citep{xu2019wireless}, and apx-GP \citep{xie2019distributed} to the proposed gapx-GP. In addition, we include in the comparison the proposed decentralized GP training methods DEC-c-GP, DEC-apx-GP, and DEC-gapx-GP. All decentralized GP training methods follow a path graph topology as depicted in~Figure~\ref{fig:graph_mixed}. Thus, the maximum degree of the graph is $\Delta = 2$ and its diameter $\mathrm{diam}(\mathcal{G})=M-1$. All methods start from the same initial vector value $(l_1^{(0)},l_2^{(0)}, \sigma_f^{(0)}, \sigma_{\epsilon}^{(0)})^{\intercal} = (2,0.5,1,1)^{\intercal}$. The penalty parameter of the augmented Lagrangian is set to $\rho=500$, the decentralized ADMM tolerance for convergence  $\textrm{TOL}_{\textrm{ADMM}}=10^{-3}$, the positive Lipschitz constant of the approximation \eqref{eq:pxADMM_approx} $L_i=5,000$, and the regulation positive constant of the approximation \eqref{eq:dec_px_ADMM_approx} $\kappa_i = 5,000$ for all $i\in \mathcal{V}$. For the nested optimization problem of c-GP \eqref{eq:cadmm_theta} and  DEC-c-GP \eqref{eq:dec_c_admm_theta} we use gradient descent with step size $\alpha = 10^{-5}$. All decentralized GP training methods terminate after $s^{\textrm{end}}=100$ predetermined communication rounds (Remark~\ref{rem:con_dec_xx_ADMM}), yielding identical communication complexity (Table~\ref{tab:complexityDecTrainingADMM}). Any algorithm that takes over 3,000 s to be executed is terminated.  

\begin{figure*}[!t]
	\includegraphics[width=\textwidth]{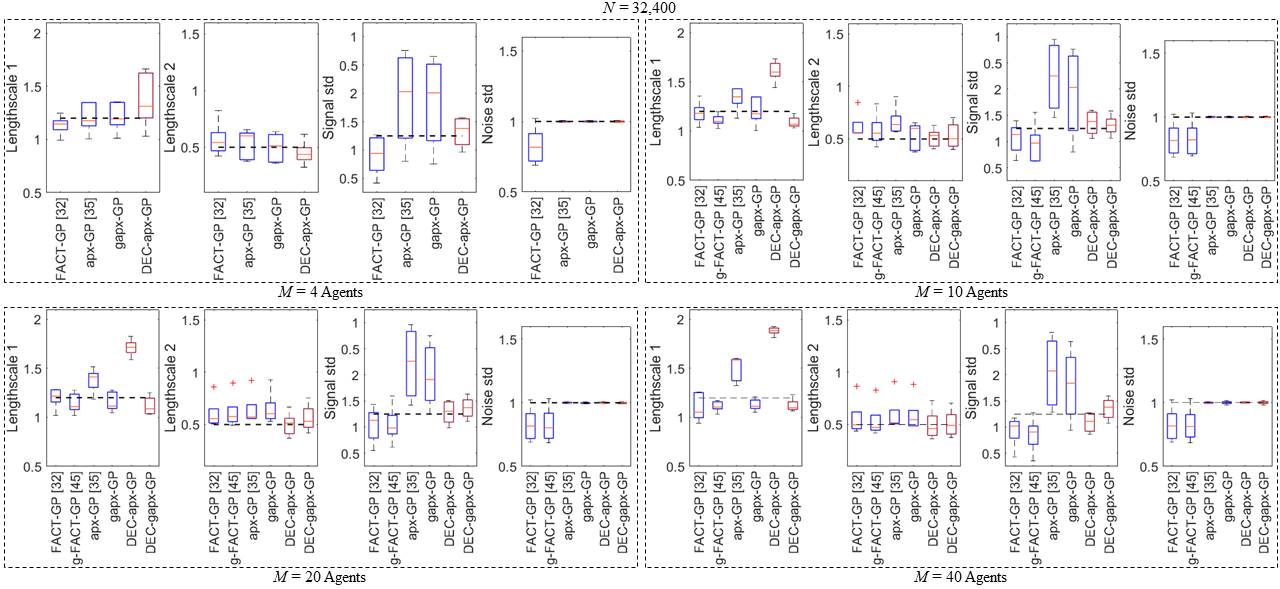}
	\centering
	\caption{Accuracy of GP hyper-parameter training using $N=32,400$ data for four fleet sizes and 10 replications. The true values are demonstrated with a black dotted line. The existing GP training methods are shown in blue boxes (FACT-GP \citep{deisenroth2015distributed}, g-FACT-GP \citep{liu2018generalized}, apx-GP \citep{xie2019distributed}) and the proposed in maroon boxes (DEC-apx-GP, and DEC-gapx-GP).
	}
	\label{fig:boxplots_training_32400}
\end{figure*}

In Figure~\ref{fig:boxplots_training_8100}, we show the boxplots of the estimated hyper-parameters for all GP training methods and all fleet sizes using $N=8,100$ data. Blue boxes illustrate existing GP training methods and maroon boxes represent the proposed GP training methods. The corresponding average computation time per agent and the communication rounds are shown in Table~\ref{tab:gp_training_sim}. Provided the communication rounds $s^{\textrm{end}}$, the  communication complexity can be computed according to Table~\ref{tab:complexityTraining},~\ref{tab:complexityTrainingADMM}. For the case of $M=4$ agents, all centralized methods provide accurate hyper-parameters estimates except of the c-GP on $l_1$. In terms of computation time, c-GP is the more demanding method, while FACT-GP, apx-GP, and gapx-GP convergence very fast, outperforming FULL-GP two orders of magnitude for similar or even better level of accuracy. The least communication rounds are achieved by the proposed methodology~gapx-GP which results in the lowest communication complexity. Regarding the decentralized methods, both DEC-apx-GP and DEC-gapx-GP produce accurate hyper-parameter estimates, while DEC-c-GP is inaccurate on $l_1$. DEC-apx-GP requires less computation time per agent than the other two decentralized methods. As we increase the number of agents ($M=10$ and $M=20$ agents), the hyper-parameter estimation accuracy deteriorates for all centralized methods except of the proposed gapx-GP. In addition, gapx-GP results in the lowest communication complexity and in competitive computation time per agents, outperformed only by apx-GP. Regarding the decentralized GP training methods, the hyper-parameter estimation of DEC-gapx-GP is the most accurate. Both DEC-apx-GP and DEC-c-GP provide reasonable estimates for all hyper-parameters other than $l_1$ which is inaccurate. The lowest computation per entity is measured for DEC-apx-GP, while the most accurate method DEC-gapx-GP requires four times more computations than DEC-apx-GP. For $M=40$ agents, the proposed gapx-GP produces the most accurate hyper-parameter estimates with only g-FACT-GP competing with reasonable accuracy. However, g-FACT-GP requires more computation time per agent and exchanges double the amount of messages to converge than the proposed gapx-GP. From the proposed decentralized methods, only DEC-gapx-GP is accurate  (Remark~\ref{rem:con_apx_ADMM_GP}) and requires reasonable computation per agent. 

We present the boxplots of the estimated hyper-parameters using $N=32,400$ data and for all fleet sizes in Figure~\ref{fig:boxplots_training_32400}, while in Table~\ref{tab:gp_training_sim} we list the corresponding computation time per agent as well as the communication rounds. The FULL-GP, c-GP, and DEC-c-GP methods are not implemented for $N=32,400$ data, as we expect significantly high computation time (Remark~\ref{rem:con_c-ADMM}). For $M=4$ agents, both g-FACT-GP and DEC-gapx-GP exceeded the time limit (3,000 s) for convergence. Among the feasible centralized methods for $N=32,400$ data, apx-GP and gapx-GP are more accurate than FACT-GP. All methods are computationally expensive as each agent $i$ is assigned with $N_i = 32,400/4=8,100 $ data, yet apx-GP is the fastest. Regarding the decentralized methods, DEC-apx-GP is the only feasible method and produces accurate hyper-parameter estimates. As we increase the number of agents ($M=10$ and $M=20$ agents), the number of data is distributed to local agents, and thus g-FACT-GP and DEC-gapx-GP can be implemented. Since the number of data is high, all centralized methods produce accurate hyper-parameters estimates. Yet, apx-GP is computationally more efficient. Although the proposed gapx-GP requires more time to converge, the communication overhead is the least. Among the decentralized methods, DEC-gapx-GP is more accurate, but computationally more demanding than DEC-apx-GP. For the case of $M=40$ agents, the most accurate centralized hyper-parameter estimator is the gapx-GP with the lowest information exchange requirements. The fastest centralized method is the apx-GP, yet its accuracy is moderate. Regarding the decentralized methods, DEC-gapx-GP remains accurate and requires reasonable computation time.

Overall, for $N=8,100$ the proposed gapx-GP is the most accurate centralized GP training method, especially as the fleet size increases. Moreover, gapx-GP requires reasonable computations and it is the most efficient method with respect to communication. Among the proposed decentralized GP training methods, DEC-gapx-GP is the most accurate method, yet DEC-apx-GP produces competitive hyper-parameter estimates for medium and small fleet size. DEC-apx-GP is the fastest decentralized GP training method, while DEC-gapx-GP is more demanding, yet requires reasonable computational resources. In principle, as we increase the number of agents, the computation is distributed and thus yields lower computation time per agent. Note that the hyper-parameter estimation accuracy improves as we obtain more data which leads to higher accuracy for $N=32,400$ data (Remark~\ref{rem:multi_modal}). Some techniques are not scalable for the larger dataset $N=32,400$, especially when the fleet size is small $M=4$. However, for larger fleet size the distribution of data facilitates the execution of most methods. Among the centralized methods, apx-GP is accurate and requires significantly less computational time for small fleet size, but as we increase the number of agents the proposed gapx-GP becomes computationally more efficient and remains accurate. Similarly, DEC-apx-GP is accurate and computationally less demanding for small fleet size, but DEC-gapx-GP becomes more computationally efficient as we distribute the data to more agents.

\begin{figure}[!t]
	\includegraphics[width=.8\columnwidth]{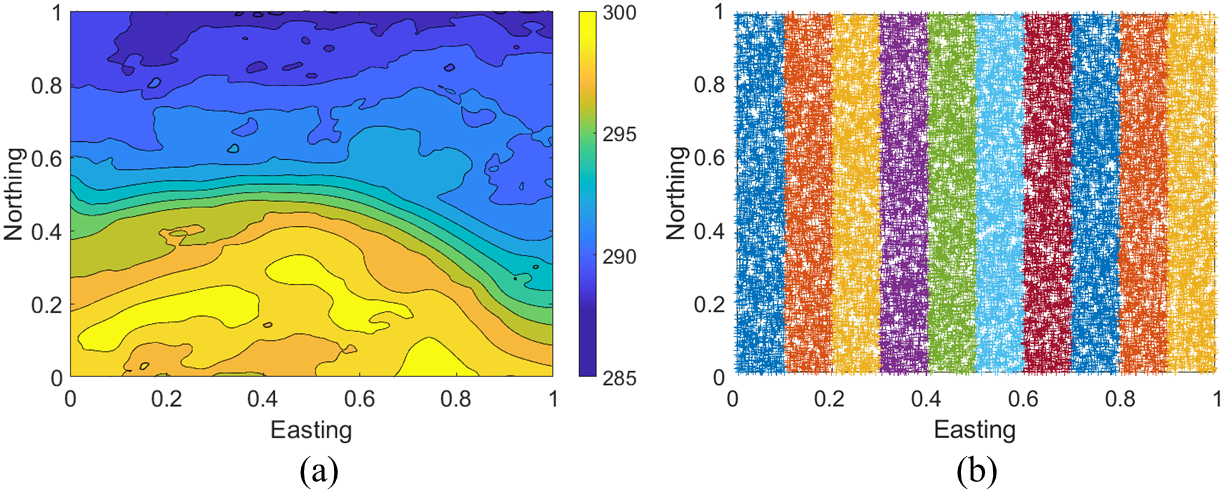}
	\centering
	\caption{(a) SST field \citep{MUR4nasa}; (b) Observations of each agent for $M=10$.
	}
	\label{fig:sst_observations}
\end{figure}

\subsection{Decentralized GP Prediction}
We use a real-world dataset of sea surface temperature (SST) \citep{MUR4nasa,chin2017multi}. We extract $122,500$ SST values from ($36.4^{\textrm{o}},-73.0^{\textrm{o}}$) to ($40.0^{\textrm{o}},-69.4^{\textrm{o}}$) measured in Kelvins. The area corresponds to $400\, $km $\times$ $400\, $km of the Atlantic ocean and for demonstration is normalized over $[0,1]^2$ (Figure \ref{fig:sst_observations}-(a)). Additionally, we add iid noise $\epsilon \sim \mathcal{N}(0,0.25)$ to the observations \eqref{eq:model}. We use $20,000$ observations, equally distributed for four fleet sizes ${M} = \{4,10,20,40\}$. An example of data distribution assignment for $M=10$ agents is shown in Figure \ref{fig:sst_observations}-(b). The GP training of the hyper-parameters is performed with the DEC-gapx-GP method. We employ 13 techniques over $N_{\textrm{t}} = 100$ prediction points: i) DEC-PoE with path graph; ii) DEC-NN-PoE with path graph; iii) DEC-gPoE with path graph; iv) DEC-NN-gPoE with path graph; v) DEC-BCM with path graph; vi) DEC-NN-BCM with path graph; vii) DEC-rBCM with path graph; viii) DEC-NN-rBCM with path graph; ix) DEC-grBCM with path graph; x) DEC-NN-grBCM with path graph; xi) DEC-NPAE with strongly complete graph; xii) DEC-NPAE$^{\star}$ with path graph and strongly complete graph; and xiii) DEC-NN-NPAE with path graph, where the graph types are shown in Figure \ref{fig:graph_mixed}. For every scenario we perform $15$ replications to avoid random assignment of data. 

The quality assessment is accomplished with two metrics. The root mean square error  $\textrm{RMSE} = [{1}/{N} \sum_{i = 1}^{N} ( \mu(\boldsymbol{x}_{*}) - {y}(\boldsymbol{x}_{*}))^2]^{1/2}$ assesses the prediction mean. The negative log predictive density $\textrm{NLPD} = -{1}/{N} \sum_{i = 1}^{N} \log p(\hat{\boldsymbol{y}}_*\mid \mathcal{D}, \boldsymbol{x}_*)$ characterizes the prediction mean and variance, where $p(\hat{\boldsymbol{y}}_*\mid \mathcal{D}, \boldsymbol{x}_*)$ is the predictive distribution \citep{quinonero2005evaluating}. 

In Figure~\ref{fig:RMSE_NLPD_PoE_family}, we show the average RMSE and NLPD values for four fleet sizes and 15 replications using the decentralized PoE-based methods. Since the proposed algorithms DEC-PoE, DEC-NN-PoE; DEC-gPoE, and DEC-NN-gPoE approximate the PoE; and gPoE respectively, the optimal RMSE and NLPD values are that of PoE~\citep{hinton2002training} and gPoE~\citep{cao2014generalized}. All PoE-based methods produce identical RMSE accuracy, illustrating that the proposed decentralized methods converge with almost zero approximation error. Indeed, PoE and gPoE have identical mean prediction values, validating Proposition~\ref{prop:poe_gpoe_mean}. In terms of uncertainty quantification, DEC-PoE and DEC-NN-PoE are characterised by the same NLPD values with the PoE. Moreover, they all fail to report NLPD values for larger fleet sizes ($M=20$ and $M=40$ agents), as the predictive variance of PoE \eqref{eq:variance_poe} is additive and leads to overconfident results which subsequently yield infinite values of NLPD. Similarly, DEC-gPoE and DEC-NN-gPoE report identical NLPD values with the gPoE. Both nearest neighbor methods (DEC-NN-PoE and DEC-NN-gPoE) produce results that are indistinguishable to  PoE and gPoE respectively, even though 42.5\% of agents were excluded on average from the aggregation (Table~\ref{tab:gp_prediction_sim}).

\begin{figure}[!t]
	\includegraphics[width=.75\columnwidth]{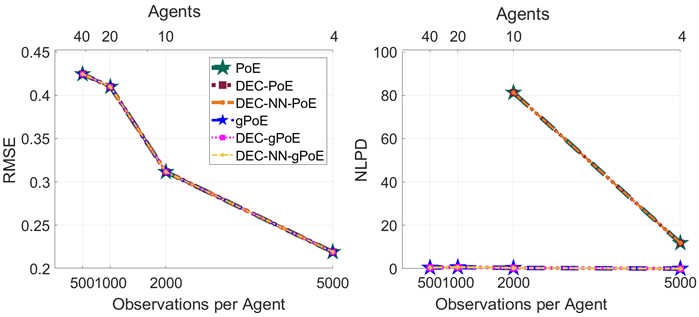}
	\centering
	\caption{Average RMSE and NLPD values for four fleet sizes and 15 replications with the PoE-based methods on a path graph topology.
	}
	\label{fig:RMSE_NLPD_PoE_family}
\end{figure}

\begin{figure}[!t]
	\includegraphics[width=.7\columnwidth]{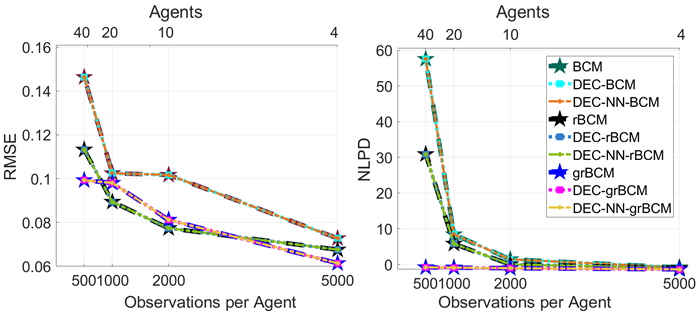}
	\centering
	\caption{Average RMSE and NLPD values for four fleet sizes and 15 replications with the BCM-based methods on a path graph topology.
	}
	\label{fig:RMSE_NLPD_BCM_family}
\end{figure}

In Figure~\ref{fig:RMSE_NLPD_BCM_family}, we present the average RMSE and NLPD values for four fleet sizes and 15 replications using the decentralized BCM-based methods. Since the proposed algorithms DEC-BCM, DEC-NN-BCM; DEC-rBCM, DEC-NN-rBCM; DEC-grBCM, and DEC-NN-grBCM approximate the BCM~\citep{tresp2000bayesian}; rBCM~\citep{deisenroth2015distributed}; and grBCM~\citep{liu2018generalized} respectively, the optimal RMSE and NLPD values are that of BCM, rBCM and grBCM. We observe that DEC-BCM and DEC-NN-BCM converge to BCM as they report identical RMSE and NLPD values. Similarly, for the rest decentralized methods, i.e., DEC-rBCM, DEC-NN-rBCM converge to rBCM, and DEC-grBCM, DEC-NN-grBCM converge to grBCM with almost zero approximation error. All nearest neighbor methods DEC-NN-BCM, DEC-NN-rBCM, and DEC-NN-grBCM make identical predictions to BCM, rBCM, and grBCM, although a subset of agents are selected to participate in the prediction.

\begin{figure}[!t]
	\includegraphics[width=.7\columnwidth]{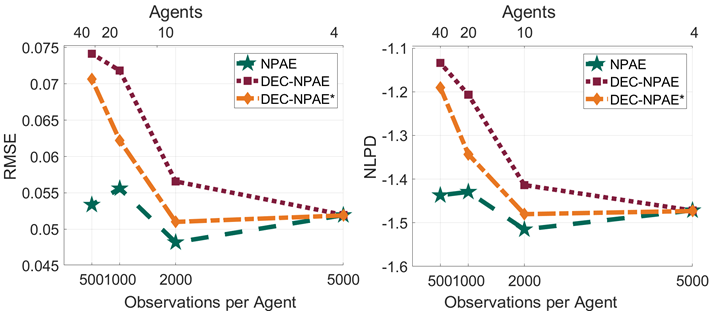}
	\centering
	\caption{Average RMSE and NLPD values for four fleet sizes and 15 replications with the NPAE-based methods on a strongly complete topology.
	}
	\label{fig:RMSE_NLPD_NPAE_family}
\end{figure}

The average RMSE and NLPD values for four fleet sizes and 15 replications using the decentralized NPAE-based methods are presented in Figure~\ref{fig:RMSE_NLPD_NPAE_family},~\ref{fig:RMSE_NLPD_dist_NPAE}. The difference between Figure~\ref{fig:RMSE_NLPD_NPAE_family} and Figure~\ref{fig:RMSE_NLPD_dist_NPAE} is that the latter demonstrates methods in a strongly connected network topology (path graph), while the former methods in a strongly complete network topology (see Figure~\ref{fig:graph_mixed} for differences). Since the proposed algorithms DEC-NPAE, DEC-NPAE$^{\star}$, and DEC-NN-NPAE approximate the NPAE, the optimal RMSE and NLPD values are that of NPAE~\citep{rulliere2018nested}. The main difference of DEC-NPAE and DEC-NPAE$^{\star}$ lies in the estimation of the optimal relaxation factor rather than selecting the relaxation factor based on the fleet size (Remark~\ref{rem:jor_jor_star}). All DEC-NPAE, DEC-NPAE$^{\star}$, and DEC-NN-NPAE produce an approximation error, and thus they do not converge to the optimal RMSE and NLPD values of NPAE. More specifically, DEC-NPAE$^{\star}$ has the smallest approximation error (Figure~\ref{fig:RMSE_NLPD_NPAE_family}), while DEC-NN-NPAE reports high approximation error (Figure~\ref{fig:RMSE_NLPD_dist_NPAE}). Since $s^{\textrm{end}}_{\omega^{\star}}+s^{\textrm{end}}_{\textrm{JOR}^{\star}}<s^{\textrm{end}}_{\textrm{JOR}}$, the DEC-NPAE$^{\star}$ converges faster than DEC-NPAE. This advocates that the proposed scheme to estimate the optimal relaxation factor before implementing the JOR, is more efficient both in speed and accuracy. Thus, the proposed method outperforms other approaches that employ the JOR~ \citep{cortes2009distributed,choi2015distributed,choudhary2017distributed}.

\begin{figure}[!t]
	\includegraphics[width=.7\columnwidth]{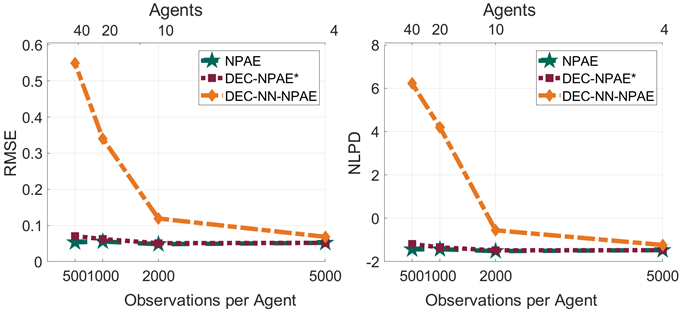}
	\centering
	\caption{Average RMSE and NLPD values for four fleet sizes and 15 replications with the NPAE-based methods on a path graph topology.
	}
	\label{fig:RMSE_NLPD_dist_NPAE}
\end{figure}

In Table~\ref{tab:gp_prediction_sim}, we compare the average computation time for each agent and the communication rounds of all nearest neighbor methods on a path graph network topology. In addition, we compute the average number of nearest neighbors from 15 replications that participate in the prediction $M_{\textrm{NN}}$ for all $N_{\textrm{t}} = 100$ prediction points of each fleet size. The results reveal a 42.5\% agent reduction with no approximation error for DEC-NN-PoE, DEC-NN-gPoE, DEC-NN-BCM, DEC-NN-rBCM, and DEC-NN-grBCM (Figure~\ref{fig:RMSE_NLPD_PoE_family},~\ref{fig:RMSE_NLPD_BCM_family}); and significant approximation error for DEC-NN-NPAE (Figure~\ref{fig:RMSE_NLPD_dist_NPAE}). The computation time per agent and the communication rounds are similar for all methods other than the DEC-NN-NPAE. Thus, DEC-NN-NPAE is insufficient in accuracy, computation time per agent, and communications, while all other decentralized nearest neighbor methods report optimal RMSE and NLPD values,  scalable computation time, and require little information exchange.

\begin{table}[!t]
\caption{Decentralized CBNN Aggregation Methods }
\centering
{\begin{tabular}{ c l r r r}
\toprule

 \multirow{2}{*}{$M$} & \multirow{2}{*}{Method}  & Nearest & Time per & Comms \\ 
 
  &  & Neighbors $M_{\textrm{NN}}$ & Agent [s] & $s^{\textrm{end}}$ \\ 
\midrule

 \multirow{6}{*}{4} & DEC-NN-PoE  &  \multirow{6}{*}{2.3$\pm$0.1}  & 0.0312 & 3.6  \\
 &  DEC-NN-gPoE  &  & 0.0339 & 3.6\\
 &  DEC-NN-BCM &  & 0.0323 & 3.6\\
 &  DEC-NN-rBCM &  & 0.0469  & 3.6\\
 &  DEC-NN-grBCM & & 0.0516 & 3.6\\
 &  DEC-NN-NPAE &   & 1.1683  & 69.4 \\

  \midrule 
 \multirow{6}{*}{10} & DEC-NN-PoE  & \multirow{6}{*}{5.7$\pm$0.2}  & 0.0122    & 7.2\\
 &  DEC-NN-gPoE  &   & 0.0121   & 7.2\\
 &  DEC-NN-BCM &  & 0.0126  & 7.2\\
 &  DEC-NN-rBCM &  & 0.0172  & 7.2\\
 &  DEC-NN-grBCM &  & 0.0167 & 7.2\\
 &  DEC-NN-NPAE &   & 0.4844  & 247.2\\
 
  \midrule 
 \multirow{6}{*}{20} & DEC-NN-PoE  &  \multirow{6}{*}{11.3$\pm$0.4}  &    0.0087 & 6.8 \\
 &  DEC-NN-gPoE  &   & 0.0083  & 6.8 \\
 &  DEC-NN-BCM &   &  0.0086 & 6.8 \\
 &  DEC-NN-rBCM &  & 0.0126  & 7.0 \\
 &  DEC-NN-grBCM & & 0.0124 & 7.0 \\
 &  DEC-NN-NPAE &   &  0.2698 & 625.6 \\
 
  \midrule 
 \multirow{6}{*}{40} & DEC-NN-PoE  &  \multirow{6}{*}{23.6$\pm$1.1}  &  0.0052   & 7.4 \\
 &  DEC-NN-gPoE  &   &  0.0049  & 7.4 \\
 &  DEC-NN-BCM &   &  0.0051 & 7.4 \\
 &  DEC-NN-rBCM &  &  0.0073 & 7.4 \\
 &  DEC-NN-grBCM &  & 0.0071 & 7.4\\
 &  DEC-NN-NPAE &   &  2.7009 & 1,824.1\\

\bottomrule
\end{tabular}
    }\label{tab:gp_prediction_sim}
\end{table}

\begin{figure*}[!t]
	\includegraphics[width=\textwidth]{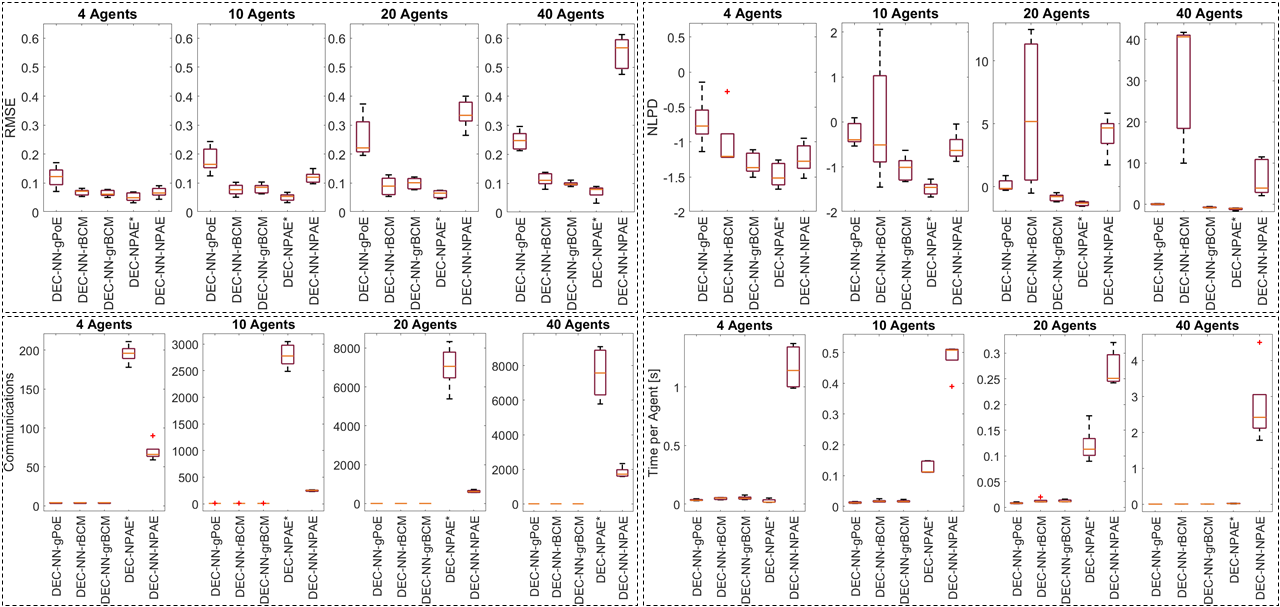}
	\centering
	\caption{Comparison of accuracy, uncertainty quantification, communication rounds $s^{\textrm{end}}$, and computation time per agent for four fleet sizes and 15 replications on decentralized GP predictions at $N_{\textrm{t}}=100$ unknown locations using $N=20,000$ observations in a path graph network topology. Lower RMSE and NLPD values indicate better accuracy and better uncertainty quantification respectively. The comparison includes the five best  decentralized GP prediction methods out of the 13 proposed methods.
	}
	\label{fig:boxplots_prediction_20000}
\end{figure*}

In Figure~\ref{fig:boxplots_prediction_20000}, we compare five out of the 13 proposed methods that produce accurate predictions and properly quantify the uncertainty. The comparison includes the DEC-NN-gPoE, DEC-NN-rBCM, DEC-NN-grBCM, DEC-NPAE$^{\star}$, and DEC-NN-NPAE for all fleet sizes in a path graph topology. Accuracy is evaluated with RMSE, uncertainty quantification with NLPD, communication complexity with communication rounds, and scalability with computation time per agent. In terms of accuracy all methods perform well by producing low RMSE values for small fleet sizes. However, as we increase the the munber of agents only DEC-NN-rBCM, DEC-NN-grBCM, and DEC-NPAE$^{\star}$ recover good accuracy with the later being the most accurate. Similarly for uncertainty quantification, all methods quantify satisfactorily the uncertainty by reporting low NLPD values for small fleet size. Yet, as the fleet sizes increases only DEC-NN-gPoE, DEC-NN-grBCM, and DEC-NPAE$^{\star}$ maintain good level of uncertainty quantification with DEC-NPAE$^{\star}$ being the best. Inter-agent communication favors DEC-NN-gPoE, DEC-NN-rBCM, and DEC-NN-grBCM for all fleet sizes. Notably the most accurate method both in terms of RMSE and NLPD (DEC-NPAE$^{\star}$) requires signification information exchange to converge. In terms of scalability, DEC-NN-gPoE, DEC-NN-rBCM, and DEC-NN-grBCM are executed very fast, while DEC-NPAE$^{\star}$ requires reasonable computations. A qualitative assessment of the comparison in Figure~\ref{fig:boxplots_prediction_20000} for all four aspects is presented on Table~\ref{tab:quality_assess}. The results reveal that DEC-NN-grBCM is overall the best decentralized GP prediction method; DEC-NPAE$^{\star}$ is an accurate method, quantifies the uncertianty well, and entails reasonable computations, but requires signification communication; and DEC-NN-rBCM is accurate, scalable, requires little information exchange, but quantifies the uncertainty poorly.

\begin{table}[!t]
\caption{Qualitative Assessment of Decentralized GP Methods}
\centering
{\begin{tabular}{ l c c c c}
\toprule

 \multirow{2}{*}{Method} & RMSE & NLPD & \multirow{2}{*}{Comms} & \multirow{2}{*}{Scalable} \\ 
 
  & Accuracy & UQ &  &  \\ 
\midrule

 DEC-NN-gPoE  & Moderate  & Moderate & Excellent & Excellent  \\
 DEC-NN-rBCM  & Excellent & Bad  & Excellent & Excellent \\
 DEC-NN-grBCM  & Excellent & Excellent & Excellent & Excellent \\
 DEC-NPAE$^{\star}$  & Excellent & Excellent & Bad & Moderate \\
 DEC-NN-NPAE & Bad & Moderate & Moderate & Bad \\

\bottomrule
\end{tabular}
    }\label{tab:quality_assess}
\end{table}
\section{Conclusion and Future Work}\label{sec:conclusion}
This paper proposes methods to implement GPs in decentralized networks that cover a broad spectrum of multi-agent learning applications. The proposed methods can be employed both for decentralized GP training and decentralized GP prediction on various fleet sizes with different computation and communication capabilities of local agents. We use distributed optimization methods of ADMM to perform accurate and scalable GP training in networks. More specifically, a closed-form solution of the decentralize ADMM is derived for the case of GP hyper-parameter training with maximum likelihood estimation. DEC-apx-GP is shown to achieve competitive accuracy in hyper-parameter estimates for small and medium fleet sizes, while DEC-gapx-GP produces accurate hyper-parameter estimates for all fleet sizes with reasonable computations of local entities. Additionally, we propose a centralized GP training method, the gapx-GP, that improves the accuracy of hyper-parameter estimates for medium and large fleet sizes, entails reasonable computations, and requires little information exchange. Next, we use iterative and consensus protocols to decentralize the implementation of various aggregation of GP experts methods. We propose 13 techniques that can be used in various applications depending on the fleet size, the computational resources of local agents, and the communication capabilities. Moreover, we introduce a nearest neighbor selection method, namely CBNN, that excludes agents with no statistical correlation from the GP prediction. Although the CBNN achieves on average 42.5\% agent reduction, it does not sacrifice prediction accuracy, and leads to significant computation and communication reduction. Most of the proposed decentralized GP prediction methods converge to the optimal values without reporting approximation error for all fleet sizes. The decentralized NPAE-based methods converge with approximation error, yet for DEC-NPAE$^{\star}$ the error is insignificant. DEC-NPAE$^{\star}$ and DEC-NN-grBCM are the most competitive methods for all fleet sizes both in terms of accuracy and uncertainty quantification, yet DEC-NN-grBCM is also scalable with low communication~overhead. 
\newpage
\appendix
\section{Gradients}\label{app:grad}
\subsection{Partial derivative of covariance matrix}\label{app:derivative_SE_covariance}
The partial derivative of the covariance matrix in \eqref{eq:grad_mle} is computed with respect to each hyperparameter as $\partial \boldsymbol{C}_{\theta}/ \partial \boldsymbol{\theta} = (\partial \boldsymbol{C}_{\theta}/ \partial {l_1},\partial \boldsymbol{C}_{\theta}/ \partial {l_2},\hdots,\partial \boldsymbol{C}_{\theta}/ \partial {l_D},\partial \boldsymbol{C}_{\theta}/ \partial {\sigma_f}, \partial \boldsymbol{C}_{\theta}/ \partial {\sigma}_{\epsilon} )^{\intercal}\in \mathbb{R}^{(D+2)N\times N} $. In particular, for each length-scale $l_d$ we obtain,
\begin{align*}
    \left[\frac{\partial \boldsymbol{C}_{\theta}}{\partial {l_d}} \right]_{ij} &=  \sigma_f^2\left[\exp\left\{-\sum_{d=1}^D\frac{({x}_{di} - {x}_{dj} )^2}{l_d^2} \right\}\frac{2(x_{di} -x_{dj} )^2}{l_d^3}\right]_{ij}\\
    &=\frac{2[\boldsymbol{K}]_{ij}\left[(x_{di} -x_{dj} )^2\right]_{ij}}{l_d^3},
\end{align*}
where $\partial \boldsymbol{C}_{\theta}/ \partial {l_d} \in \mathbb{R}^{N\times N}$. For the signal variance we get,
\begin{align*}
    \left[\frac{\partial \boldsymbol{C}_{\theta}}{\partial {\sigma_f}} \right]_{ij} &=  2\sigma_f \left[\exp\left\{-\sum_{d=1}^D\frac{({x}_{di} - {x}_{dj} )^2}{l_d^2} \right\} \right]_{ij}\\
    &=\frac{2[\boldsymbol{K}]_{ij}}{\sigma_f},
\end{align*}
where $\partial \boldsymbol{C}_{\theta}/ \partial {\sigma_f} \in \mathbb{R}^{N\times N}$. Note that we express the partial derivatives as functions of the correlation matrix $\boldsymbol{K}$, because it has already been computed to construct the covariance matrix, i.e., $\boldsymbol{C}_{\theta} = \boldsymbol{K} +\sigma_{\epsilon}^2I_N$. Lastly, for the 
noise variance ${\partial \boldsymbol{C}_{\theta}}/{\partial {\sigma_{\epsilon}}}  =  2\sigma_{\epsilon}I_N\in \mathbb{R}^{N\times N} $.
\subsection{Gradient for nested problem of DEC-c-GP}\label{app:grad_dec_c_admm}
Let the objective for the nested optimization problem \eqref{eq:dec_c_admm_theta} of the DEC-c-GP to be $\mathcal{K}=\mathcal{L}_i(\boldsymbol{\theta}_i) + \boldsymbol{\theta}_i^{\intercal}\boldsymbol{p}_i^{(s+1)} + \rho \sum_{j\in \mathcal{N}_i} \|\boldsymbol{\theta}_i - ({\boldsymbol{\theta}_i^{(s)}+\boldsymbol{\theta}_j^{(s)}})/{2} \|_2^2$, then its gradient yields,
\begin{align*}
    \frac{\partial \mathcal{K}}{\partial \boldsymbol{\theta}_i}  = \nabla_{\boldsymbol{\theta}_i}\mathcal{L}_i(\boldsymbol{\theta}_i) + \boldsymbol{p}_i^{(s+1)}+2\rho \sum_{j\in \mathcal{N}_i} \boldsymbol{\theta}_i - \frac{\boldsymbol{\theta}_i^{(s)}+\boldsymbol{\theta}_j^{(s)}}{2} .
\end{align*}
Note that $\nabla_{\boldsymbol{\theta}_i}\mathcal{L}_i$ can be computed as in Appendix~\ref{app:derivative_SE_covariance}.
\section{Proofs}
\subsection{Proof of Proposition~\ref{prop:poe_gpoe_mean}}\label{app:proof:poe_gpoe_mean}
The prediction mean value of any agent $i$ using PoE yields,
\begin{align}\nonumber
    \mu_{\textrm{PoE}}( \boldsymbol{x}_*) &= \sigma_{\textrm{PoE}}^2(\boldsymbol{x}_*)\sum_{i=1}^M \beta_i \sigma_i^{-2}(\boldsymbol{x}_*)\mu_i(\boldsymbol{x}_*)\\ \nonumber
    &= \left( \sum_{i=1}^M \beta_i \sigma_i^{-2}(\boldsymbol{x}_*) \right)^{-1}\sum_{i=1}^M \beta_i \sigma_i^{-2}(\boldsymbol{x}_*)\mu_i(\boldsymbol{x}_*)\\ \label{eq:mean_poe_olny}
    &= \sum_{i=1}^M  \sigma_i^{2}(\boldsymbol{x}_*) \sum_{i=1}^M  \sigma_i^{-2}(\boldsymbol{x}_*)\mu_i(\boldsymbol{x}_*).
\end{align}
The prediction mean of the $i$-th agent using gPoE yields,
\begin{align}\nonumber
    \mu_{\textrm{gPoE}}( \boldsymbol{x}_*) &= \sigma_{\textrm{gPoE}}^2(\boldsymbol{x}_*)\sum_{i=1}^M \beta_i \sigma_i^{-2}(\boldsymbol{x}_*)\mu_i(\boldsymbol{x}_*)\\ \nonumber
    &= \left( \sum_{i=1}^M \beta_i \sigma_i^{-2}(\boldsymbol{x}_*) \right)^{-1}\sum_{i=1}^M \beta_i \sigma_i^{-2}(\boldsymbol{x}_*)\mu_i(\boldsymbol{x}_*)\\ \nonumber
    &= \left( \sum_{i=1}^M  \frac{1}{M}\sigma_i^{-2}(\boldsymbol{x}_*)\right)^{-1} \sum_{i=1}^M \frac{1}{M} \sigma_i^{-2}(\boldsymbol{x}_*)\mu_i(\boldsymbol{x}_*)\\ \nonumber
    &=\left(\frac{1}{M}\right)^{-1}\frac{1}{M}\sum_{i=1}^M  \sigma_i^{2}(\boldsymbol{x}_*)\sum_{i=1}^M  \sigma_i^{-2}(\boldsymbol{x}_*)\mu_i(\boldsymbol{x}_*)\\ \label{eq:mean_gpoe_olny}
    &=\sum_{i=1}^M  \sigma_i^{2}(\boldsymbol{x}_*) \sum_{i=1}^M  \sigma_i^{-2}(\boldsymbol{x}_*)\mu_i(\boldsymbol{x}_*).
\end{align}
Hence, from \eqref{eq:mean_poe_olny}, \eqref{eq:mean_gpoe_olny} $\mu_{\textrm{PoE}}( \boldsymbol{x}_*) = \mu_{\textrm{gPoE}}( \boldsymbol{x}_*) $ for all $i\in \mathcal{V}$.
\subsection{Proof of Theorem~\ref{thrm:dec_apx_admm_gp}}\label{app:proof:thrm:dec_apx_admm_gp}
Let us employ the local objective of \eqref{eq:dec-pxadmm_theta} as,
\begin{align*}\nonumber
    \mathcal{Q}_i(\boldsymbol{\theta}_i) = & \ \nabla_{\boldsymbol{\theta}}^{\intercal}\mathcal{L}_i\left(\boldsymbol{\theta}_i^{(s)}\right)\left(\boldsymbol{\theta}_i - \boldsymbol{\theta}_i^{(s)}\right) + \frac{\kappa_i}{2}\left\| \boldsymbol{\theta}_i - \boldsymbol{\theta}_i^{(s)} \right\|_2^2  + \boldsymbol{\theta}_i^{\intercal}\boldsymbol{p}_i^{(s+1)} + \rho \sum_{j\in \mathcal{N}_i} \left\|\boldsymbol{\theta}_i - \frac{\boldsymbol{\theta}_i^{(s)}+\boldsymbol{\theta}_j^{(s)}}{2} \right\|_2^2,
\end{align*}
where $\mathcal{Q}_i:\mathbb{R}^{D+2}\rightarrow \mathbb{R}$. Next, factor out the optimizing parameter $\boldsymbol{\theta}_i$ to obtain,
\begin{align}\nonumber
    \mathcal{Q}_i(\boldsymbol{\theta}_i) = & \, \nabla_{\boldsymbol{\theta}}^{\intercal}\mathcal{L}_i\left(\boldsymbol{\theta}_i^{(s)}\right)\boldsymbol{\theta}_i- c_1 + \frac{\kappa_i}{2}\bigg( \boldsymbol{\theta}_i^{\intercal}\boldsymbol{\theta}_i -2  \boldsymbol{\theta}_i^{\intercal} \boldsymbol{\theta}_i^{(s)} + c_2 \bigg)  + \boldsymbol{\theta}_i^{\intercal}\boldsymbol{p}_i^{(s+1)} + \mathcal{T}_i \\ \label{eq:prfThrm_Q_i}
= &\,  \boldsymbol{\theta}_i^{\intercal} \left(\nabla_{\boldsymbol{\theta}}\mathcal{L}_i\left(\boldsymbol{\theta}_i^{(s)}\right) - \kappa_i\boldsymbol{\theta}_i^{(s)}+\boldsymbol{p}_i^{(s+1)} \right)  +\frac{\kappa_i}{2} \boldsymbol{\theta}_i^{\intercal}\boldsymbol{\theta}_i + \mathcal{T}_i ,
\end{align}
where $\mathcal{T}_i = \rho \sum_{j\in \mathcal{N}_i} \|\boldsymbol{\theta}_i - ({\boldsymbol{\theta}_i^{(s)}+\boldsymbol{\theta}_j^{(s)}})/{2} \|_2^2 $, $c_1 =-\nabla_{\boldsymbol{\theta}}^{\intercal}\mathcal{L}_i(\boldsymbol{\theta}_i^{(s)})\boldsymbol{\theta}_i^{(s)} $, and $c_2 =\boldsymbol{\theta}_i^{\intercal(s)} \boldsymbol{\theta}_i^{(s)} $. Note that $c_1$, $c_2$ are constants with respect to the optimizing parameter $\boldsymbol{\theta}_i$ and thus irrelevant to the problem. For any strongly connected graph topology, the term $\mathcal{T}_i$ can be expressed as,
\begin{align}\nonumber
    \mathcal{T}_i &=\rho \sum_{j\in \mathcal{N}_i} \left \|\boldsymbol{\theta}_i - \frac{\boldsymbol{\theta}_i^{(s)}+\boldsymbol{\theta}_j^{(s)}}{2} \right\|_2^2\\ \nonumber
    &= \rho \sum_{j\in \mathcal{N}_i} \boldsymbol{\theta}_i^{\intercal}\boldsymbol{\theta}_i -\boldsymbol{\theta}_i^{\intercal}\left(\boldsymbol{\theta}_i^{(s)}+\boldsymbol{\theta}_j^{(s)}\right) +c_3  \\ \nonumber
    &= \rho \mathrm{card}(\mathcal{N}_i) \boldsymbol{\theta}_i^{\intercal}\boldsymbol{\theta}_i -\rho \sum_{j\in \mathcal{N}_i} \boldsymbol{\theta}_i^{\intercal}\boldsymbol{\theta}_i^{(s)}+\boldsymbol{\theta}_i^{\intercal}\boldsymbol{\theta}_j^{(s)} \\ \nonumber
    & =  \rho \mathrm{card}(\mathcal{N}_i) \boldsymbol{\theta}_i^{\intercal}\boldsymbol{\theta}_i -\rho  \mathrm{card}(\mathcal{N}_i) \boldsymbol{\theta}_i^{\intercal}\boldsymbol{\theta}_i^{(s)} -\rho \boldsymbol{\theta}_i^{\intercal} \sum_{j\in \mathcal{N}_i}\boldsymbol{\theta}_j^{(s)}  \\ \label{eq:prfThrm_T_i}
    &=\mathrm{card}(\mathcal{N}_i)\rho\boldsymbol{\theta}_i^{\intercal}\boldsymbol{\theta}_i - \rho\boldsymbol{\theta}_i^{\intercal} \left( \mathrm{card}(\mathcal{N}_i)\boldsymbol{\theta}_i^{(s)}+\sum_{j\in \mathcal{N}_i}\boldsymbol{\theta}_j^{(s)}  \right),
\end{align}
where $c_3 = (1/4)(\boldsymbol{\theta}_i^{(s)}+\boldsymbol{\theta}_j^{(s)})^{\intercal}(\boldsymbol{\theta}_i^{(s)}+\boldsymbol{\theta}_j^{(s)})$ is a constant for the optimizing parameter $\boldsymbol{\theta}_i$ and thus ignored. 
The local objective $\mathcal{Q}_i$ by combining \eqref{eq:prfThrm_Q_i}, \eqref{eq:prfThrm_T_i} results in,
\begin{align}\nonumber
    \mathcal{Q}_i(\boldsymbol{\theta}_i) = &\,  \boldsymbol{\theta}_i^{\intercal} \left(\nabla_{\boldsymbol{\theta}}\mathcal{L}_i\left(\boldsymbol{\theta}_i^{(s)}\right) - \kappa_i\boldsymbol{\theta}_i^{(s)}+\boldsymbol{p}_i^{(s+1)} \right)  +\frac{\kappa_i}{2} \boldsymbol{\theta}_i^{\intercal}\boldsymbol{\theta}_i + \mathrm{card}(\mathcal{N}_i)\rho\boldsymbol{\theta}_i^{\intercal}\boldsymbol{\theta}_i \\ \nonumber
    & - \rho\boldsymbol{\theta}_i^{\intercal} \left( \mathrm{card}(\mathcal{N}_i)\boldsymbol{\theta}_i^{(s)}+\sum_{j\in \mathcal{N}_i}\boldsymbol{\theta}_j^{(s)}  \right) \\ \nonumber
    = &\,\boldsymbol{\theta}_i^{\intercal} \left( \nabla_{\boldsymbol{\theta}}\mathcal{L}_i\left(\boldsymbol{\theta}_i^{(s)}\right) - \bigg(\kappa_i+\mathrm{card}(\mathcal{N}_i)\rho\bigg)\boldsymbol{\theta}_i^{(s)} + \boldsymbol{p}_i^{(s+1)} -\rho\sum_{j\in \mathcal{N}_i}\boldsymbol{\theta}_j^{(s)}  \right) \\ \label{eq:prfThrm_Qi}
    &+ \bigg(\frac{\kappa_i}{2}+\mathrm{card}(\mathcal{N}_i)\rho\bigg) \boldsymbol{\theta}_i^{\intercal}\boldsymbol{\theta}_i.
\end{align}
Next, we show that the local objective $\mathcal{Q}_i$ \eqref{eq:prfThrm_Qi} is a convex function in a quadratic form \citep{boyd2004convex} by computing its Hessian,
\begin{align*}
    \mathcal{H}_{\mathcal{Q}_i} &= \frac{\partial ^2 \mathcal{Q}_i}{\partial\boldsymbol{\theta}_i^2} \\
    &= \left(\kappa_i +2\mathrm{card}(\mathcal{N}_i)\rho\right)I_{D+2} \succ 0.
\end{align*}
Since the local objective $\mathcal{Q}_i$ is convex and quadratic, we can obtain a closed-form solution by computing the first derivative,
\begin{align*}\nonumber
\frac{\partial \mathcal{Q}}{\partial \boldsymbol{\theta}_i} &=\nabla_{\boldsymbol{\theta}}\mathcal{L}_i\left(\boldsymbol{\theta}_i^{(s)}\right) - \bigg(\kappa_i+\mathrm{card}(\mathcal{N}_i)\rho\bigg)\boldsymbol{\theta}_i^{(s)} + \boldsymbol{p}_i^{(s+1)}   -\rho\sum_{j\in \mathcal{N}_i}\boldsymbol{\theta}_j^{(s)}+  2\bigg(\frac{\kappa_i}{2}+\mathrm{card}(\mathcal{N}_i)\rho\bigg)\boldsymbol{\theta}_i,
\end{align*}
and then set $\partial \mathcal{Q}/\partial \boldsymbol{\theta}_i = \boldsymbol{0}$, which yields,
\begin{align*}\nonumber
  \boldsymbol{\theta}_i &=  \frac{1}{\kappa_i+2\mathrm{card}(\mathcal{N}_i)\rho } \left( \rho \sum_{j \in \mathcal{N}_i}\boldsymbol{\theta}_j^{(s)} -\nabla_{\boldsymbol{\theta}}\mathcal{L}_i\left(\boldsymbol{\theta}_i^{(s)}\right) + \left(\kappa_i+\mathrm{card}(\mathcal{N}_i)\rho \right)\boldsymbol{\theta}_i^{(s)} - \boldsymbol{p}_i^{(s+1)}\right).  
\end{align*}
The rest proof is a direct consequence of \citep[Theorem 1]{chang2014multi}.
\subsection{Proof of Lemma~\ref{lem:cbnn}}\label{app:proof:cbnn}
First, we show that the separable squared exponential kernel \eqref{eq:seKernel} is a monotonically decreasing function. Let any two observations $\boldsymbol{x}_1$,  $\boldsymbol{x}_2$ and a location of interest $\boldsymbol{x}_*$ with $\|\boldsymbol{x}_*-\boldsymbol{x}_1\|_2^2>\|\boldsymbol{x}_*-\boldsymbol{x}_2\|_2^2 $. The covariance function of the first observations takes the form of,
\begin{align}\nonumber
    k(\boldsymbol{x}_1,\boldsymbol{x}_*) &= \sigma_f^2 \exp\left\{-\sum_{d=1}^D\frac{(x_{*,d} - x_{1,d})^2}{l_d^2}\right\}\\ \label{eq:prf:lm:cov_1_sep}
    & = \sigma_f^2 \exp \left\{ -\left(\frac{(x_{*,1} - x_{1,1})^2}{l_1^2}  + \frac{(x_{*,2} - x_{1,2})^2}{l_2^2}\right) \right\}.
\end{align}
Since the agents collect data in stripes along $y$-axis and the network topology is a path graph, the length-scales of the covariance function \eqref{eq:seKernel} are the same $l_1=l_2=l$. Note that this applies only for the CBNN selection subroutine. Thus, the covariance function of the first observation \eqref{eq:prf:lm:cov_1_sep} yields,
\begin{align*}\nonumber
    k(\boldsymbol{x}_1,\boldsymbol{x}_*) &= \sigma_f^2 \exp \left\{  - \left( \frac{(x_{*,1} - x_{1,1})^2}{l^2}  + \frac{(x_{*,2} - x_{1,2})^2}{l^2}\right) \right\}\\ 
    & = \sigma_f^2 \exp \left\{ -\frac{\|\boldsymbol{x}_{*} - \boldsymbol{x}_{1}\|^2_2}{l^2}  \right\}.
\end{align*}
Similarly, the covariance function of the second observation becomes,
\begin{align*}
    k(\boldsymbol{x}_2,\boldsymbol{x}_*) =  \sigma_f^2 \exp \left\{ -\frac{\|\boldsymbol{x}_{*} - \boldsymbol{x}_{2}\|^2_2}{l^2}  \right\}.
\end{align*}
Provided that if $\|\boldsymbol{x}_*-\boldsymbol{x}_1\|_2^2>\|\boldsymbol{x}_*-\boldsymbol{x}_2\|_2^2 $ then $k(\boldsymbol{x}_1,\boldsymbol{x}_*) \leq k(\boldsymbol{x}_2,\boldsymbol{x}_*)$ for all $\boldsymbol{x}_*, \boldsymbol{x}_1, \boldsymbol{x}_2 \in \mathbb{R}^2$, the covariance function $k(\cdot,\cdot)$ used in CBNN calculation \eqref{eq:cbnn_cross_covar} is monotonically decreasing with the rate of decrease depending on the signal variance $\sigma_{{f}}^2$ and the length-scale $l$.

The CBNN is described by a sub-graph $\mathcal{G}_{\textrm{NN}} = (\mathcal{V}_{\textrm{NN}}, \mathcal{{E}_{\textrm{NN}}}(t))$, where $\mathcal{V}_{\textrm{NN}} = \{i \in \mathcal{V} : \|\boldsymbol{x}_i - \boldsymbol{x}_* \|_2 \leq r_{\textrm{NN}} \}$ for all $\boldsymbol{x}_*$ with $r_{\textrm{NN}}$ the nearest neighbor radius and $\mathcal{{E}_{\textrm{NN}}}(t) \subseteq \mathcal{{V}_{\textrm{NN}}}\times \mathcal{{V}_{\textrm{NN}}}$. In the current application setup for the CBNN selection, the separable squared exponential kernel \eqref{eq:seKernel} behaves as a squared exponential kernel with $l_1 = l_2=l$. This radial decay creates a circle with radius $r_{\textrm{NN}}$ around the location of interest $\boldsymbol{x}_*$ for the selection of nearest neighbors. Note that the radius is a function of the cross-covariance and the user-defined threshold $r_{\textrm{NN}}([\boldsymbol{k}_{\mu,*}]_i, \eta_{\textrm{NN}})$, thus inherits a covariance-based dependence. Next, due to fact that the path graph topology (Figure~\ref{fig:graph_mixed}-(a)) includes only connected agents in the circular space, the connectivity in the nearest neighbor graph $\mathcal{G}_{\textrm{NN}}$ is preserved. Hence, the CBNN selection maintains strong connectivity.

\vskip 0.2in
\normalem

\bibliographystyle{unsrtnat}
\bibliography{references}  

\begin{thebibliography}{72}
\providecommand{\natexlab}[1]{#1}
\providecommand{\url}[1]{\texttt{#1}}
\expandafter\ifx\csname urlstyle\endcsname\relax
  \providecommand{\doi}[1]{doi: #1}\else
  \providecommand{\doi}{doi: \begingroup \urlstyle{rm}\Url}\fi

\bibitem[Kone{\v{c}}n{\`y} et~al.(2016)Kone{\v{c}}n{\`y}, McMahan, Yu,
  Richt{\'a}rik, Suresh, and Bacon]{konevcny2016federated}
Jakub Kone{\v{c}}n{\`y}, H~Brendan McMahan, Felix~X Yu, Peter Richt{\'a}rik,
  Ananda~Theertha Suresh, and Dave Bacon.
\newblock Federated learning: Strategies for improving communication
  efficiency.
\newblock \emph{arXiv preprint arXiv:1610.05492}, 2016.

\bibitem[Horvitz and Mulligan(2015)]{horvitz2015data}
Eric Horvitz and Deirdre Mulligan.
\newblock Data, privacy, and the greater good.
\newblock \emph{Science}, 349\penalty0 (6245):\penalty0 253--255, 2015.

\bibitem[Rasmussen and Williams(2006)]{rasmussen2006gaussian}
Carl~Edward Rasmussen and Christopher~KI Williams.
\newblock \emph{Gaussian Processes for Machine Learning}.
\newblock Cambridge, MA, USA: MIT Press, 2 edition, 2006.

\bibitem[Gramacy(2020)]{gramacy2020surrogates}
Robert~B. Gramacy.
\newblock \emph{Surrogates: {G}aussian Process Modeling, Design and
  Optimization for the Applied Sciences}.
\newblock Chapman Hall/CRC, Boca Raton, Florida, 2020.
\newblock \url{http://bobby.gramacy.com/surrogates/}.

\bibitem[Cressie(1993)]{cressie1992statistics}
Noel Cressie.
\newblock \emph{Statistics for spatial data}.
\newblock Wiley, 2 edition, 1993.

\bibitem[Singh et~al.(2009)Singh, Krause, Guestrin, and
  Kaiser]{singh2009efficient}
Amarjeet Singh, Andreas Krause, Carlos Guestrin, and William~J Kaiser.
\newblock Efficient informative sensing using multiple robots.
\newblock \emph{Journal of Artificial Intelligence Research}, 34:\penalty0
  707--755, 2009.

\bibitem[Xu et~al.(2011)Xu, Choi, and Oh]{xu2011mobile}
Yunfei Xu, Jongeun Choi, and Songhwai Oh.
\newblock Mobile sensor network navigation using {G}aussian processes with
  truncated observations.
\newblock \emph{IEEE Transactions on Robotics}, 27\penalty0 (6):\penalty0
  1118--1131, 2011.

\bibitem[Gu and Hu(2012)]{gu2012spatial}
Dongbing Gu and Huosheng Hu.
\newblock Spatial {G}aussian process regression with mobile sensor networks.
\newblock \emph{IEEE Transactions on Neural Networks and Learning Systems},
  23\penalty0 (8):\penalty0 1279--1290, 2012.

\bibitem[Chen et~al.(2015)Chen, Low, Yao, and Jaillet]{chen2015gaussian}
Jie Chen, Kian~Hsiang Low, Yujian Yao, and Patrick Jaillet.
\newblock Gaussian process decentralized data fusion and active sensing for
  spatiotemporal traffic modeling and prediction in mobility-on-demand systems.
\newblock \emph{IEEE Transactions on Automation Science and Engineering},
  12\penalty0 (3):\penalty0 901--921, 2015.

\bibitem[Choi et~al.(2015)Choi, Jadaliha, Choi, and Oh]{choi2015distributed}
Sungjoon Choi, Mahdi Jadaliha, Jongeun Choi, and Songhwai Oh.
\newblock Distributed {G}aussian process regression under localization
  uncertainty.
\newblock \emph{Journal of Dynamic Systems, Measurement, and Control},
  137\penalty0 (3), 2015.

\bibitem[Allamraju and Chowdhary(2017)]{allamraju2017communication}
Rakshit Allamraju and Girish Chowdhary.
\newblock Communication efficient decentralized {G}aussian process fusion for
  multi-{UAS} path planning.
\newblock In \emph{American Control Conference}, pages 4442--4447, 2017.

\bibitem[Pillonetto et~al.(2018)Pillonetto, Schenato, and
  Varagnolo]{pillonetto2018distributed}
Gianluigi Pillonetto, Luca Schenato, and Damiano Varagnolo.
\newblock Distributed multi-agent {G}aussian regression via finite-dimensional
  approximations.
\newblock \emph{IEEE Transactions on Pattern Analysis and Machine
  Intelligence}, 41\penalty0 (9):\penalty0 2098--2111, 2018.

\bibitem[Zhi et~al.(2019)Zhi, Ott, Senanayake, and Ramos]{zhi2019continuous}
Weiming Zhi, Lionel Ott, Ransalu Senanayake, and Fabio Ramos.
\newblock Continuous occupancy map fusion with fast {B}ayesian {H}ilbert maps.
\newblock In \emph{IEEE International Conference on Robotics and Automation},
  pages 4111--4117, 2019.

\bibitem[Hoang et~al.(2019)Hoang, Hoang, Low, and How]{hoang2019collective}
Trong~Nghia Hoang, Quang~Minh Hoang, Kian~Hsiang Low, and Jonathan How.
\newblock Collective online learning of {G}aussian processes in massive
  multi-agent systems.
\newblock In \emph{AAAI Conference on Artificial Intelligence}, volume~33,
  pages 7850--7857, 2019.

\bibitem[Tavassolipour et~al.(2020)Tavassolipour, Motahari, and
  Shalmani]{tavassolipour2019learning}
Mostafa Tavassolipour, Seyed~Abolfazl Motahari, and Mohammad Taghi~Manzuri
  Shalmani.
\newblock Learning of {G}aussian processes in distributed and communication
  limited systems.
\newblock \emph{IEEE Transactions on Pattern Analysis and Machine
  Intelligence}, 42\penalty0 (8):\penalty0 1928--1941, 2020.

\bibitem[Yuan and Zhu(2020)]{yuan2020communication}
Zhenyuan Yuan and Minghui Zhu.
\newblock Communication-aware distributed {G}aussian process regression
  algorithms for real-time machine learning.
\newblock In \emph{American Control Conference}, pages 2197--2202, 2020.

\bibitem[Jang et~al.(2020)Jang, Yoo, Son, Kim, and Kim]{jang2020multi}
Dohyun Jang, Jaehyun Yoo, Clark~Youngdong Son, Dabin Kim, and H~Jin Kim.
\newblock Multi-robot active sensing and environmental model learning with
  distributed {G}aussian process.
\newblock \emph{IEEE Robotics and Automation Letters}, 5\penalty0 (4):\penalty0
  5905--5912, 2020.

\bibitem[Lin et~al.(2020)Lin, Al-Abri, Coogan, and Zhang]{lin2020distributed}
Tony~X Lin, Said Al-Abri, Samuel Coogan, and Fumin Zhang.
\newblock A distributed scalar field mapping strategy for mobile robots.
\newblock In \emph{IEEE/RSJ International Conference on Intelligent Robots and
  Systems}, pages 11581--11586, 2020.

\bibitem[Kepler and Stilwell(2020)]{kepler2020approach}
Michael~E Kepler and Daniel~J Stilwell.
\newblock An approach to reduce communication for multi-agent mapping
  applications.
\newblock In \emph{IEEE/RSJ International Conference on Intelligent Robots and
  Systems}, pages 4814--4820, 2020.

\bibitem[Kontoudis and Stilwell(2021{\natexlab{a}})]{kontoudis2021prediction}
George~P Kontoudis and Daniel~J Stilwell.
\newblock Prediction of acoustic communication performance in marine robots
  using model-based kriging.
\newblock In \emph{American Control Conference}, pages 3779--3786,
  2021{\natexlab{a}}.

\bibitem[Kontoudis et~al.(2021)Kontoudis, Krauss, and
  Stilwell]{Kontoudis2021RAS}
George~P Kontoudis, Stephen Krauss, and Daniel~J Stilwell.
\newblock Model-based learning of underwater acoustic communication performance
  for marine robots.
\newblock \emph{Robotics and Autonomous Systems}, 142:\penalty0 103811, 2021.

\bibitem[Suryan and Tokekar(2020)]{suryan2020learning}
Varun Suryan and Pratap Tokekar.
\newblock Learning a spatial field in minimum time with a team of robots.
\newblock \emph{IEEE Transactions on Robotics}, 36\penalty0 (5):\penalty0
  1562--1576, 2020.

\bibitem[Kontoudis(2022)]{kontoudis2021communication}
George~P Kontoudis.
\newblock \emph{Communication-Aware, Scalable Gaussian Processes for
  Decentralized Exploration}.
\newblock PhD thesis, Virginia Tech, 2022.

\bibitem[Boyd et~al.(2011)Boyd, Parikh, Chu, Peleato, and
  Eckstein]{boyd2011admm}
Stephen Boyd, Neal Parikh, Eric Chu, Borja Peleato, and Jonathan Eckstein.
\newblock \emph{Distributed Optimization and Statistical Learning via the
  Alternating Direction Method of Multipliers}, volume~3.
\newblock 2011.

\bibitem[Liu et~al.(2020)Liu, Ong, Shen, and Cai]{liu2020gaussian}
Haitao Liu, Yew-Soon Ong, Xiaobo Shen, and Jianfei Cai.
\newblock When {G}aussian process meets big data: A review of scalable {GP}s.
\newblock \emph{IEEE Transactions on Neural Networks and Learning Systems},
  2020.

\bibitem[Bertsekas and Tsitsiklis(2003)]{bertsekas2003parallel}
Dimitri~P Bertsekas and John~N Tsitsiklis.
\newblock \emph{Parallel and distributed computation: Numerical methods}.
\newblock Athena Scientific, 2003.

\bibitem[Olfati-Saber et~al.(2007)Olfati-Saber, Fax, and
  Murray]{olfati2007consensus}
Reza Olfati-Saber, J~Alex Fax, and Richard~M Murray.
\newblock Consensus and cooperation in networked multi-agent systems.
\newblock \emph{Proceedings of the IEEE}, 95\penalty0 (1):\penalty0 215--233,
  2007.

\bibitem[Wang et~al.(2016)Wang, Mou, and Sun]{wang2016improvement}
Xuan Wang, Shaoshuai Mou, and Dengfeng Sun.
\newblock Improvement of a distributed algorithm for solving linear equations.
\newblock \emph{IEEE Transactions on Industrial Electronics}, 64\penalty0
  (4):\penalty0 3113--3117, 2016.

\bibitem[Kontoudis and
  Stilwell(2021{\natexlab{b}})]{kontoudis2021decentralized}
George~P Kontoudis and Daniel~J Stilwell.
\newblock Decentralized nested {G}aussian processes for multi-robot systems.
\newblock In \emph{IEEE International Conference on Robotics and Automation},
  pages 8881--8887, 2021{\natexlab{b}}.

\bibitem[Bullo et~al.(2009)Bullo, Cortes, and Martinez]{bullo2009distributed}
Francesco Bullo, Jorge Cortes, and Sonia Martinez.
\newblock \emph{Distributed control of robotic networks: A mathematical
  approach to motion coordination algorithms}, volume~27.
\newblock Princeton University Press, 2009.

\bibitem[Kontoudis and Stilwell(2019)]{kontoudis2019comparison}
George~P Kontoudis and Daniel~J Stilwell.
\newblock A comparison of kriging and cokriging for estimation of underwater
  acoustic communication performance.
\newblock In \emph{International Conference on Underwater Networks \& Systems},
  pages 1--8, 2019.

\bibitem[Deisenroth and Ng(2015)]{deisenroth2015distributed}
Marc Deisenroth and Jun~Wei Ng.
\newblock Distributed {G}aussian processes.
\newblock In \emph{International Conference on Machine Learning}, pages
  1481--1490, 2015.

\bibitem[Xu et~al.(2019)Xu, Yin, Xu, Lin, and Cui]{xu2019wireless}
Yue Xu, Feng Yin, Wenjun Xu, Jiaru Lin, and Shuguang Cui.
\newblock Wireless traffic prediction with scalable {G}aussian process:
  Framework, algorithms, and verification.
\newblock \emph{IEEE Journal on Selected Areas in Communications}, 37\penalty0
  (6):\penalty0 1291--1306, 2019.

\bibitem[Halsted et~al.(2021)Halsted, Shorinwa, Yu, and
  Schwager]{halsted2021survey}
Trevor Halsted, Ola Shorinwa, Javier Yu, and Mac Schwager.
\newblock A survey of distributed optimization methods for multi-robot systems.
\newblock \emph{arXiv preprint arXiv:2103.12840}, 2021.

\bibitem[Xie et~al.(2019)Xie, Yin, Xu, Ai, Chen, and Cui]{xie2019distributed}
Ang Xie, Feng Yin, Yue Xu, Bo~Ai, Tianshi Chen, and Shuguang Cui.
\newblock Distributed {G}aussian processes hyperparameter optimization for big
  data using proximal {ADMM}.
\newblock \emph{IEEE Signal Processing Letters}, 26\penalty0 (8):\penalty0
  1197--1201, 2019.

\bibitem[Hong et~al.(2016)Hong, Luo, and Razaviyayn]{hong2016convergence}
Mingyi Hong, Zhi-Quan Luo, and Meisam Razaviyayn.
\newblock Convergence analysis of alternating direction method of multipliers
  for a family of nonconvex problems.
\newblock \emph{SIAM Journal on Optimization}, 26\penalty0 (1):\penalty0
  337--364, 2016.

\bibitem[Qui{\~n}onero-Candela and Rasmussen(2005)]{quinonero2005unifying}
Joaquin Qui{\~n}onero-Candela and Carl~Edward Rasmussen.
\newblock A unifying view of sparse approximate {G}aussian process regression.
\newblock \emph{Journal of Machine Learning Research}, 6\penalty0
  (Dec):\penalty0 1939--1959, 2005.

\bibitem[Snelson and Ghahramani(2006)]{snelson2006sparse}
Edward Snelson and Zoubin Ghahramani.
\newblock Sparse {G}aussian processes using pseudo-inputs.
\newblock In \emph{Advances in Neural Information Processing Systems}, pages
  1257--1264, 2006.

\bibitem[Hensman et~al.(2013)Hensman, Fusi, and Lawrence]{hensman2013gaussian}
James Hensman, Nicolo Fusi, and Neil~D Lawrence.
\newblock Gaussian processes for big data.
\newblock In \emph{Conference on Uncertainty in Artificial Intelligence}, page
  282, 2013.

\bibitem[Tresp(2000)]{tresp2000bayesian}
Volker Tresp.
\newblock A {B}ayesian committee machine.
\newblock \emph{Neural computation}, 12\penalty0 (11):\penalty0 2719--2741,
  2000.

\bibitem[Hinton(2002)]{hinton2002training}
Geoffrey~E Hinton.
\newblock Training products of experts by minimizing contrastive divergence.
\newblock \emph{Neural computation}, 14\penalty0 (8):\penalty0 1771--1800,
  2002.

\bibitem[Cao and Fleet(2014)]{cao2014generalized}
Yanshuai Cao and David~J Fleet.
\newblock Generalized product of experts for automatic and principled fusion of
  {G}aussian process predictions.
\newblock \emph{arXiv preprint arXiv:1410.7827}, 2014.

\bibitem[Bachoc et~al.(2017)Bachoc, Durrande, Rulli{\`e}re, and
  Chevalier]{bachoc2017some}
Fran{\c{c}}ois Bachoc, Nicolas Durrande, Didier Rulli{\`e}re, and Cl{\'e}ment
  Chevalier.
\newblock Some properties of nested {K}riging predictors.
\newblock \emph{arXiv preprint arXiv:1707.05708}, 2017.

\bibitem[Rulli{\`e}re et~al.(2018)Rulli{\`e}re, Durrande, Bachoc, and
  Chevalier]{rulliere2018nested}
Didier Rulli{\`e}re, Nicolas Durrande, Fran{\c{c}}ois Bachoc, and Cl{\'e}ment
  Chevalier.
\newblock Nested {K}riging predictions for datasets with a large number of
  observations.
\newblock \emph{Statistics and Computing}, 28\penalty0 (4):\penalty0 849--867,
  2018.

\bibitem[Liu et~al.(2018{\natexlab{a}})Liu, Cai, Wang, and
  Ong]{liu2018generalized}
Haitao Liu, Jianfei Cai, Yi~Wang, and Yew~Soon Ong.
\newblock Generalized robust {B}ayesian committee machine for large-scale
  {G}aussian process regression.
\newblock In \emph{International Conference on Machine Learning}, pages
  3131--3140, 2018{\natexlab{a}}.

\bibitem[Cort{\'e}s(2009)]{cortes2009distributed}
Jorge Cort{\'e}s.
\newblock Distributed {K}riged {K}alman filter for spatial estimation.
\newblock \emph{IEEE Transactions on Automatic Control}, 54\penalty0
  (12):\penalty0 2816--2827, 2009.

\bibitem[Topkis(1985)]{topkis1985concurrent}
Donald~M. Topkis.
\newblock Concurrent broadcast for information dissemination.
\newblock \emph{IEEE Transactions on Software Engineering}, SE-11\penalty0
  (10):\penalty0 1107--1112, 1985.

\bibitem[Yuan and Zhu(2021)]{yuan2021resource}
Zhenyuan Yuan and Minghui Zhu.
\newblock Resource-aware distributed {G}aussian process regression for
  real-time machine learning.
\newblock \emph{arXiv preprint arXiv:2105.04738}, 2021.

\bibitem[Datta et~al.(2016)Datta, Banerjee, Finley, and
  Gelfand]{datta2016hierarchical}
Abhirup Datta, Sudipto Banerjee, Andrew~O Finley, and Alan~E Gelfand.
\newblock Hierarchical nearest-neighbor {G}aussian process models for large
  geostatistical datasets.
\newblock \emph{Journal of the American Statistical Association}, 111\penalty0
  (514):\penalty0 800--812, 2016.

\bibitem[Chang et~al.(2014)Chang, Hong, and Wang]{chang2014multi}
Tsung-Hui Chang, Mingyi Hong, and Xiangfeng Wang.
\newblock Multi-agent distributed optimization via inexact consensus {ADMM}.
\newblock \emph{IEEE Transactions on Signal Processing}, 63\penalty0
  (2):\penalty0 482--497, 2014.

\bibitem[Udwadia(1992)]{udwadia1992some}
Firdaus~E Udwadia.
\newblock Some convergence results related to the {JOR} iterative method for
  symmetric, positive-definite matrices.
\newblock \emph{Applied Mathematics and Computation}, 47\penalty0 (1):\penalty0
  37--45, 1992.

\bibitem[Golub and Van~Loan(2013)]{golub2013matrix}
Gene~H Golub and Charles~F Van~Loan.
\newblock \emph{Matrix computations}.
\newblock Johns Hopkins University Press, 4 edition, 2013.

\bibitem[Liu et~al.(2018{\natexlab{b}})Liu, Mou, and
  Morse]{liu2017asynchronous}
Ji~Liu, Shaoshuai Mou, and A~Stephen Morse.
\newblock Asynchronous distributed algorithms for solving linear algebraic
  equations.
\newblock \emph{IEEE Transactions on Automatic Control}, 63\penalty0
  (2):\penalty0 372--385, 2018{\natexlab{b}}.

\bibitem[Byrd et~al.(1995)Byrd, Lu, Nocedal, and Zhu]{byrd1995limited}
Richard~H Byrd, Peihuang Lu, Jorge Nocedal, and Ciyou Zhu.
\newblock A limited memory algorithm for bound constrained optimization.
\newblock \emph{SIAM Journal on Scientific Computing}, 16\penalty0
  (5):\penalty0 1190--1208, 1995.

\bibitem[Chen and Wang(2018)]{chen2018priors}
Zexun Chen and Bo~Wang.
\newblock How priors of initial hyperparameters affect {G}aussian process
  regression models.
\newblock \emph{Neurocomputing}, 275:\penalty0 1702--1710, 2018.

\bibitem[Ng and Deisenroth(2014)]{ng2014hierarchical}
Jun~Wei Ng and Marc~Peter Deisenroth.
\newblock Hierarchical mixture-of-experts model for large-scale gaussian
  process regression.
\newblock \emph{arXiv preprint arXiv:1412.3078}, 2014.

\bibitem[Shi et~al.(2014)Shi, Ling, Yuan, Wu, and Yin]{shi2014linear}
Wei Shi, Qing Ling, Kun Yuan, Gang Wu, and Wotao Yin.
\newblock On the linear convergence of the {ADMM} in decentralized consensus
  optimization.
\newblock \emph{IEEE Transactions on Signal Processing}, 62\penalty0
  (7):\penalty0 1750--1761, 2014.

\bibitem[Makhdoumi and Ozdaglar(2017)]{makhdoumi2017convergence}
Ali Makhdoumi and Asuman Ozdaglar.
\newblock Convergence rate of distributed {ADMM} over networks.
\newblock \emph{IEEE Transactions on Automatic Control}, 62\penalty0
  (10):\penalty0 5082--5095, 2017.

\bibitem[Elgabli et~al.(2020)Elgabli, Park, Bedi, Bennis, and
  Aggarwal]{elgabli2020gadmm}
Anis Elgabli, Jihong Park, Amrit~S Bedi, Mehdi Bennis, and Vaneet Aggarwal.
\newblock Gadmm: Fast and communication efficient framework for distributed
  machine learning.
\newblock \emph{Journal of Machine Learning Research}, 21\penalty0
  (76):\penalty0 1--39, 2020.

\bibitem[Mateos et~al.(2010)Mateos, Bazerque, and
  Giannakis]{mateos2010distributed}
Gonzalo Mateos, Juan~Andr{\'e}s Bazerque, and Georgios~B Giannakis.
\newblock Distributed sparse linear regression.
\newblock \emph{IEEE Transactions on Signal Processing}, 58\penalty0
  (10):\penalty0 5262--5276, 2010.

\bibitem[Mackay(1998)]{mackay1998introduction}
David J~C Mackay.
\newblock Introduction to {G}aussian processes.
\newblock \emph{NATO ASI series. Series F: Computer and System Sciences}, pages
  133--165, 1998.

\bibitem[P{\'e}rez-Cruz et~al.(2013)P{\'e}rez-Cruz, Van~Vaerenbergh,
  Murillo-Fuentes, L{\'a}zaro-Gredilla, and Santamaria]{perez2013gaussian}
Fernando P{\'e}rez-Cruz, Steven Van~Vaerenbergh, Juan~Jos{\'e} Murillo-Fuentes,
  Miguel L{\'a}zaro-Gredilla, and Ignacio Santamaria.
\newblock Gaussian processes for nonlinear signal processing: An overview of
  recent advances.
\newblock \emph{IEEE Signal Processing Magazine}, 30\penalty0 (4):\penalty0
  40--50, 2013.

\bibitem[Basak et~al.(2021)Basak, Petit, Bect, and Vazquez]{basak2021numerical}
Subhasish Basak, S{\'e}bastien Petit, Julien Bect, and Emmanuel Vazquez.
\newblock Numerical issues in maximum likelihood parameter estimation for
  {G}aussian process interpolation.
\newblock In \emph{7th International Conference on Machine Learning,
  Optimization and Data science (LOD 2021)}, 2021.

\bibitem[Yadav and Salapaka(2007)]{yadav2007distributed}
Vikas Yadav and Murti~V Salapaka.
\newblock Distributed protocol for determining when averaging consensus is
  reached.
\newblock In \emph{Allerton Conference on Communication, Control, and
  Computing}, pages 715--720, 2007.

\bibitem[Olshevsky and Tsitsiklis(2009)]{olshevsky2009convergence}
Alex Olshevsky and John~N Tsitsiklis.
\newblock Convergence speed in distributed consensus and averaging.
\newblock \emph{SIAM Journal on Control and Optimization}, 48\penalty0
  (1):\penalty0 33--55, 2009.

\bibitem[Choudhary et~al.(2017)Choudhary, Carlone, Nieto, Rogers, Christensen,
  and Dellaert]{choudhary2017distributed}
Siddharth Choudhary, Luca Carlone, Carlos Nieto, John Rogers, Henrik~I
  Christensen, and Frank Dellaert.
\newblock Distributed mapping with privacy and communication constraints:
  Lightweight algorithms and object-based models.
\newblock \emph{International Journal of Robotics Research}, 36\penalty0
  (12):\penalty0 1286--1311, 2017.

\bibitem[Guo et~al.(2015)Guo, Xie, and Qiu]{guo2015jor}
Jun Guo, Yan-Zhao Xie, and Ai-Ci Qiu.
\newblock {JOR iterative method for the modeling of MTLs excited by EMP}.
\newblock \emph{IEEE Antennas and Wireless Propagation Letters}, 15:\penalty0
  536--539, 2015.

\bibitem[{JPL MUR MEaSUREs Project}(2015)]{MUR4nasa}
{JPL MUR MEaSUREs Project}.
\newblock {GHRSST Level 4 MUR Global Foundation Sea Surface Temperature
  Analysis}.
\newblock Ver. 4.1. PO.DAAC, CA, USA;
  \url{https://doi.org/10.5067/GHGMR-4FJ04}, 2015.
\newblock Online; accessed March, 2021.

\bibitem[Chin et~al.(2017)Chin, Vazquez-Cuervo, and Armstrong]{chin2017multi}
Toshio~Michael Chin, Jorge Vazquez-Cuervo, and Edward~M Armstrong.
\newblock A multi-scale high-resolution analysis of global sea surface
  temperature.
\newblock \emph{Remote sensing of environment}, 200:\penalty0 154--169, 2017.

\bibitem[Rasmussen and Nickisch(2010)]{rasmussen2010gaussian}
Carl~Edward Rasmussen and Hannes Nickisch.
\newblock Gaussian processes for machine learning ({GPML}) toolbox.
\newblock \emph{Journal of Machine Learning Research}, 11:\penalty0 3011--3015,
  2010.

\bibitem[Quinonero-Candela et~al.(2005)Quinonero-Candela, Rasmussen, Sinz,
  Bousquet, and Sch{\"o}lkopf]{quinonero2005evaluating}
Joaquin Quinonero-Candela, Carl~Edward Rasmussen, Fabian Sinz, Olivier
  Bousquet, and Bernhard Sch{\"o}lkopf.
\newblock Evaluating predictive uncertainty challenge.
\newblock In \emph{Machine Learning Challenges Workshop}, pages 1--27, 2005.

\bibitem[Boyd et~al.(2004)Boyd, Boyd, and Vandenberghe]{boyd2004convex}
Stephen Boyd, Stephen~P Boyd, and Lieven Vandenberghe.
\newblock \emph{Convex optimization}.
\newblock Cambridge University Press, 2004.

\end{thebibliography}






\end{document}